\newtheorem{definition}{Definition}
\newtheorem{assumption}{Assumption}
\newtheorem{lem}{Lemma}
\newtheorem{corr}{Corollary}
\newtheorem{theorem}{Theorem}
\DeclareMathOperator*{\argmax}{argmax}
\DeclareMathOperator*{\argmin}{argmin}
\begin{document}

%

%

\twocolumn[

\runningtitle{Nash Equilibria and Pitfalls of Adversarial Training in Adversarial Robustness Games}
\aistatstitle{Nash Equilibria and Pitfalls of Adversarial Training\\ in Adversarial Robustness Games}

\aistatsauthor{ Maria-Florina Balcan  \And  Rattana Pukdee \And  Pradeep Ravikumar \And Hongyang Zhang}
\aistatsaddress{ Carnegie Mellon University   \And  Carnegie Mellon University \And Carnegie Mellon University \And University of Waterloo }
 ]

\begin{abstract}

   Adversarial training is a standard technique for training adversarially robust models.  In this paper, we study adversarial training as an alternating best-response strategy in a 2-player zero-sum game. We prove that even in a simple scenario of a linear classifier and a statistical model that abstracts robust vs. non-robust features, the alternating best response strategy of such game may not converge. On the other hand, a unique pure Nash equilibrium of the game exists and is provably robust. We support our theoretical results with experiments, showing the non-convergence of adversarial training and the robustness of Nash equilibrium.
  
\end{abstract}

\section{INTRODUCTION}

Deep neural networks have been widely applied to various tasks~\citep{lecun2015deep,goodfellow2016deep}. However, these models are vulnerable to human-imperceptible perturbations, which may lead to significant performance drop \citep{goodfellow2014explaining,szegedy2013intriguing}. A large body of works has improved the robustness of neural networks against such perturbations. For example, Adversarial Training \citep{madry2018towards} (AT) is a notable technique that trains a robust model by two alternative steps: 1) finding adversarial examples of training data against the current model; 2) updating the model to correctly classify the adversarial examples and returning to step 1). This procedure has a strong connection with an alternating best-response strategy in a 2-player zero-sum game. In particular, we consider a game between an adversary (row player) and a defender (column player). At each time $t$, a row player outputs a perturbation function that maps each data point to a perturbation, and a column player selects a model. Given a loss function, the utility of the row player is the expected loss of the model on the  perturbed data and the utility of the column player is the negative expected loss. Therefore, steps 1) and 2) correspond to both players' actions that maximize their utility against the latest action of the opponents.

In this work, we show that for the adversarial robustness game, even in a simple setting, the alternating best-response strategy may not converge. We consider a general symmetric independent distribution beyond the symmetric Gaussian distribution which was typically assumed in the prior works \citep{tsipras2018robustness,ilyas2019adversarial}. We call this game the Symmetric Linear Adversarial Robustness (SLAR) game. The challenge is that SLAR is not a convex-concave game, and so those known results on convex-concave zero-sum games do not apply in our setting. One of our key contributions is to analyze the dynamics of adversarial training in the SLAR game which sheds light on the behavior of adversarial training in general. On the other hand, we prove the existence of a pure Nash equilibrium and show that any Nash equilibrium provably leads to a robust classifier, i.e., a classifier that puts zero weight on the non-robust features. The Nash equilibrium is unique where any two Nash equilibria select the same classifier. Our finding motivates us to train a model that achieves a Nash equilibrium. 

For linear models, there is a closed-form solution of adversarial examples for each data point \citep{bose2020adversarial, tsipras2018robustness}. Different from the alternating best-response strategy, we also study the procedure of substituting the closed-form adversarial examples into the inner maximization problem and reducing the problem to a standard minimization objective. We refer to this procedure as Optimal Adversarial Training (OAT). \citep{tsipras2018robustness} has shown that OAT leads to a robust classifier under symmetric Gaussian distributions. We extend their results by showing that the same conclusion also holds for the SLAR game. We support our theoretical results with experiments, demonstrating that standard adversarial training does not converge while a Nash equilibrium is robust.

\section{RELATED WORK}
\subsection{Adversarial Robustness}\

Variants of adversarial training methods have been proposed to improve adversarial robustness of neural networks \citep{zhang2019theoretically, shafahi2019adversarial, rice2020overfitting, wong2019fast, xie2019feature, qin2019adversarial}. Recent works utilize extra unlabeled data \citep{carmon2019unlabeled,zhai2019adversarially,deng2021improving,rebuffi2021data} or synthetic data from a generative model \citep{gowal2021improving, sehwag2021robust} to improve the robust accuracy. Another line of works consider ensembling techniques \citep{tramer2018ensemble,sen2019empir,pang2019improving,zhang2022building}. A line of theoretical works analyzed adversarial robustness by linear models, from the trade-off between robustness and accuracy \citep{tsipras2018robustness,javanmard2020precise, raghunathan2020understanding}, to the generalization property \citep{schmidt2018adversarially}. Recent works further analyze a more complex class of models such as 2-layer neural networks \citep{allen2022feature,bubeck2021law,bartlett2021adversarial,bubeck2021universal}. \\
 
Specifically, prior works considered adversarial robustness as a 2-player zero-sum game \citep{pal2020game, meunier2021mixed,bose2020adversarial,bulo2016randomized,perdomo2019robust, pinot2020randomization}. For instance, \citet{pal2020game} proved that randomized smoothing \citep{cohen2019certified} and FGSM attack \citep{goodfellow2014explaining} form Nash equilibria. \citet{bose2020adversarial} introduced a framework to find adversarial examples that transfer to an unseen model in the same hypothesis class. \textcolor{black}{
\citet{pinot2020randomization} shows the non-existence of a Nash equilibrium in the adversarial robustness game when the classifier and the Adversary are both deterministic. However, our settings are different from prior papers. The key assumption in previous work \citep{pinot2020randomization} is that an adversary is regularized and would not attack if the adversarial example does not change the model prediction. We consider the case where the adversary attacks even though the adversarial example does not change the model prediction (as in the standard adversarial training).
}

While most works focused on the existence of Nash equilibrium and proposed algorithms that converge to the equilibrium, to the best of our knowledge, no prior works showed that \textit{a Nash equilibrium is robust}.

\subsection{Dynamics in Games}
The dynamics of a 2-player zero-sum game has been well-studied, especially when each player takes an alternating best-response strategy in the finite action space. A classical question is whether players' actions will converge to an equilibrium as the two players alternatively play a game \citep{nash1950equilibrium,nash1951non}. It is known that the alternating best-response strategy converges to a Nash equilibrium for many types of games, such as potential games \citep{monderer1996potential}, weakly acyclic games \citep{fabrikant2010structure}, aggregative games \citep{dindovs2006better}, super modular games \citep{milgrom1990rationalizability}, and random games \citep{heinrich2021best, amiet2021pure}. However, this general phenomenon may not apply to adversarial robustness games since this natural learning algorithm may not converge even in simple games \citep{balcan2012weighted}, as these results rely on specific properties of those games. In addition, there are also works on different strategies such as fictitious play \citep{brown1951iterative, robinson1951iterative, monderer1996fictitious, benaim1999mixed} and its extension to an infinite action space \citep{oechssler2001evolutionary, perkins2014stochastic} or continuous time space \citep{hopkins1999note,hofbauer2006best}. Furthermore, there is a connection between a 2-player zero-sum game with online learning where it is possible to show that an average payoff of a player with a sub-linear regret algorithm (such as  follow the regularized leader or follow the perturbed leader) converges to a Nash equilibrium \citep{cesa2006prediction,syrgkanis2015fast,suggala2020online}.
However, \citet{mertikopoulos2018cycles} studied the dynamics of such no-regret algorithms and showed that when both players play the follow-the-regularized-leader algorithms, the actions of each player do not converge to a Nash equilibrium with a cycling behavior in the game. 

\section{SETUP}\label{section: setup}
We consider a binary classification problem where we want to learn a linear function $f: \mathbb{R}^d \to \mathbb{R}$, such that our prediction is given by $\operatorname{sign}(f(x))$. Let $\mathcal{D}$ be the underlying distribution of $(x,y)$ and let $x = [x_1, \dots, x_d]$. We assume that the distribution of each feature $x_i$ has a symmetrical mean and is independent of the others given the label.
\begin{assumption}
(Symmetrical mean)
\label{assum: data distribution}
The mean of each feature $x_i$ is symmetrical over class $y = -1,1$. That is
    \begin{equation*}
        \mathbb{E}[x_i|y] = y\mu_i,
    \end{equation*}
    where $\mu_i$ is a constant.
\end{assumption}
\begin{assumption}
(Independent features given the label)
\label{assum: independent features}
Each feature $x_i$ is independent of each other given the label.
\end{assumption}
We study this problem on a soft-SVM objective

\begin{equation}
    \label{eq: svm objective}
    \min_w \mathcal{L}(w),
\end{equation}
where
\begin{equation*}
     \mathcal{L}(w) = \mathbb{E}_{(x,y) \sim \mathcal{D}}[\max(0, 1 - yw^{\top}x)] + \frac{\lambda}{2}||w||^2_2,
\end{equation*}
and $\lambda$ is the regularization parameter. Assume that at the test time, we have an adversarial perturbation function $\delta: \mathcal{D} \to \mathcal{B}(\varepsilon)$, $\mathcal{B}(\varepsilon) = \{a: ||a||_{\infty} \leq \varepsilon \}$, that adds a perturbation $\delta(x,y)$ to a feature of each point $(x,y)$. Our goal is to learn a function $f$ that makes correct predictions on the perturbed data points. We denote $\varepsilon$ as the perturbation budget. 

 For a given perturbation budget $\varepsilon$, we divide all features $x_i$'s into robust features and non-robust features.
\begin{definition}[Non-robust feature]
\label{def: non-robust feature}
A feature $x_i$ is non-robust when the perturbation budget is larger than or equal to the mean of that feature
 \begin{equation*}
     |\mu_i| \leq \varepsilon.
 \end{equation*}
 Otherwise, $x_i$ is a robust feature.
 \end{definition}
 We discuss non-robust features in more details in Appendix \ref{appendix: non-robust feature}.

\subsection{Symmetric Linear Adversarial Robustness Game}\label{section: SLAR game}
\begin{figure*}[ht]
    \centering
    \includegraphics[width =1\textwidth]{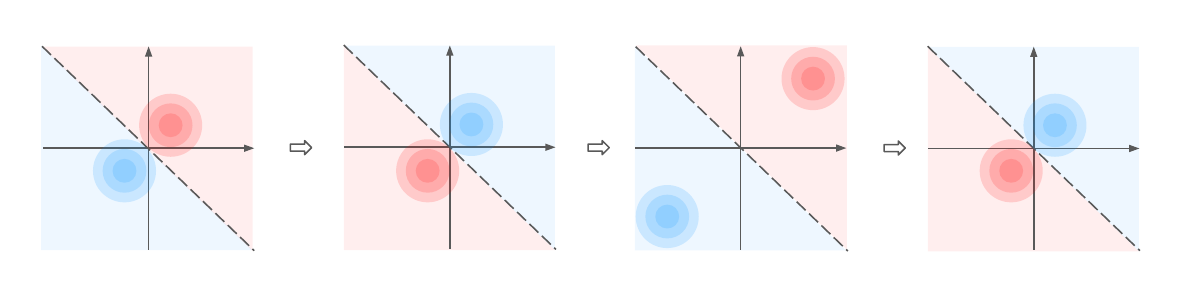}
    
    \caption{The space of two non-robust features, where the red and blue circles are for class $y = -1, 1$, respectively. Dashed lines represent a decision boundary of each model and background colors represent their prediction. The figure on the left is the original distribution. The adversary can always shift the mean of non-robust features across the decision boundary. A model trained with adversarial examples has to flip the decision boundary at every iteration. For example, the adversary shifts the red and blue circle across the original decision boundary leading to a decision boundary flip for a model trained on the perturbed data (second figure from the left). }

    \label{fig: flip flop pic}
\end{figure*}
We formulate the problem of learning a robust function $f$ as a 2-player zero-sum game between an adversary (row player) and a defender (column player). The game is played repeatedly where at each time $t$, the row player outputs a perturbation function $\delta^{(t)} : \mathcal{D} \to \mathcal{B}(\varepsilon)$ that maps each data point in $D$ to a perturbation while the column player outputs a linear function $f^{(t)} = (w^{(t)})^\top x$. The utility of the row player is given by
\begin{equation*}
    U_{\operatorname{row}}(\delta^{(t)},w^{(t)}) := 
    \mathbb{E}_{(x,y) \sim \mathcal{D}}[l(\delta^{(t)},w^{(t)},x,y)] + \frac{\lambda}{2}||w^{(t)}||^2_2,
\end{equation*}
where
\begin{equation*}
    l(\delta, w,x,y) = \max(0, 1 - yw^\top (x + \delta(x,y))).
\end{equation*}
The goal of the row player is to find a perturbation function that maximizes the expected loss of the perturbed data given a model from the column player. The utility of the column player is the negative expected loss:
\begin{equation*}
    U_{\operatorname{col}}(\delta^{(t)},w^{(t)}) = -U_{\operatorname{row}}(\delta^{(t)},w^{(t)}),
\end{equation*}
where the column player wants to output a model that minimizes the expected loss given the perturbed data.

\subsection{Adversarial Training as An Alternating Best-Response Strategy}
Recall that in AT \citep{madry2018towards}, we first find the ``adversarial examples'', i.e., the perturbed data points that maximize the loss. We then optimize our model according to the given adversarial examples. In the game-theoretic setting, AT is an alternating best-response strategy:
\begin{enumerate}
    \item The row player submits a perturbation function that maximizes the utility from the last iteration:
    \begin{equation*}
    \delta^{(t)} = \argmax_{\delta: \mathcal{D} \to \mathcal{B}(\varepsilon)} U_{\operatorname{row}}(\delta,w^{(t-1)}).
    \end{equation*}
    \item The column player chooses a model that maximizes the utility given the perturbation  $\delta^{(t)}$:
    \begin{equation*}
    w^{(t)} = \argmax_{w \in \mathbb{R}^d} U_{\operatorname{col}}(\delta^{(t)}, w).
    \end{equation*}
\end{enumerate}
In practice, we achieve an approximation of the $w^{(t)}$ via stochastic gradient descent and an approximation of each instance $\delta^{(t)}(x,y)$ by projected gradient descent \citep{madry2018towards}.

\section{NON-CONVERGENCE OF ADVERSARIAL TRAINING}

In this section, we start with the dynamics of AT on a SLAR game. We then provide an example of a class of data distributions on which AT does not converge. A key property of such distributions is that it has a large fraction of non-robust features.

It is known that we have a closed form solution for the worst-case adversarial perturbation w.r.t. a linear model \citep{bose2020adversarial, tsipras2018robustness}.
\begin{lem}
\label{possible perturb}
For a fixed $w$, for any $(x,y) \sim D$, the perturbation $\delta(x,y) = -y\varepsilon \operatorname{sign}(w)$ maximizes the inner optimization objective
\begin{equation*}
    \max_{\delta \in \mathcal{B}(\varepsilon)}\max(0, 1 - yw^{\top}(x + \delta)),
\end{equation*}
where
\begin{equation*}
    \operatorname{sign}(x)=\left\{\begin{array}{cl}
1, & \text{if } x>0; \\
0, &  \text{if } x=0; \\
-1, &  \text{if } x<0.
\end{array}\right.
\end{equation*}
When $x$ is a vector, $\operatorname{sign}(x)$ is applied to each dimension. We denote this as the worst-case perturbation.
\end{lem}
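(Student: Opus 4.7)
The plan is to exploit the monotonicity of the hinge function together with the coordinate-wise separability of the $\ell_\infty$ ball. Since $t \mapsto \max(0,t)$ is non-decreasing, any $\delta \in \mathcal{B}(\varepsilon)$ that maximizes $1 - yw^\top(x+\delta)$ also maximizes $\max(0, 1 - yw^\top(x+\delta))$. Dropping the terms $1 - yw^\top x$ that do not depend on $\delta$, the inner problem reduces to
\begin{equation*}
\min_{\delta \in \mathcal{B}(\varepsilon)} \; yw^\top \delta.
\end{equation*}

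Next, I would use that $\mathcal{B}(\varepsilon) = [-\varepsilon,\varepsilon]^d$ is a product set, so the minimization decomposes coordinate-by-coordinate into $d$ independent one-dimensional problems $\min_{|\delta_i| \le \varepsilon} (y w_i)\delta_i$. Each scalar problem is solved by pushing $\delta_i$ to the endpoint opposite in sign to $yw_i$, i.e., $\delta_i = -\varepsilon \operatorname{sign}(yw_i)$. Using $y \in \{-1,+1\}$ (so $y^2 = 1$ and $\operatorname{sign}(yw_i) = y \operatorname{sign}(w_i)$) gives $\delta_i = -y\varepsilon\operatorname{sign}(w_i)$. Stacking coordinates yields $\delta(x,y) = -y\varepsilon\operatorname{sign}(w)$, as claimed.

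I do not anticipate a real technical obstacle; the only small subtlety is how to treat coordinates where $w_i = 0$. There the one-dimensional objective $(yw_i)\delta_i$ is identically zero on $[-\varepsilon,\varepsilon]$, so every $\delta_i$ attains the minimum. The convention $\operatorname{sign}(0)=0$ given in the statement then picks $\delta_i = 0$, which is a valid (but not unique) maximizer; thus the lemma is best read as asserting that $\delta(x,y) = -y\varepsilon\operatorname{sign}(w)$ is \emph{a} worst-case perturbation. With that caveat noted, the argument above supplies the full proof in a few lines.
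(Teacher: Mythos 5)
Your proof is correct and is the standard argument: the paper itself states this lemma without proof (citing prior work), and the reduction via monotonicity of $\max(0,\cdot)$ to minimizing $yw^\top\delta$ over the box, solved coordinate-wise to give value $-\varepsilon\|w\|_1$, is exactly the computation the paper implicitly relies on (e.g.\ in the OAT objective $\max(0,1-yw^\top x+\varepsilon\|w\|_1)$). Your caveat about coordinates with $w_i=0$ — that the stated $\delta$ is \emph{a} maximizer rather than the unique one — is accurate and is consistent with how the paper treats non-uniqueness later in Lemma \ref{lem: optimal perturbation short}.
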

We note that the worst-case perturbation does not depend on the feature $x$, which means any point in the same class has the same perturbation. Intuitively, the worst-case perturbation shifts the distribution of each class toward the decision boundary. Since there is no other incentive for the adversary to choose another perturbation, we assume that the AT always picks the worse-case perturbation $\delta(x,y) = -y\varepsilon \operatorname{sign}(w)$. 
This implies that at time $t$, the row player submits the perturbation function $\delta^{(t)}$ such that
\begin{equation*}
    \delta^{(t)}(x,y) = -y\varepsilon\operatorname{sign}(w^{(t-1)}).
\end{equation*}
However, later in this work, we do not restrict our action space to only the worst-case perturbations when analyzing a Nash equilibrium. Now, we can derive the dynamics of AT. We prove that for non-robust features $x_i$'s, if a column player puts a positive (negative) weight of the model on $x_i$ at time $t$, then the model at time $t+1$ will put a non-positive (non-negative) weight on $x_i$.
\begin{theorem}[Dynamics of AT]
\label{thm: AT cycle}
 Consider applying AT to learn a linear model $f(x) = w^{\top}x$. Let $w^{(t)} = [w^{(t)}_1,w^{(t)}_2, \dots, w^{(t)}_{d}]$ be the parameter of the linear function at time $t$. For a non-robust feature $x_i$,
\begin{enumerate}
    \item If $w^{(t)}_i > 0$, we have $w^{(t+1)}_i \leq 0$;
    \item If $w^{(t)}_i <0 $, we have $w^{(t+1)}_i \geq 0$,
\end{enumerate}
for all time $t > 0$. 
\end{theorem}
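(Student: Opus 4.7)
The plan is to reduce $w^{(t+1)}$ to the minimizer of a strongly convex effective-SVM objective whose per-feature conditional means are explicit, and then to read off the sign of each coordinate via a first-order optimality computation that exploits Assumption~\ref{assum: independent features}.

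First, I would apply Lemma~\ref{possible perturb} so that the row player's move at step $t+1$ is $\delta^{(t+1)}(x,y)=-y\varepsilon\operatorname{sign}(w^{(t)})$. Setting $s:=\operatorname{sign}(w^{(t)})$ and $\tilde{x}:=x-y\varepsilon s$, the column player then solves
\[
w^{(t+1)} = \argmin_u F(u), \qquad F(u):=\mathbb{E}\bigl[\max(0,\, 1 - y u^\top \tilde{x})\bigr] + \tfrac{\lambda}{2}\|u\|_2^2,
\]
which is strongly convex and hence admits a unique minimizer. By Assumption~\ref{assum: data distribution}, $\mathbb{E}[\tilde{x}_i\mid y] = y(\mu_i-\varepsilon s_i) =: y\tilde{\mu}_i$. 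The decisive observation for a non-robust feature ($|\mu_i|\le\varepsilon$) is that $\tilde{\mu}_i$ and $s_i$ have opposite signs (or $\tilde{\mu}_i=0$): $s_i=+1$ forces $\tilde{\mu}_i\le 0$, and $s_i=-1$ forces $\tilde{\mu}_i\ge 0$.

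Next, for arbitrary $u_{-i}\in\mathbb{R}^{d-1}$ I would treat $F(\cdot,u_{-i})$ as a strongly convex scalar function of $u_i$ and compute its (one-sided) derivative at $u_i=0$. Since the hinge contributes $-y\tilde{x}_i$ exactly on the active set $\{yu_{-i}^\top\tilde{x}_{-i}<1\}$ and $0$ otherwise,
\[
F'(0;u_{-i}) = -\mathbb{E}\!\left[y\tilde{x}_i\,\mathbf{1}\{yu_{-i}^\top\tilde{x}_{-i}<1\}\right].
\]
Here I would invoke Assumption~\ref{assum: independent features}: conditional on $y$, the factor $\tilde{x}_i$ is independent of the indicator, so the expectation factorizes, and $\mathbb{E}[y\tilde{x}_i\mid y]=\tilde{\mu}_i$ does not depend on $y$. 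This yields the clean expression
\[
F'(0;u_{-i}) = -\tilde{\mu}_i\,p, \qquad p:=\mathbb{P}[yu_{-i}^\top\tilde{x}_{-i}<1]\in[0,1].
\]

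Finally, I would specialize $u_{-i}:=w^{(t+1)}_{-i}$. In Case 1 ($w^{(t)}_i>0$, so $s_i=+1$ and $\tilde{\mu}_i\le 0$) this yields $F'(0;u_{-i})\ge 0$; convexity of $F(\cdot,u_{-i})$ then forces its unique coordinatewise minimizer $w^{(t+1)}_i=\hat{u}_i(w^{(t+1)}_{-i})$ to be non-positive. Case 2 is symmetric using $\tilde{\mu}_i\ge 0$. The main delicacy will be the hinge's non-smoothness at the kink $yu^\top\tilde{x}=1$: in general this produces an interval of subgradients at $u_i=0$, but the whole interval still lies in $[0,\infty)$ or $(-\infty,0]$ according to the sign of $\tilde{\mu}_i$, so the sign conclusion for the minimizer is unaffected. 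I expect the conditional-independence factorization to be the genuinely load-bearing step; everything else is a standard consequence of strong convexity.
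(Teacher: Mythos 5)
Your proposal is correct and follows the same overall skeleton as the paper: the worst-case perturbation $-y\varepsilon\operatorname{sign}(w^{(t)})$ shifts the conditional mean of a non-robust feature to $y(\mu_i-\varepsilon s_i)$, whose sign is opposite to $s_i$, and then a ``sign of the SVM minimizer tracks the sign of the conditional feature mean'' step finishes via Assumption~\ref{assum: independent features}. Where you differ is in how that sign step is established. The paper (Lemma~\ref{lemma: sign w}) argues globally: it assumes the wrong sign, zeroes out the coordinate, and uses Jensen's inequality conditionally on $y$ (exploiting independence to split the expectation) to show the risk does not increase while the regularizer strictly decreases, a contradiction. You argue locally: you compute the one-sided derivative of the coordinatewise objective at $u_i=0$, factor the expectation over the active set using conditional independence to get $-\tilde{\mu}_i\,\mathbb{P}[\,\cdot\,]$, and invoke strong convexity to locate the minimizer. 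Both arguments place the load on conditional independence in the same spot; yours is arguably more transparent about \emph{why} the conclusion is one-sided ($w_i^{(t+1)}\le 0$ rather than $<0$), while the paper's comparison argument avoids any discussion of non-smoothness. One small correction to your closing remark: at the kink set $\{yu_{-i}^{\top}\tilde{x}_{-i}=1\}$ the subdifferential of the hinge in $u_i$ at $0$ is $[\min(0,-y\tilde{x}_i),\max(0,-y\tilde{x}_i)]$, so the full subgradient interval of $F(\cdot,u_{-i})$ at $0$ need \emph{not} lie in $[0,\infty)$ in Case~1; what you actually need, and what does hold, is that the \emph{right} derivative is nonnegative, since its kink contribution $\mathbb{E}[\max(0,-y\tilde{x}_i)\mathbf{1}\{yu_{-i}^{\top}\tilde{x}_{-i}=1\}]\ge 0$ only helps, and $F'_{+}(0)\ge 0$ together with strong convexity already forces the coordinatewise minimizer to be $\le 0$.
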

\begin{proof}
 The key intuition is that mean of non-robust features is smaller than the perturbation budget, and the adversary can always shift the mean of these features across the decision boundary. Therefore, if we want to train a model to fit the adversarial examples, we have to flip the decision boundary at every iteration (Figure \ref{fig: flip flop pic}). Formally, consider a non-robust feature $x_i$ with $w^{(t)}_i > 0$, the perturbation at time $t+1$ of feature $x_i$ is given by
 \begin{equation*}
    \delta^{(t+1)}_i(x,y) = -y\varepsilon\operatorname{sign}(w^{(t)}_i) = -y\varepsilon.
\end{equation*}
The mean of the feature $x_i$ of the adversarial examples at time $t+1$ of class $y = 1$ is given by
\begin{equation*}
    \mu_i^{(t+1)} = \mathbb{E}[x_i + \delta_i^{(t+1)}(x,y)|y = 1] = \mu_i - \varepsilon \leq |\mu_i| - \varepsilon < 0.
\end{equation*}
The final inequality holds because $x_i$ is a non-robust feature. We note that for a linear classifier under SVM-objective, when the mean $\mu_i^{(t+1)} < 0$ we must have $w^{(t+1)}_i \leq 0$ (see Lemma \ref{lemma: sign w}).
\end{proof}
Theorem \ref{thm: AT cycle} implies that the difference between the consecutive model weight is at least the magnitude of weight on non-robust features.
\begin{corr}
 Consider applying AT to learn a linear model $f(x) = w^{\top}x$. Let $w^{(t)} = [w^{(t)}_1,w^{(t)}_2, \dots, w^{(t)}_{d}]$ be the parameters of the linear function at time $t$. We have
 \begin{equation*}
     ||w^{(t+1)} - w^{(t)}||_2^2 \geq \sum_{|\mu_i| < \varepsilon}(w_i^{(t)})^2.
 \end{equation*}
\end{corr}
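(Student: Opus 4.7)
The plan is to leverage Theorem \ref{thm: AT cycle} coordinatewise: for each index $i$ with $|\mu_i| < \varepsilon$, the weight $w_i$ is forced to flip sign (in the weak sense) between consecutive rounds, which means each such coordinate contributes at least $(w_i^{(t)})^2$ to the squared displacement $\|w^{(t+1)} - w^{(t)}\|_2^2$. Summing these contributions yields the claimed bound.

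First, I would fix any non-robust feature index $i$ with $|\mu_i| < \varepsilon$ and apply Theorem \ref{thm: AT cycle} to conclude the sign condition $w_i^{(t)} \cdot w_i^{(t+1)} \leq 0$. Indeed, if $w_i^{(t)} > 0$ the theorem gives $w_i^{(t+1)} \leq 0$; if $w_i^{(t)} < 0$ it gives $w_i^{(t+1)} \geq 0$; and if $w_i^{(t)} = 0$ the product is trivially zero. So in all three cases the two consecutive weights have non-positive product.

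Next, I would expand the per-coordinate squared difference and use this sign bound:
\begin{equation*}
(w_i^{(t+1)} - w_i^{(t)})^2 = (w_i^{(t+1)})^2 - 2 w_i^{(t)} w_i^{(t+1)} + (w_i^{(t)})^2 \geq (w_i^{(t)})^2,
\end{equation*}
since $(w_i^{(t+1)})^2 \geq 0$ and $-2 w_i^{(t)} w_i^{(t+1)} \geq 0$. Finally I would lower-bound the full squared norm by restricting the sum to the non-robust indices:
\begin{equation*}
\|w^{(t+1)} - w^{(t)}\|_2^2 = \sum_{i=1}^d (w_i^{(t+1)} - w_i^{(t)})^2 \geq \sum_{|\mu_i| < \varepsilon} (w_i^{(t+1)} - w_i^{(t)})^2 \geq \sum_{|\mu_i| < \varepsilon} (w_i^{(t)})^2,
\end{equation*}
which is exactly the claim.

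There is no real obstacle here; the corollary is a direct arithmetic consequence of the sign-flipping dynamics established in Theorem \ref{thm: AT cycle}. The only subtlety worth flagging is the boundary case $w_i^{(t)} = 0$, which Theorem \ref{thm: AT cycle} does not explicitly address but which is handled trivially since the contribution $(w_i^{(t)})^2$ is zero, and the strict inequality $|\mu_i| < \varepsilon$ (rather than the weak inequality in the definition of non-robust feature) matches the hypothesis actually used in the proof of Theorem \ref{thm: AT cycle}.
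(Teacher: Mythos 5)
Your proof is correct and matches the paper's own argument (given in the appendix, inside the proof of Theorem \ref{theorem: AT does not converge}): both use the sign-flip from Theorem \ref{thm: AT cycle} to get $(w_i^{(t+1)}-w_i^{(t)})^2 \geq (w_i^{(t)})^2$ coordinatewise and then sum over the non-robust indices, the paper writing this as $(|w_i^{(t+1)}|+|w_i^{(t)}|)^2 \geq (w_i^{(t)})^2$ while you expand the square and use $-2w_i^{(t)}w_i^{(t+1)} \geq 0$. Your remark about the strict inequality $|\mu_i| < \varepsilon$ being the hypothesis actually needed in Theorem \ref{thm: AT cycle}'s proof is also accurate.
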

 If a large fraction of the model weight is on the non-robust features at each time $t$, then the model will not converge. We provide an example of data distributions where, when we train a model with AT, our model will always rely on the non-robust features at time $t$. We consider the following distribution:

\begin{definition}[Data distribution with a large fraction of non-robust features]
\label{def: distribution for AT}
Let the data distribution be as follows
\begin{enumerate}
    \item $y {\sim} \operatorname{unif}\{-1,+1\}$,
    \item $
   x_{1}=\left\{\begin{array}{ll}
+y, & \text { w.p. } p; \\
-y, & \text { w.p. } 1-p,
\end{array}\right.
$
    \item $x_j|y$ is a distribution with mean $y\mu_j$ and variance $\sigma_j^2$, where $\varepsilon >\mu_j > 0$ , for $j = 2,3,\dots, d+1$.
\end{enumerate}
\end{definition}
Given a label $y$, the first feature $x_1$ takes 2 possible values, $y$ with probability $p$ and $-y$ with probability $1-p$. We note that $x_1$ is the true label with probability $p$ and is robust to adversarial perturbations. On the other hand, for $j\geq 2$, feature $x_j$ is more flexible where it can follow any distribution with a condition that the feature must be weakly correlated with the true label in expectation. Each feature might be less informative compared to $x_1$ but combining many of them can lead to a highly accurate model. We note that $x_j$ is non-robust since its mean is smaller than the perturbation budget.

The data distribution is a specification of our setup in Section \ref{section: setup} where we have a significantly large number of non-robust features compared to the robust features. This distribution is inspired by the one studied in \citep{tsipras2018robustness}, where they showed that standard training on this type of distribution (when features $j=2,\dots,d+1$ are Gaussian distributions) leads to a model that relies on non-robust features. We generalize their result to a scenario when $x_j$ can follow any distribution (see Appendix \ref{appendix: standard training relies on non-robust}). We note that the lack of assumption on each distribution is a key technical challenge for our analysis.

\begin{theorem}[Adversarial training uses non-robust feature (simplified version)]
\label{thm: adversarial training uses non-robust feature (sim)}
Let the data distribution follows
the distribution as in Definition \ref{def: distribution for AT}. Consider applying AT to learn a linear model $f(x) = w^{\top}x$. Assume that $\varepsilon > 2\mu_j$ for $j=2,\dots, d+1$. Let $w^{(t)} = [w^{(t)}_1,w^{(t)}_2, \dots, w^{(t)}_{d+1}]$ be the parameter of the linear function at time $t$. If 
\begin{equation}
\label{eq: adversarial training uses non-robust feature (sim)}
    p < 1 - \left(\frac{1}{2}(\frac{\sigma_{\max}}{||\mu' ||_2} + \frac{\lambda}{2||\mu'||_2^2}) + \frac{1}{2}\sqrt{\frac{2}{\lambda}}\sigma_{\max}\right),
\end{equation}
where
\begin{equation*}
    \sigma_i \leq \sigma_{\max}, \quad \mu' = [0, \mu_2, \dots, \mu_{d+1}],
\end{equation*}
then 
\begin{equation*}
    \sum_{j=2}^{d+1} (w^{(t)}_j)^2 \geq \frac{||w^{(t)}||_2^2(1-\varepsilon)^2}{(1-\varepsilon)^2 + \sum_{j=2}^{d+1} (\mu_j + \varepsilon)^2}.
\end{equation*}
(For simplicity, we also assume that $\sigma_{\max} \geq 1$. For a tighter bound, see Appendix \ref{appendix: adversarial training relies on non-robust}).
\end{theorem}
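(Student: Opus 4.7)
The plan is to exploit that $w^{(t)}$ is the minimizer of the adversarially perturbed soft-SVM objective and to upper bound its loss by that of a carefully chosen alternative classifier $\tilde w$ supported only on the non-robust features. Using Lemma~\ref{possible perturb}, I can write $\delta^{(t)}(x,y) = -y\varepsilon \operatorname{sign}(w^{(t-1)})$, and Theorem~\ref{thm: AT cycle} gives $\operatorname{sign}(w_j^{(t)}) = -\operatorname{sign}(w_j^{(t-1)})$ for every non-robust $j \geq 2$. Consequently the conditional mean of $y(x_j + \delta_j^{(t)})$ aligned with $w_j^{(t)}$ has magnitude $\mu_j + \varepsilon$---the source of the $(\mu_j+\varepsilon)$ terms---while for the robust feature $x_1$ the worst-case adversarial margin (when $x_1 = y$, probability $p$) is $1-\varepsilon$, explaining the numerator.

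I then construct $\tilde w$ with $\tilde w_j \propto \operatorname{sign}(w_j^{(t)})$ for $j \geq 2$ and $\tilde w_1 = 0$, tuning its scale $\|\tilde w\|_2$ to minimize a tractable upper bound on $\mathcal{L}(\tilde w)$ computed on the perturbed distribution. Since Assumptions~\ref{assum: data distribution}--\ref{assum: independent features} only control the first two moments of each feature, I invoke Chebyshev's inequality on $y\tilde w^\top(x+\delta^{(t)})$: by the sign alignment its conditional mean is at least $\|\tilde w\|_2 \|\mu'\|_2$, and by independence its variance is at most $\sigma_{\max}^2\|\tilde w\|_2^2$. Balancing the resulting hinge term against $\frac{\lambda}{2}\|\tilde w\|_2^2$ at the optimal scale yields an upper bound in which the $\frac12\bigl(\sigma_{\max}/\|\mu'\|_2 + \lambda/(2\|\mu'\|_2^2)\bigr)$ and $\frac12\sqrt{2/\lambda}\,\sigma_{\max}$ factors appear naturally, producing exactly the hypothesis on $p$.

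Combining $\mathcal{L}(w^{(t)}) \leq \mathcal{L}(\tilde w)$ with a matching lower bound on $\mathcal{L}(w^{(t)})$ via the first-order condition $\lambda w_i^{(t)} = \mathbb{E}[y(x_i+\delta_i^{(t)})\mathbf{1}\{y(w^{(t)})^\top(x+\delta^{(t)}) < 1\}]$, applied at $i=1$ and using the sign structure above, I extract an inequality of the form $(1-\varepsilon)|w_1^{(t)}| \leq \sum_{j \geq 2} |w_j^{(t)}|(\mu_j+\varepsilon)$. The $(1-p)$ branch of $x_1$ weakens the informative signal of the robust feature just enough for this inequality to bite when $p$ satisfies the stated bound. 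Squaring and applying Cauchy--Schwarz gives $(1-\varepsilon)^2(w_1^{(t)})^2 \leq \bigl(\sum_{j\geq 2}(\mu_j+\varepsilon)^2\bigr)\bigl(\sum_{j\geq 2}(w_j^{(t)})^2\bigr)$, which rearranges to the claimed fraction bound.

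The main obstacle is the loss bound for $\tilde w$: without distributional assumptions beyond the first two moments, Chebyshev must replace the Gaussian tail bounds used in \cite{tsipras2018robustness}, producing the somewhat loose $\sqrt{2/\lambda}\,\sigma_{\max}$ term in the precondition on $p$. The side assumption $\varepsilon > 2\mu_j$ ensures that the post-perturbation mean $\mu_j-\varepsilon$ of any non-robust coordinate is at least as large in magnitude as $\mu_j$, letting the contributions from positively- and negatively-perturbed non-robust coordinates be combined uniformly in the sign-alignment step.
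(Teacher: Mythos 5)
Your high-level skeleton matches the paper's: compare the AT iterate against a comparator supported off the robust feature, control the hinge loss with only two moments, extract the linear inequality $(1-\varepsilon)w_1^{(t)} \leq \sum_{j\geq 2} w_j^{(t)}|\mu_j+\varepsilon|$, and finish with Cauchy--Schwarz (that last step is exactly the paper's). But the two central steps have genuine gaps. First, your comparator $\tilde w_j \propto \operatorname{sign}(w_j^{(t)})$ with a single tuned scale does not have conditional mean at least $\|\tilde w\|_2\|\mu'\|_2$: for an equal-magnitude sign vector the inner product with the perturbed mean vector is $\|\tilde w\|_2\,\|\mu'\|_1/\sqrt{d}$ (up to the $\varepsilon$ shifts), which by Cauchy--Schwarz is \emph{at most} $\|\tilde w\|_2\|\mu'\|_2$, not at least --- the inequality you need goes the wrong way unless all $\mu_j$ are equal. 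The paper (Theorem~\ref{lemma: adversarial training uses non-robust feature (general)}) instead takes $w'$ \emph{proportional to the perturbed mean vector} $\mu+\varepsilon s$, with norm matched to $\|w^{(t)}\|_2$ so the regularization terms cancel, and then invokes Lemmas~\ref{lem: lower bound on the magnitude} and~\ref{lemma: magnitude of w*} to bound that common norm; the constants in the hypothesis on $p$ come from that specific chain together with Lemma~\ref{lemma: bound of E(max(0,X))} (the elementary bound $\mathbb{E}\max(0,X)\leq\max(0,\mathbb{E}X)+\tfrac12\sqrt{\operatorname{Var}X}$, via $\max(0,x)=\tfrac12(x+|x|)$), not from Chebyshev --- integrating a Chebyshev tail would give a $\sigma^2/(\text{margin}-1)$-type term and would not reproduce $\tfrac12\sqrt{2/\lambda}\,\sigma_{\max}$.

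Second, and more seriously, your route to the key inequality via the stationarity condition $\lambda w_1^{(t)} = \mathbb{E}[y(x_1+\delta_1^{(t)})\mathbf{1}\{\cdot\}]$ is not an argument: that condition bounds $w_1^{(t)}$ in isolation but gives no relation to $\sum_{j\geq 2} w_j^{(t)}(\mu_j+\varepsilon)$, and nowhere in your sketch does the hypothesis on $p$ actually enter the derivation --- you only assert that the $(1-p)$ branch makes the inequality ``bite.'' The paper's mechanism is a contradiction argument that makes this precise: if $(1-\varepsilon)w_1^{(t)} > \sum_{j\geq 2} w_j^{(t)}|\mu_j+\varepsilon|$, then restricting to the event $x_1=-y$ (probability $1-p$) and applying Jensen forces $\mathcal{L}(w^{(t)}) \geq (1-p) + \tfrac{\lambda}{2}\|w^{(t)}\|_2^2$, while the mean-aligned comparator achieves strictly less than this exactly when $p$ satisfies \eqref{eq: adversarial training uses non-robust feature (sim)}; the contradiction yields the inequality. (A side benefit of the paper's formulation is that it quantifies over all perturbation signs $s\in\{-1,0,1\}^{d+1}$ and so never needs the sign-flip dynamics of Theorem~\ref{thm: AT cycle} inside this proof.) As written, your proposal would need both the comparator redefined and the second half replaced before it closes.
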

Since, features $j = 2, \dots, d+1$ are not robust, Theorem \ref{thm: adversarial training uses non-robust feature (sim)} implies that a model at time $t$ will put at least 
\begin{equation*}
    \frac{(1-\varepsilon)^2}{(1-\varepsilon)^2 + \sum_{j=2}^{d+1} (\mu_j + \varepsilon)^2}
\end{equation*}
fraction of weight on non-robust features. We note that the term $||\mu'||_2$ at the denominator of condition \eqref{eq: adversarial training uses non-robust feature (sim)} grows as $\mathcal{O}(d)$. Therefore, if the number of non-robust features $d$ and the regularization parameter $\lambda$ is large enough then the condition \eqref{eq: adversarial training uses non-robust feature (sim)} holds. We discuss the full version of this theorem in Appendix \ref{appendix: adversarial training relies on non-robust}. Next, we prove that the magnitude $||w^{(t)}||_2$ is bounded below by a constant (see Lemma \ref{lem: lower bound on the magnitude}) which implies that $||w^{(t)}||_2$ does not converge to zero. Therefore, we can conclude that a model trained with AT puts a non-trivial amount of weights on non-robust features at each iteration. This implies that AT does not converge.

\begin{theorem}[AT does not converge (simplified)]
\label{theorem: AT does not converge (simplified)}
Let the data follow the distribution as in Definition \ref{def: distribution for AT}. Assume that the variance $\sigma_j^2$ is bounded above and $\varepsilon > 2\mu_j$ for $j=2,\dots, d+1$. Consider applying AT to learn a linear model $f(x) = w^{\top}x$ on the SVM objective Let $w^{(t)}$ be the parameter of the linear function at time $t$. If the number of non-robust feature $d$ and the regularization parameter $\lambda$ is large enough then $w^{(t)}$ does not converge as $t\to \infty$. 
\end{theorem}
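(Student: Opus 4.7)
The plan is to chain together the three earlier results — the Corollary on consecutive-iterate gap, Theorem~2 on the weight fraction concentrated on non-robust features, and the cited lower bound on $\|w^{(t)}\|_2$ — to show that $\|w^{(t+1)}-w^{(t)}\|_2$ is bounded below by a positive constant for all $t$, so $\{w^{(t)}\}$ cannot be Cauchy.

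First I would verify that under the stated hypotheses (bounded variances $\sigma_j\le\sigma_{\max}$, $\varepsilon>2\mu_j$, and $d,\lambda$ sufficiently large) the condition \eqref{eq: adversarial training uses non-robust feature (sim)} of Theorem~\ref{thm: adversarial training uses non-robust feature (sim)} is fulfilled for every $t$. Since $\mu'=[0,\mu_2,\ldots,\mu_{d+1}]$ with each $\mu_j$ bounded away from $0$ (by the assumption $\varepsilon>2\mu_j>0$, combined with boundedness from the definition), the norm $\|\mu'\|_2$ grows like $\sqrt{d}$ or faster, so the first term $\sigma_{\max}/\|\mu'\|_2$ and the regularization term $\lambda/(2\|\mu'\|_2^2)$ can be made arbitrarily small by enlarging $d$; the remaining term $\tfrac12\sqrt{2/\lambda}\,\sigma_{\max}$ can then be made arbitrarily small by enlarging $\lambda$. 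Thus the right-hand side of \eqref{eq: adversarial training uses non-robust feature (sim)} can be pushed arbitrarily close to $1$, so any fixed $p<1$ (the parameter of the distribution in Definition~\ref{def: distribution for AT}) eventually satisfies the hypothesis.

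Given this, Theorem~\ref{thm: adversarial training uses non-robust feature (sim)} applies at every iteration $t$ and yields
\begin{equation*}
\sum_{j=2}^{d+1}(w_j^{(t)})^2 \;\ge\; c_1\,\|w^{(t)}\|_2^2,\qquad c_1:=\frac{(1-\varepsilon)^2}{(1-\varepsilon)^2+\sum_{j=2}^{d+1}(\mu_j+\varepsilon)^2}>0.
\end{equation*}
Combining this with the Corollary — which gives $\|w^{(t+1)}-w^{(t)}\|_2^2\ge \sum_{|\mu_i|<\varepsilon}(w_i^{(t)})^2$, and recalling that features $j=2,\ldots,d+1$ are exactly the non-robust ones — produces
\begin{equation*}
\|w^{(t+1)}-w^{(t)}\|_2^2 \;\ge\; c_1\,\|w^{(t)}\|_2^2.
\end{equation*}
Finally, invoking the cited Lemma~\ref{lem: lower bound on the magnitude} (which asserts a uniform lower bound $\|w^{(t)}\|_2\ge c_2>0$), we conclude $\|w^{(t+1)}-w^{(t)}\|_2^2\ge c_1 c_2^2>0$ for all $t$. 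Hence $\{w^{(t)}\}$ is not Cauchy in $\mathbb{R}^{d+1}$, so it does not converge.

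The main obstacle is not the chaining itself but checking that the quantitative hypotheses line up: specifically, establishing that $\|\mu'\|_2$ really does grow with $d$ (this requires that the $\mu_j$'s are bounded below away from zero, not merely positive, so a careful reading of Definition~\ref{def: distribution for AT} or a mild strengthening of it is needed), and confirming that the magnitude lower bound from Lemma~\ref{lem: lower bound on the magnitude} holds uniformly in $t$ rather than only at a Nash equilibrium. Once these two ingredients are pinned down, the non-convergence follows immediately from the Cauchy-sequence obstruction.
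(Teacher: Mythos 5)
Your proposal is correct and follows essentially the same route as the paper's proof in Appendix~\ref{appendix: adversarial training relies on non-robust} and the non-convergence theorem: verify the condition on $p$ holds for large $d$ and $\lambda$, chain the corollary with Theorem~\ref{thm: adversarial training uses non-robust feature (sim)} to get $\|w^{(t+1)}-w^{(t)}\|_2^2\ge c_1\|w^{(t)}\|_2^2$, and invoke Lemma~\ref{lem: lower bound on the magnitude} (applied to the perturbed distribution $D+\delta^{(t)}$ at each step, which is exactly how the paper obtains the uniform-in-$t$ bound you flag as needing verification) to rule out $\|w^{(t)}\|_2\to 0$. The only cosmetic difference is that you conclude via the Cauchy criterion while the paper assumes a limit exists and derives $\|w^*\|_2=0$ as a contradiction; these are equivalent.
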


\section{PROPERTIES OF THE NASH EQUILIBRIUM}
 Recall the definition of a Nash equilibrium:
\begin{definition}[Nash Equilibrium]
 For a SLAR game, a pair of actions $(\delta^*, w^*)$ is called a pure strategy Nash equilibrium if the following hold
\begin{equation*}
    \sup_{\delta} U_{\operatorname{row}}(\delta,w^*) \leq U_{\operatorname{row}}(\delta^*,w^*) \leq \inf_{w} U_{\operatorname{row}}(\delta^*,w).
\end{equation*}
\end{definition}
From this definition, we note that for any fixed $w^*$, the inequality
\begin{equation*}
    \sup_{\delta} U_{\operatorname{row}}(\delta,w^*) \leq U_{\operatorname{row}}(\delta^*,w^*)
\end{equation*}
holds if and only if $\delta^*$ is optimal. We know that the worst-case perturbation $\delta^*(x,y) = -y\varepsilon\operatorname{sign}(w^*)$ satisfies this condition. However, the optimal perturbations might not be unique because of the $\max(0, \cdot)$ operator in the hinge loss. For instance, for a point that is far away from the decision boundary such that the worst-case perturbation leads to a zero loss: 
\begin{equation*}
     1 - yw^{\top}(x + \delta^*(x,y)) \leq 0,
\end{equation*}
any perturbation $\delta(x,y)$ will lead to a zero loss:
\begin{equation*}
    1 - yw^{\top}(x + \delta(x,y)) \leq 0.
\end{equation*}
Therefore, any perturbation $\delta$ leads to the same utility (Figure \ref{fig: perturbation are not unique}). On the other hand, if the worst-case perturbation leads to a positive loss, we can show that the optimal perturbation must be the worst-case perturbation.
\begin{figure}[ht]
    \centering
    \includegraphics[width =0.6\columnwidth]{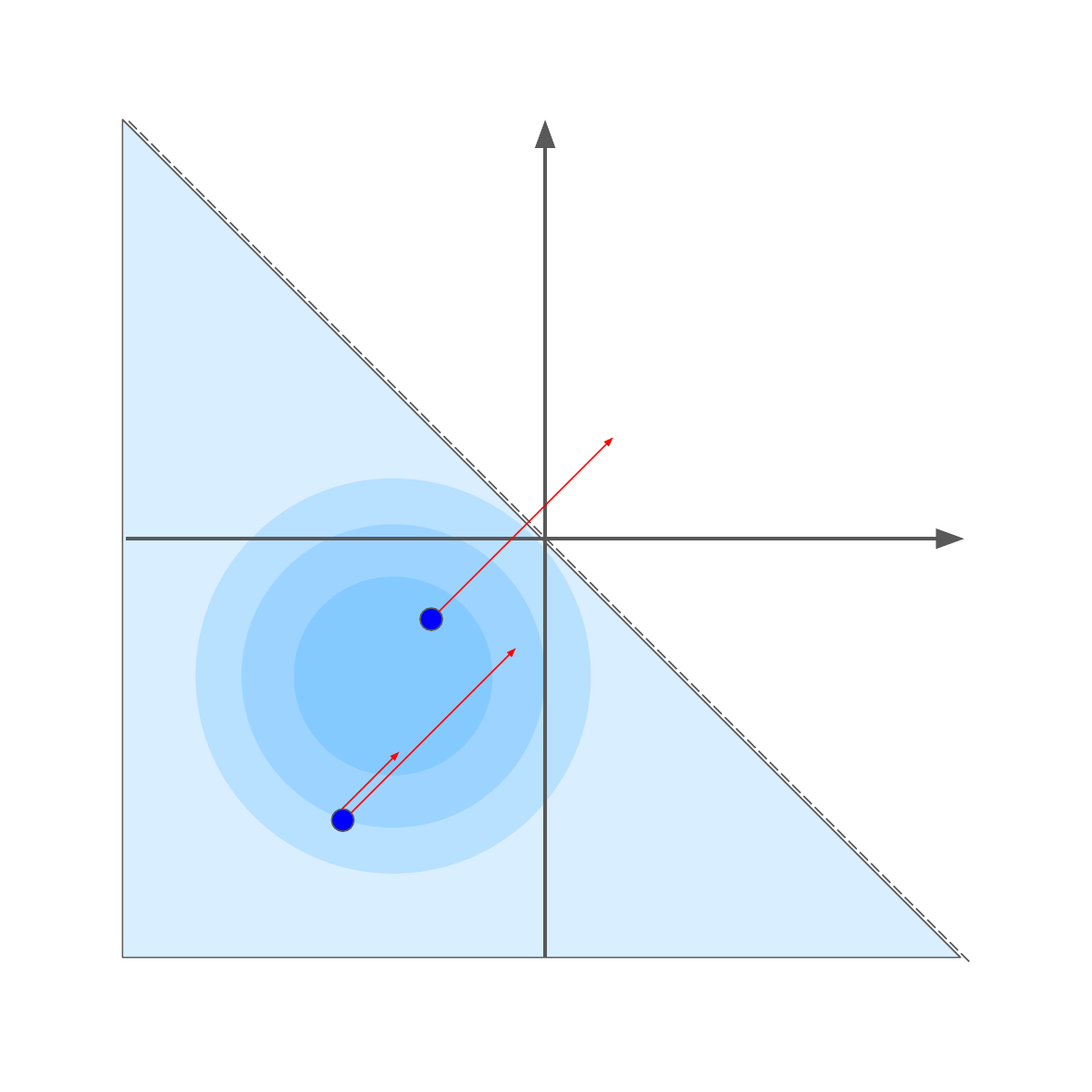}
    \caption{For points that are far enough from the decision boundary, any perturbation is optimal.}
    \label{fig: perturbation are not unique}
\end{figure}

\begin{lem}
\label{lem: optimal perturbation short}
For a fixed $w$ and any $(x,y) \sim D$, if
\begin{equation*}
     1 - yw^{\top}(x - y\varepsilon\operatorname{sign}(w)) > 0,
\end{equation*}
then the optimal perturbation in is uniquely given by $\delta^*(x,y) = -y\varepsilon \operatorname{sign}(w)$. Otherwise, any perturbation $\mathcal{B}(\varepsilon)$ is optimal.
\end{lem}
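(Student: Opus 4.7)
The plan is to split on the sign of the affine inner argument. Write $g(\delta) := 1 - yw^{\top}(x+\delta)$, so that $l(\delta, w, x, y) = \max(0, g(\delta))$. The function $g$ is affine in $\delta$, and Lemma \ref{possible perturb} already tells us that $\delta^*(x,y) := -y\varepsilon\operatorname{sign}(w)$ is a maximizer of the hinge loss over $\mathcal{B}(\varepsilon)$. What needs to be established is uniqueness in the regime where the hypothesis $g(\delta^*) > 0$ holds, and the invariance statement in the complementary regime.

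In the first case, where $g(\delta^*) > 0$, the monotonicity of $\max(0,\cdot)$ and affineness of $g$ give that the maximum of $l$ over $\mathcal{B}(\varepsilon)$ equals $g(\delta^*) > 0$. Any other maximizer $\delta$ must then satisfy $l(\delta) = g(\delta^*) > 0$, which forces $g(\delta) > 0$ and hence $l(\delta) = g(\delta)$. Thus the set of maximizers of $l$ coincides with the set of maximizers of the linear functional $\delta \mapsto -yw^{\top}\delta$ over the $\ell_\infty$-ball of radius $\varepsilon$. This linear program decomposes coordinatewise: for each $i$ with $w_i \neq 0$, the map $\delta_i \mapsto -yw_i\delta_i$ on $[-\varepsilon,\varepsilon]$ has the unique maximizer $\delta_i = -y\varepsilon\operatorname{sign}(w_i)$; for $i$ with $w_i = 0$, the objective is independent of $\delta_i$, and with the convention $\operatorname{sign}(0)=0$ inherited from Lemma \ref{possible perturb} the claim matches $\delta^*$ on every coordinate that actually enters the loss. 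This yields the stated uniqueness.

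In the second case, where $g(\delta^*) \leq 0$, Lemma \ref{possible perturb} gives $g(\delta) \leq g(\delta^*) \leq 0$ for all $\delta \in \mathcal{B}(\varepsilon)$, so $l(\delta) = \max(0, g(\delta)) = 0 = l(\delta^*)$ identically on $\mathcal{B}(\varepsilon)$, making every perturbation optimal. I do not anticipate a substantial obstacle: the argument is fundamentally a case split on whether the hinge is active at the worst-case direction, and the only subtle point is the coordinates with $w_i = 0$, which is a purely notational issue resolved by the $\operatorname{sign}(0) = 0$ convention.
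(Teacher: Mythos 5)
Your proof is correct. The paper actually states Lemma~\ref{lem: optimal perturbation short} without proof, so there is nothing to compare against, but your argument is the natural one: when the hinge is active at the worst case, maximizers of the loss coincide with maximizers of the affine argument, and the resulting linear program over the $\ell_\infty$-ball separates coordinatewise. Your caveat about coordinates with $w_i=0$ is the right reading of ``unique'' here --- the paper itself adopts exactly this interpretation when it later invokes the lemma (concluding $\delta_j^*(x,y) = -y\varepsilon\operatorname{sign}(w_j)$ only for $j$ with $w_j^*\neq 0$ in Appendix~\ref{appendix: Nash equilibrium is robust}). One cosmetic point: in the second case you cite Lemma~\ref{possible perturb} for $g(\delta)\leq g(\delta^*)$, but that lemma concerns the hinge loss $\max(0,g)$; the inequality you need is the elementary bound $-yw^{\top}\delta \leq \varepsilon\|w\|_1 = -yw^{\top}\delta^*$ on the affine part, which is what underlies that lemma anyway.
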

On the other hand, for any fixed $\delta^*$, the inequality
\begin{equation*}
    U_{\operatorname{row}}(\delta^*,w^*) \leq \inf_{w} U_{\operatorname{row}}(\delta^*,w)
\end{equation*}
holds when $w^*$ is an optimal solution of a standard SVM objective on the perturbed data distribution $(x + \delta^*(x,y), y) \sim D + \delta^*$. We know that for a fixed $\delta^*$, we have a unique $w^*$ (Lemma \ref{lem: unique solution of SVM}). Now, we show that a Nash equilibrium exists for the SLAR game.
\begin{theorem}[Existence of Nash equilibrium]
\label{thm: NE exists}
A Nash equilibrium exists for the SLAR game.
\end{theorem}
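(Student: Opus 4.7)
The plan is to construct a Nash equilibrium explicitly. Define the candidate adversary perturbation $\delta^*$ coordinatewise by
$$\delta^*_i(x, y) := \begin{cases} -y\varepsilon \operatorname{sign}(\mu_i), & |\mu_i| > \varepsilon \text{ (robust feature)}, \\ -y \mu_i, & |\mu_i| \leq \varepsilon \text{ (non-robust feature)}. \end{cases}$$
Since $|\mu_i| \leq \varepsilon$ on non-robust coordinates, $\delta^*(x,y) \in \mathcal{B}(\varepsilon)$. Let $w^* := \arg\min_w U_{\operatorname{row}}(\delta^*, w)$, which exists and is unique by Lemma \ref{lem: unique solution of SVM}. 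I claim $(\delta^*, w^*)$ is a Nash equilibrium.

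The column player's condition $U_{\operatorname{row}}(\delta^*, w^*) \leq \inf_w U_{\operatorname{row}}(\delta^*, w)$ holds by the definition of $w^*$. The crux of the argument is to show that $w^*_i = 0$ for every non-robust $i$. Under $\delta^*$, the perturbed feature $X'_i = x_i - y\mu_i$ satisfies $\mathbb{E}[X'_i \mid y] = 0$ in both classes. By Assumption \ref{assum: independent features}, for any $w$ with $w_i = 0$ the hinge indicator $\mathbf{1}[1 - y w^\top X' > 0]$ is a function of features other than $X'_i$ and is therefore independent of $X'_i$ given $y$. A subgradient calculation of $U_{\operatorname{row}}(\delta^*, \cdot)$ in the $w_i$ direction at such $w$ then factors across this conditional independence and vanishes because $\mathbb{E}[X'_i \mid y] = 0$; the regularizer contributes $\lambda\cdot 0$ as well. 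Hence $w_i = 0$ is consistent with first-order optimality in coordinate $i$ regardless of the values of the other coordinates, and the strict convexity of $U_{\operatorname{row}}(\delta^*, \cdot)$ (from $\tfrac{\lambda}{2}||w||_2^2$) forces the unique minimizer $w^*$ to have $w^*_i = 0$ on every non-robust coordinate.

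With this structural fact in hand, the row player's condition follows. By Lemma \ref{possible perturb}, $\sup_\delta U_{\operatorname{row}}(\delta, w^*)$ is attained at the worst-case perturbation $-y\varepsilon \operatorname{sign}(w^*)$. For a robust $i$, Lemma \ref{lemma: sign w} applied to the perturbed distribution (whose mean in class $y=1$ is $\mu_i - \varepsilon \operatorname{sign}(\mu_i) = \operatorname{sign}(\mu_i)(|\mu_i|-\varepsilon)$) gives $\operatorname{sign}(w^*_i) \in \{\operatorname{sign}(\mu_i), 0\}$, so on coordinates with $w^*_i \neq 0$ we have $\delta^*_i(x,y) = -y\varepsilon\operatorname{sign}(\mu_i) = -y\varepsilon\operatorname{sign}(w^*_i)$, matching the worst-case. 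On non-robust coordinates $w^*_i = 0$, so $\delta^*_i$ does not affect $U_{\operatorname{row}}(\delta^*, w^*)$. Thus $U_{\operatorname{row}}(\delta^*, w^*) = \sup_\delta U_{\operatorname{row}}(\delta, w^*)$, confirming that $(\delta^*, w^*)$ is a Nash equilibrium. The main technical obstacle is the subgradient step that pins $w^*_i = 0$ on non-robust coordinates: the argument crucially exploits both the conditional independence of Assumption \ref{assum: independent features} (to factor the expectation) and the zero perturbed mean engineered by the choice of $\delta^*_i = -y\mu_i$ (to make the factored expectation vanish).
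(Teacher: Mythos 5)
Your proposal is correct and follows essentially the same route as the paper: the same construction of $\delta^*$ (worst-case shift on robust coordinates, mean-cancelling shift $-y\mu_i$ on non-robust ones), the same choice of $w^*$ as the unique SVM minimizer on the perturbed distribution, and the same two-step verification that $w^*$ vanishes on non-robust coordinates and that $\delta^*$ agrees with the worst-case perturbation wherever $w^*_i \neq 0$. The only difference is cosmetic: where the paper invokes its sign lemma and its corollary (proved via Jensen's inequality and conditional independence) to get $w^*_i = 0$ on zero-mean perturbed coordinates, you argue via a coordinatewise subgradient computation that factors across the same conditional independence — both hinge on exactly the same two ingredients.
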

\begin{proof}
We will prove this by construction. Without loss of generality, let feature $x_i$ be a robust feature with $\mu_i > 0$ for $i = 1,\dots, k$ and let feature $x_j$ be a non-robust feature for $j = k+1, \dots, d$. Consider a perturbation function $\delta^*$ such that
\begin{equation*}
    \delta^*(x,y) = [\underbrace{-y\varepsilon,\dots, -y\varepsilon}_{k}, \underbrace{-y\mu_{k+1}, \dots, -y\mu_d}_{d-k}].
\end{equation*}
Intuitively, $\delta^*$ shifts the distribution of non-robust features by the same distance of their mean so that non-robust features have zero mean, in the perturbed data. Let $w^*$ be an optimal solution of a standard SVM objective on the perturbed data distribution $D + \delta^*$, which is known to be unique (Lemma \ref{lem: unique solution of SVM}). We will show that a pair $(\delta^*, w^*)$ is a Nash equilibrium. By the definition of $w^*$, it is sufficient to show that 
\begin{equation*}
    \sup_{\delta} U_{\operatorname{row}}(\delta,w^*) \leq U_{\operatorname{row}}(\delta^*,w^*).
\end{equation*}
First, we will show that $w^*_i \geq 0$ for $i = 1,\dots,k$ and $w^*_j = 0$ for $j = k+1,\dots, d$. We consider the mean of each feature on the perturbed data,
\begin{enumerate}
    \item For a robust feature $x_i$, we have 
    \begin{equation*}
        \mathbb{E}[x_i + \delta^*_i(x,y)|y=1] = \mu_i - \varepsilon > 0.
    \end{equation*}
    \item For a non-robust feature $x_j$, we have
    \begin{equation*}
         \mathbb{E}[x_j + \delta^*_j(x,y)|y=1] = \mu_j - \mu_j = 0.
    \end{equation*}
\end{enumerate}
From Lemma \ref{lemma: sign w} and Corollary \ref{lemma: w_i = 0 when mu_i = 0} we can conclude that $w^*_i \geq 0$ for $i = 1,\dots,k$ and $w^*_j = 0$ for $j = k+1,\dots, d$. Next, we will show that $\delta^*$ is optimal. Recall that for a fixed model $w^*$, the worst-case perturbation is given by
\begin{align*}
    \delta(x,y) &= -y\varepsilon\operatorname{sign}(w^*)\\
    &=[-y\varepsilon\operatorname{sign}(w^*_1),\dots, -y\varepsilon\operatorname{sign}(w^*_k), \underbrace{0, \dots, 0}_{d-k}].
\end{align*}
Although $\delta(x,y) \neq \delta^*(x,y)$, we note that if $w^*_i = 0$, any perturbation of a feature $x_i$ would still lead to the same loss
\begin{equation*}
    w^*_i(x_i + \delta_i(x,y)) = w^*_i(x_i + \delta^*_i(x,y)) = 0.
\end{equation*}
This implies that
\begin{equation*}
    U_{\operatorname{row}}(\delta^*,w^*) = U_{\operatorname{row}}(\delta,w^*) = \sup_{\delta} U_{\operatorname{row}}(\delta,w^*).
\end{equation*}
Therefore, $\delta^*$ is also an optimal perturbation function. We can conclude that $(\delta^*, w^*)$ is a Nash equilibrium.
\end{proof}
We can show further that for any Nash equilibrium, the weight on non-robust features must be zero. 
\begin{theorem}[Nash equilibrium is robust]
\label{thm: NE is robust}
Let $(\delta^*, w^*)$  be a Nash equilibrium of the SLAR game. For a non-robust feature $x_i$ we must have $w_i^* = 0$.
\end{theorem}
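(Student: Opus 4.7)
The plan is to avoid a direct analysis of the adversary's optimal perturbation in the general case and instead reduce to the specific Nash equilibrium constructed in Theorem \ref{thm: NE exists}, using the classical strategy-swap property of 2-player zero-sum games together with uniqueness of the SVM minimizer (Lemma \ref{lem: unique solution of SVM}).

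First, I would recall from the proof of Theorem \ref{thm: NE exists} that we already have in hand a concrete Nash equilibrium $(\delta_0^*, w_0^*)$ with the explicit property that $(w_0^*)_i = 0$ for every non-robust feature $x_i$. Here $w_0^*$ is the unique minimizer of the SVM objective on the perturbed distribution $D + \delta_0^*$, and the zero-weight claim on non-robust features is already verified in that proof.

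Next, I would invoke the standard fact that in any 2-player zero-sum game, if $(\delta_1^*, w_1^*)$ and $(\delta_2^*, w_2^*)$ are both Nash equilibria, then so is the swapped pair $(\delta_1^*, w_2^*)$. The verification is purely algebraic: writing out the four best-response inequalities in the Nash definitions and using $U_{\operatorname{col}} = -U_{\operatorname{row}}$, chaining them gives
\begin{equation*}
U_{\operatorname{row}}(\delta_1^*, w_2^*) = U_{\operatorname{row}}(\delta_1^*, w_1^*) = U_{\operatorname{row}}(\delta_2^*, w_2^*),
\end{equation*}
after which the two best-response conditions for $(\delta_1^*, w_2^*)$ follow by substitution into the inequalities satisfied by the original equilibria. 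Crucially, this derivation uses no convexity or compactness of the strategy spaces, so it transfers to the infinite-dimensional space of perturbation functions in the SLAR game.

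Finally, given an arbitrary Nash equilibrium $(\delta^*, w^*)$, apply the swap with $(\delta_0^*, w_0^*)$ to conclude that $(\delta^*, w_0^*)$ is also a Nash equilibrium. In particular, $w_0^*$ is a best response for the column player against $\delta^*$, meaning that $w_0^*$ minimizes the SVM objective on the perturbed distribution $D + \delta^*$. Since $w^*$ is also a minimizer of that objective (by the Nash condition on $w^*$), and the minimizer is unique by Lemma \ref{lem: unique solution of SVM}, we conclude $w^* = w_0^*$. In particular, $w_i^* = (w_0^*)_i = 0$ for every non-robust feature $x_i$, as claimed. The main subtle point is that the row player's best response to a given $w^*$ is in general non-unique (Figure \ref{fig: perturbation are not unique}), so characterizing $\delta^*$ pointwise and analyzing the induced perturbed means directly is delicate; the swap step elegantly bypasses this by transferring the $w$-component of any Nash equilibrium to that of the canonical equilibrium, whose structure is already fully understood.
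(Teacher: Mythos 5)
Your proof is correct, but it takes a genuinely different route from the paper's. The paper proves Theorem \ref{thm: NE is robust} directly: fixing a non-robust feature $x_i$ with $w_i^* \neq 0$, it runs a multi-case analysis (driven by Lemma \ref{lem: optimal perturbation short} and whether the worst-case perturbation yields positive loss) to show that setting $w_i^* = 0$ cannot increase the risk term while it strictly decreases the regularizer, contradicting the best-response property of $w^*$ against $\delta^*$. You instead obtain robustness as a corollary of existence plus uniqueness: your swap step is precisely the interchangeability of equilibria in zero-sum games, and the chain of equalities it produces is the same one the paper uses to prove Theorem \ref{thm: NE unique}; combined with the fact that the canonical equilibrium of Theorem \ref{thm: NE exists} has zero weight on non-robust features (via Corollary \ref{lemma: w_i = 0 when mu_i = 0}) and with uniqueness of the SVM minimizer (Lemma \ref{lem: unique solution of SVM}), this forces $w^* = w_0^*$. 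The paper itself remarks after Theorem \ref{thm: NE unique} that this implication is available, but it presents the direct argument as the proof of Theorem \ref{thm: NE is robust}. Your route buys brevity and sidesteps entirely the delicate issue of non-unique optimal perturbations, concentrating all distributional reasoning in the existence construction; the paper's direct proof buys logical independence, establishing that \emph{any} equilibrium zeroes out non-robust weights from the equilibrium conditions alone, without reference to a particular constructed equilibrium. Both are valid; if you write yours up, state and prove the interchangeability lemma explicitly, since for this infinite, non-convex action space it is the only structural fact your argument relies on.
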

\begin{proof} (Sketch)
Let $(\delta^*,w^*)$ be a Nash equilibrium. Let $x_i$ be a non-robust feature. We will show that $w_i^* = 0$ by contradiction. Without loss of generality, let $w_i^* > 0$. Let the risk term in the SVM objective when $w_i = w$, $w_j = w_j^*$ for $j \neq i$ and $\delta = \delta^*$ be
\begin{align*}
    \mathcal{L}_i(w|w^*, \delta^*) &:= \mathbb{E}[l_i(x, y, w|w^*, \delta^*)].
\end{align*}
when 
\begin{align*}
    l_i(x, y, w|w^*, \delta^*) &= \max(0, 1 - y\sum_{j \neq i}w_j^*(x_j + \delta_j^*(x,y))) \\&\quad- yw(x_i + \delta_i^*(x,y)).
\end{align*}
We will show that when we set $w_i^* = 0$, the risk term does not increase, that is,
\begin{equation*}
    \mathcal{L}_i(w_i^*|w^*, \delta^*) \geq \mathcal{L}_i(0|w^*, \delta^*).
\end{equation*}
We use $l_i(x,y,w)$ to refer to $ l_i(x,y, w|w^*, \delta^*)$ for the rest of this proof. Considering each point $(x,y)$, we want to bound the difference
\begin{equation*}
    l_i(x,y,w_i^*) - l_i(x,y,0).
\end{equation*}
The key idea is to utilize the optimality of $\delta^*(x,y)$. From Lemma \ref{lem: optimal perturbation short}, we know that when the worst-case perturbation leads to a positive loss, the perturbation $\delta^*(x,y)$ must be the worst-case perturbation. If
\begin{equation*}
    1 - y\sum_{j} w_j^* x_j + \varepsilon \sum_{j} |w_j^*| > 0,
\end{equation*}
we must have
\begin{equation*}
    \delta_j^*(x,y) = -y\varepsilon \operatorname{sign}(w_j),
\end{equation*}
for all $j$. For example, assume this is the case we have 2 sub-cases\\

\noindent\textbf{Case 1.1: }
\begin{equation*}
    1 - y\sum_{j\neq i} w_j^* x_j + \varepsilon \sum_{j \neq i} |w_j^*| \geq 0.
\end{equation*}\\
In this case, we have
\begin{align*}
    &l_i(x,y,w_i^*) - l_i(x,y,0) \\
    &= \max(0, 1 - y\sum_{j }w_j^*(x_j + \delta_j^*(x,y)) ) \\&\quad- \max(0, 1 - y\sum_{j \neq i}w_j^*(x_j + \delta_j^*(x,y)) ) \\
    &= \max(0, 1 - y\sum_{j} w_j^* x_j + \varepsilon \sum_{j} |w_j^*| ) \\&\quad- \max(0, 1 - y\sum_{j \neq i} w_j^* x_j + \varepsilon \sum_{j \neq i} |w_j^*| ) \\
    &= (1 - y\sum_{j} w_j^* x_j + \varepsilon \sum_{j} |w_j^*|) \\&\quad- (1 - y\sum_{j \neq i} w_j^* x_j + \varepsilon \sum_{j \neq i} |w_j^*|) \\
    &= -yw_i^*x_i + \varepsilon |w_i^*|.
\end{align*} \\
We observe that as $x_i$ is a non-robust feature, we have
\begin{equation}
\label{eq: lower bound case1.1}
    \mathbb{E}[-yw_i^*x_i + \varepsilon |w_i^*|] \geq |w_i^*|(\varepsilon - \mu_i) > 0.
\end{equation}

\noindent\textbf{Case 1.2: }
\begin{equation*}
    1 - y\sum_{j\neq i} w_j^* x_j + \varepsilon \sum_{j \neq i} |w_j^*| < 0.
\end{equation*}\\
In this case, we have
\begin{align*}
    &l_i(x, y,w_i^*) - l_i(x,y, 0) \\&= \max(0, 1 - y\sum_{j }w_j^*(x_j + \delta_j^*(x,y)) ) \\&\quad- \max(0, 1 - y\sum_{j \neq i}w_j^*(x_j + \delta_j^*(x,y)) ) \\
    &= \max(0, 1 - y\sum_{j} w_j^* x_j + \varepsilon \sum_{j} |w_j^*| ) \\&\quad- \max(0, 1 - y\sum_{j \neq i} w_j^* x_j + \varepsilon \sum_{j \neq i} |w_j^*| ) \\
    &= (1 - y\sum_{j} w_j^* x_j + \varepsilon \sum_{j} |w_j^*|) - 0 \\
    &\geq 0.
\end{align*}\\
From Equation \eqref{eq: lower bound case1.1}, the lower bound in this Case 1.1 is positive in expectation. However, we note that the condition depends on the rest of the feature $x_j$ when $j\neq i$, so we can't just take the expectation.
In addition, there is also a case when the optimal perturbation is not unique, that is when
\begin{equation*}
    1 - y\sum_{j} w_j^* x_j + \varepsilon \sum_{j} |w_j^*| \leq 0.
\end{equation*}
We handle these challenges in the full proof in Appendix \ref{appendix: Nash equilibrium is robust}. Once we show that the risk term in the utility when $w^*_i \neq 0$ is no better than when $w^*_i = 0$, we note that the regularization term when $w^*_i = 0$ is higher, 
\begin{equation*}
    \frac{\lambda}{2}\sum_{j}(w_j^*)^2 > \frac{\lambda}{2}\sum_{j \neq i} (w_j^*)^2.
\end{equation*}
Therefore, we can reduce the SVM objective by setting $w_i^* = 0$. This contradicts the optimality of $w^*$. By contradiction, we can conclude that if a feature $i$ is not robust, then $w_i^* = 0$.
\end{proof}
Furthermore, we can show that any two Nash equilibria output the same model. 
\begin{theorem}[Uniqueness of Nash equilibrium]
\label{thm: NE unique}
Let $(\delta_u, u), (\delta_v, v)$ be Nash equilibrium of the SLAR game then we have $u = v$.
\end{theorem}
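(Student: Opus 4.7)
The plan is to use the standard saddle-point/minimax chain of inequalities for two Nash equilibria of a zero-sum game, combined with the uniqueness of the SVM minimizer (Lemma \ref{lem: unique solution of SVM}) that follows from the strong convexity imparted by the $\frac{\lambda}{2}\|w\|_2^2$ regularizer. The appeal of this approach is that it bypasses any need to reason about the structure of $\delta_u$ vs.\ $\delta_v$ on non-robust features or on points whose worst-case loss is already zero (where the optimal perturbation is not unique); uniqueness of the column player's action falls out purely from the Nash inequalities.

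Concretely, first I would write the four Nash inequalities corresponding to $(\delta_u,u)$ and $(\delta_v,v)$: namely that $u$ is a column best response to $\delta_u$, that $v$ is a column best response to $\delta_v$, that $\delta_u$ is a row best response to $u$, and that $\delta_v$ is a row best response to $v$. Chaining them in the usual fashion yields
\begin{equation*}
U_{\operatorname{row}}(\delta_u,u) \leq U_{\operatorname{row}}(\delta_u,v) \leq U_{\operatorname{row}}(\delta_v,v) \leq U_{\operatorname{row}}(\delta_v,u) \leq U_{\operatorname{row}}(\delta_u,u),
\end{equation*}
where the first and third inequalities use column optimality of $u$ and $v$, respectively, and the second and fourth use row optimality of $\delta_v$ and $\delta_u$, respectively. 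Since the chain closes back on itself, all four quantities coincide. In particular $U_{\operatorname{row}}(\delta_u,u) = U_{\operatorname{row}}(\delta_u,v)$, so $v$ also attains the column minimum of $U_{\operatorname{row}}(\delta_u,\cdot)$.

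To conclude $u=v$, I would observe that for any fixed perturbation function $\delta$, the map $w \mapsto U_{\operatorname{row}}(\delta,w)$ is exactly the soft-SVM objective on the perturbed distribution $(x+\delta(x,y),y)$ with regularizer $\frac{\lambda}{2}\|w\|_2^2$. This objective is strongly convex in $w$, so it has a unique minimizer; applying Lemma \ref{lem: unique solution of SVM} to the perturbed distribution induced by $\delta_u$ forces $u=v$.

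The argument is essentially a one-liner once the Nash inequalities are written down, so I do not expect any serious obstacle. The only subtlety worth flagging is that the symmetric argument gives $U_{\operatorname{row}}(\delta_v,v) = U_{\operatorname{row}}(\delta_v,u)$, which also implies $u=v$ via the unique SVM minimizer for $\delta_v$; either direction suffices, and neither requires us to claim that $\delta_u = \delta_v$, which in general is false because perturbations on non-robust features and on points beyond the margin are underdetermined (cf.\ Theorem \ref{thm: NE is robust} and Lemma \ref{lem: optimal perturbation short}).
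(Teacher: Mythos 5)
Your proposal is correct and follows essentially the same route as the paper: the same closed cycle of four Nash inequalities forcing $U_{\operatorname{row}}(\delta_u,u)=U_{\operatorname{row}}(\delta_u,v)$, followed by the uniqueness of the strongly convex SVM minimizer on the $\delta_u$-perturbed distribution (Lemma \ref{lem: unique solution of SVM}) to conclude $u=v$. Your remark that $\delta_u=\delta_v$ need not hold is a nice clarification but not needed for the argument.
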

\begin{proof}
Let $(\delta_u, u), (\delta_v, v)$ be Nash equilibrium of the SLAR game. We know that
\begin{align*}
    U_{\operatorname{row}}(\delta_v, u) \leq U_{\operatorname{row}}(\delta_u,u) \leq U_{\operatorname{row}}(\delta_u, v),
\end{align*}
and
\begin{align*}
    U_{\operatorname{row}}(\delta_u, v) \leq U_{\operatorname{row}}(\delta_v,v) \leq U_{\operatorname{row}}(\delta_v, u).
\end{align*}
From these inequalities, we must have
\begin{align*}
    U_{\operatorname{row}}(\delta_v, u) = U_{\operatorname{row}}(\delta_u,u) =
    U_{\operatorname{row}}(\delta_u, v) = U_{\operatorname{row}}(\delta_v,v).
\end{align*}
From Lemma \ref{lem: unique solution of SVM}, we know that for a given perturbation function, we have a unique solution of the SVM objective on the perturbed data. Therefore,
\begin{align*}
    U_{\operatorname{row}}(\delta_u,u) =
    U_{\operatorname{row}}(\delta_u, v) 
\end{align*}
implies that we must have $u = v$.
\end{proof}
Since we have a construction for a Nash equilibrium in Theorem \ref{thm: NE exists}, Theorem \ref{thm: NE unique} implies that any Nash equilibrium will have the same model parameter as in the construction. This also directly implies that any Nash equilibrium is a robust classifier.

\subsection{Optimal Adversarial Training}
We note that in the SLAR game, we have a closed-form solution of worst-case perturbations in terms of model parameters,
\begin{align*}
    \delta^*(x,y) = -y\varepsilon\operatorname{sign}(w).
\end{align*}
We can substitute this to the minimax objective 
\begin{align*}
    \min_w \max_{\delta} U_{\operatorname{row}}(\delta, w),
\end{align*}
and directly solve for a Nash equilibrium. The objective is then reduced to a minimization objective
\begin{align*}
    \min_w \mathbb{E}[\max(0, 1 - yw^\top (x -y\varepsilon\operatorname{sign}(w)))] + \frac{\lambda}{2}||w||^2_2.
\end{align*}
We denote this as Optimal Adversarial Training (OAT). We note that \citep{tsipras2018robustness} analyze OAT when the data distribution is Gaussian distributions and show that directly solving this objective lead to a robust model. We further show that OAT also leads to a robust model for any SLAR game.
\begin{theorem}[Optimal adversarial training leads to a robust model]
In the SLAR game, let $w^* = [w_1^* , \dots, w_d^*]$ be a solution of OAT then for a non-robust feature $x_i$, we have $w_i^* = 0$.
\label{thm: oracle adversarial training is robust}
\end{theorem}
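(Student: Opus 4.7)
My plan is to reduce the claim to the already-established Theorem \ref{thm: NE is robust} by identifying OAT with the minimax value of the SLAR game. First, I would observe that the OAT objective equals $F(w) := \max_\delta U_{\operatorname{row}}(\delta, w)$. Indeed, the pointwise maximum of the inner hinge loss over $\mathcal{B}(\varepsilon)$ is attained at the deterministic, measurable worst-case perturbation $-y\varepsilon\operatorname{sign}(w)$ from Lemma \ref{possible perturb}, so exchanging the sup over perturbation functions with the expectation yields
\begin{equation*}
    F(w) = \mathbb{E}[\max(0, 1 - yw^\top x + \varepsilon\|w\|_1)] + \tfrac{\lambda}{2}\|w\|_2^2,
\end{equation*}
which is exactly the OAT objective. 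Hence any OAT optimum $w^*$ attains $\min_w \max_\delta U_{\operatorname{row}}(\delta, w)$.

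Next, I would invoke a Nash equilibrium $(\delta^{\mathrm{NE}}, w^{\mathrm{NE}})$ supplied by Theorem \ref{thm: NE exists}. Because every Nash equilibrium is a saddle point, the standard saddle-point argument forces
\begin{equation*}
    U_{\operatorname{row}}(\delta^{\mathrm{NE}}, w^{\mathrm{NE}}) = \min_w \max_\delta U_{\operatorname{row}}(\delta, w) = F(w^*),
\end{equation*}
and in particular $U_{\operatorname{row}}(\delta^{\mathrm{NE}}, w^{\mathrm{NE}}) = \min_w U_{\operatorname{row}}(\delta^{\mathrm{NE}}, w)$. Chaining the trivial inequalities
\begin{equation*}
    U_{\operatorname{row}}(\delta^{\mathrm{NE}}, w^{\mathrm{NE}}) = \min_w U_{\operatorname{row}}(\delta^{\mathrm{NE}}, w) \leq U_{\operatorname{row}}(\delta^{\mathrm{NE}}, w^*) \leq F(w^*) = U_{\operatorname{row}}(\delta^{\mathrm{NE}}, w^{\mathrm{NE}})
\end{equation*}
collapses every inequality into an equality, so $w^*$ is a minimizer of the SVM objective on the perturbed distribution $\mathcal{D} + \delta^{\mathrm{NE}}$. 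Lemma \ref{lem: unique solution of SVM} then gives $w^* = w^{\mathrm{NE}}$, and Theorem \ref{thm: NE is robust} delivers $w_i^* = 0$ for every non-robust feature.

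The main obstacle, in my view, is the sup/expectation interchange in the first step; it is routine because the perturbation function is chosen independently at each $(x,y)$ and the pointwise optimizer is measurable, but it should be stated carefully. A more hands-on alternative, should one prefer to bypass the minimax detour, is to repeat the contradiction argument of Theorem \ref{thm: NE is robust} directly on $F$: assuming $w_i^* \neq 0$ for a non-robust $i$ and zeroing out $w_i^*$, one performs the analogous two-case split on the hinge's regimes, using $\mathbb{E}[-yw_i^*x_i] + \varepsilon|w_i^*| = |w_i^*|(\varepsilon - \operatorname{sign}(w_i^*)\mu_i) \geq 0$ to show the risk does not increase while the regularizer strictly drops by $\tfrac{\lambda}{2}(w_i^*)^2$, contradicting optimality of $w^*$.
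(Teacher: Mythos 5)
Your argument is correct, but it takes a genuinely different route from the paper's. The paper proves this theorem directly: it substitutes the worst-case perturbation to obtain the reduced objective $\mathbb{E}[\max(0, 1 - yw^{\top}x + \varepsilon\|w\|_1)] + \tfrac{\lambda}{2}\|w\|_2^2$, assumes $w_i^* \neq 0$ for a non-robust $i$, and uses Jensen's inequality together with the independence of features (conditioning on $y$ and integrating out $x_i$, where $\mathbb{E}_{x_i|y}[\varepsilon|w_i| - yw_ix_i] \geq |w_i|(\varepsilon - |\mu_i|) \geq 0$) to show that zeroing $w_i^*$ cannot increase the risk while it strictly decreases the regularizer --- exactly the ``hands-on alternative'' you sketch at the end. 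Your primary route instead identifies the OAT objective with $F(w) = \sup_{\delta} U_{\operatorname{row}}(\delta, w)$, uses the saddle-point property of the Nash equilibrium from Theorem \ref{thm: NE exists} to pin the minimax value, and collapses the chain $\min_w U_{\operatorname{row}}(\delta^{\mathrm{NE}}, w) \leq U_{\operatorname{row}}(\delta^{\mathrm{NE}}, w^*) \leq F(w^*)$ into equalities, so that Lemma \ref{lem: unique solution of SVM} forces $w^* = w^{\mathrm{NE}}$ and Theorem \ref{thm: NE is robust} finishes. The steps check out: the sup/expectation interchange is legitimate since the perturbation function acts pointwise and the pointwise maximizer from Lemma \ref{possible perturb} is measurable, the saddle-point identities are standard, and Lemma \ref{lem: unique solution of SVM} applies to the perturbed distribution $\mathcal{D} + \delta^{\mathrm{NE}}$ just as the paper itself uses it in Theorem \ref{thm: NE exists}. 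What your route buys is economy and a strictly stronger conclusion --- the OAT solution is not merely robust but \emph{equal} to the (unique) Nash equilibrium model, a fact the paper never states explicitly; what it costs is self-containment, since it leans on Theorems \ref{thm: NE exists} and \ref{thm: NE is robust} and on Lemma \ref{lem: unique solution of SVM}, whereas the paper's direct argument needs only elementary convexity and the independence assumption.
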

We defer the proof to Appendix \ref{appendix: OAT is robust}.




\section{EXPERIMENTS}
We illustrate that the theoretical phenomenon also occurs in practice. We provide an experiment comparing the convergence and robustness of AT and OAT on a synthetic dataset and MNIST dataset.
\subsection{Synthetic dataset}
Though our theoretical finding works for much broader data distributions, the construction of our experimental setup is as follows
\begin{enumerate}
    \item $y {\sim} \operatorname{unif}\{-1,+1\}$,
    \item $
   x_{1}=\left\{\begin{array}{ll}
+y, & \text { w.p. } p; \\
-y, & \text { w.p. } 1-p,
\end{array}\right.
$
    \item $x_j|y \sim \mathcal{N}(y\mu, \sigma^2).$
\end{enumerate}
We choose parameters $d = 2,000$, $p = 0.7$, $\mu = 0.01$, $\sigma = 0.01$ and set the perturbation budget $\varepsilon = 0.02$. This is an example of a distribution from Definition \ref{def: distribution for AT}. The size of the training and testing data is 10,000 and 1,000, respectively. We use SGD with an Adam optimizer \citep{kingma2014adam}  to train models. For more details, we refer to Appendix \ref{appendix: synthetic experiment}

\begin{figure}[ht]
    \centering
    \includegraphics[width =\columnwidth]{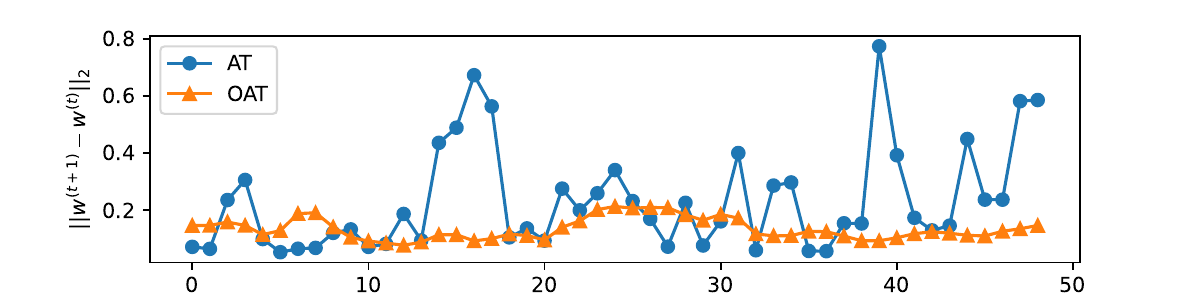}
    \caption{ $||w^{(t+1)} - w^{(t)}||_2$ of a linear model trained with AT and OAT.}
    \label{fig: SVM diff}
\end{figure}
\textbf{Non-convergence of Adversarial Training. } First, we calculate the difference between weight $||w^{(t+1)} - w^{(t)}||_2$ for each timestep $t$. We can see that for a model trained with AT, $||w^{(t+1)} - w^{(t)}||_2$ is fluctuating while the value from a model trained with OAT is more stable ( Figure \ref{fig: SVM diff}). 

\textbf{Robustness. } We investigate the robustness of each strategy. Since our model is linear, it is possible to calculate the distance between a point and the model's decision boundary. If the distance exceeds the perturbation budget $\varepsilon$ then we say that the point is certifiably robust.
We found that while the model trained with AT achieves a perfect standard accuracy, the model always achieves zero robust accuracy. 

\begin{figure}[ht]
    \centering
    \includegraphics[width =\columnwidth]{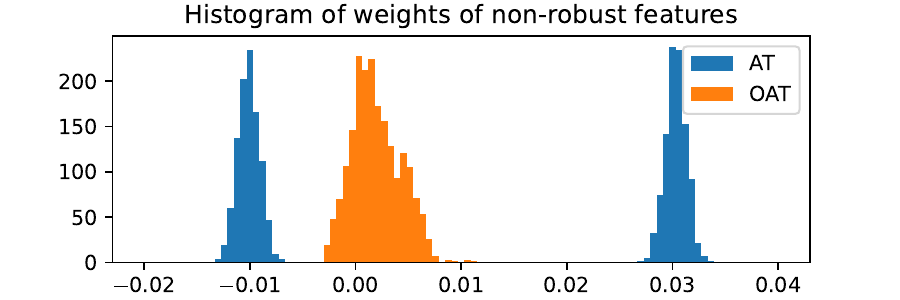}
    \caption{Weights of non-robust features at time $ t= 50$.}
    \label{fig: SVM norm}
\end{figure}

\begin{figure*}[t]
     \centering
     \begin{minipage}[b]{0.33\textwidth}
         \centering
         \includegraphics[width=\textwidth]{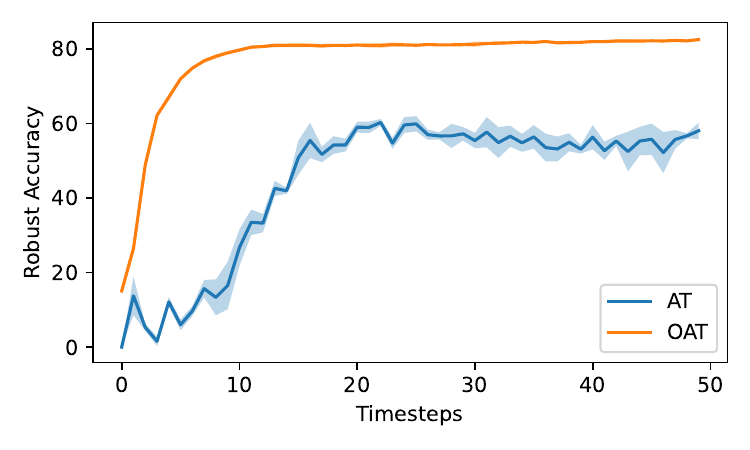}
     \end{minipage}
     \hfill
     \begin{minipage}[b]{0.33\textwidth}
         \centering
         \includegraphics[width=\textwidth]{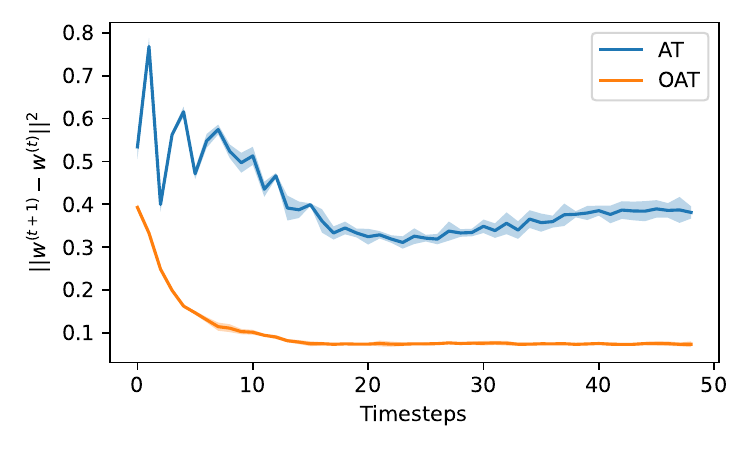}
     \end{minipage}
     \hfill
          \begin{minipage}[b]{0.33\textwidth}
         \centering
         \includegraphics[width=\textwidth]{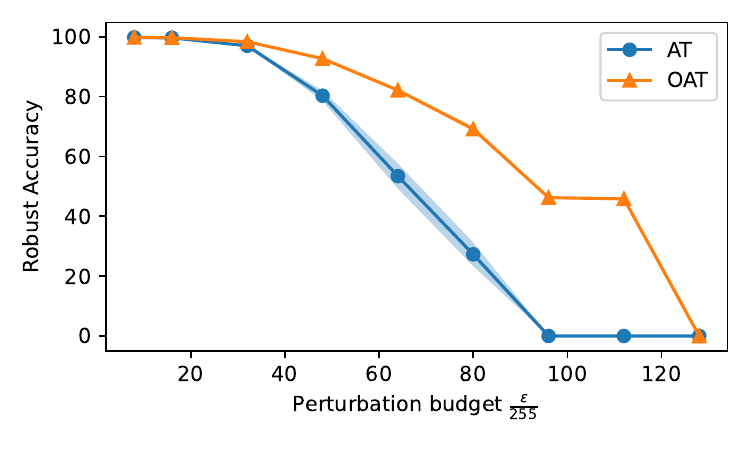}
     \end{minipage}

        \caption{Robust accuracy (left) and weight difference (mid) of AT and OAT on a binary classification task between $0,1$ on MNIST dataset when $\varepsilon = \frac{64}{255}$, and robust accuracy as we vary $\varepsilon$ from $\frac{8}{255}$ to $\frac{128}{255}$ (right). }
        \vspace{-0.1cm}
        \label{fig:1}
\end{figure*}
One explanation is Theorem \ref{lemma: adversarial training uses non-robust feature (general)}, which states that AT can lead to a model that puts non-trivial weight on non-robust features.
In our dataset, feature $j$ for $j=2,\dots,d+1$ are non-robust but predictive so that if our model relies on these features, we can have high standard accuracy but low robust accuracy. 
We can see that a model trained with AT puts more weight on non-robust features (see Figure \ref{fig: SVM norm}) and puts a higher magnitude on the positive weight which help the model to achieve $100$ percent standard accuracy.  
On the other hand, the model trained with OAT achieves $70$ percent standard accuracy and robust accuracy. The number is consistent with our construction, where we assume that the robust feature is correct with probability $0.7$. In addition, we can see that OAT leads to a model that puts a much lower weight on the non-robust features (see Figure \ref{fig: SVM norm}). This is consistent with our theoretical finding that OAT leads to a robust classifier. We note that the weights are not exactly zero because we use SGD to optimize the model.

\subsection{MNIST dataset}
\textcolor{black}{
We run experiments on a binary classification task between digits $0$ and $1$ on MNIST dataset \citep{lecun1998gradient}. The training and testing data have size $8,000$ and $1,500$, respectively. We train a linear classifier with AT and OAT for $50$ timesteps.
We use Gradient Descent with Adam optimizer and learning rate $0.01$ to update our model parameter. At each timestep, for both OAT and AT, we update the model parameter with 5 gradient steps.}

\textcolor{black}{
\textbf{Non-convergence of Advesarial Training} We report the difference between weight $||w^{(t+1)} - w^{(t)}||^2$ in Figure \ref{fig:1} (mid). The weight difference of AT fluctuates around $0.3$, almost three times the weight difference of OAT.}

\textcolor{black}{\textbf{Robustness.} We report the robust accuracy at each timestep $t$ when the perturbation budget is $\varepsilon = \frac{64}{255}$ in Figure \ref{fig:1} (left). We see that OAT leads to a higher robust accuracy and improved convergence than AT. For instance, at timestep $10$, the robust accuracy of a model trained with OAT reaches around $80 \%$ while the value for AT is at $20 \%$.
}
\begin{figure}[ht]
    \centering
    \includegraphics[width =0.7\columnwidth]{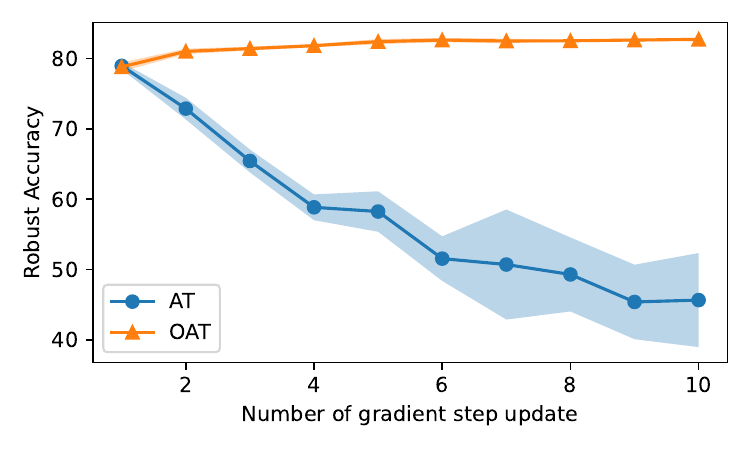}
    \caption{Robust accuracy of AT and OAT with varying number of gradient steps.}
    \label{fig: grad step}
\end{figure}

\textcolor{black}{
\textbf{Ablations.} We report robust accuracy  as we vary $\varepsilon$ from $\frac{8}{255}$ to $\frac{128}{255}$ in Figure \ref{fig:1} (right). 
We note that increasing the perturbation budget is equivalent to increasing the proportion of non-robust features. As the perturbation budget grows, robust accuracy of both model drop significantly and reach $0 \%$ when  $\varepsilon = \frac{128}{255}$. 
We observe that OAT is more resistant to large perturbation budget than AT. 
For example, when $\varepsilon = \frac{96}{255}$, a model trained with OAT has a robust accuracy around $50 \%$ while the robust accuracy is $0 \%$ for AT. In addition, we report robust accuracy as we vary the number of gradient steps we take to update the model parameter at each time step (default is $5$) when $\varepsilon = \frac{64}{255}$ in Figure \ref{fig: grad step}. We found that as we take more gradient step, the robust accuracy of AT drop significantly while the robust accuracy of OAT increases slightly. We note that more gradient steps implies that the model parameter is closer to the optimal parameter given adversarial examples at each timestep which is the setting that we studied. Surprisingly, when taking only $1$ gradient step, the robust accuracy of AT and OAT are similar but with an additional gradient step, the robust accuracy of AT drop sharply by $10\%$ and by almost $20\%$ with two additional gradient steps.
}
\section{DISCUSSION}
In this work, we study the dynamics of adversarial training from a SLAR game perspective. Our framework is general since the SLAR game does not make any assumption on the data distribution except that it has the symmetric means and the features given the label are independent of others. We find that iteratively training a model on adversarial examples does not suffice for a robust model, as the model manages to pick up signals from non-robust features at every epoch. One factor that leads to this phenomenon is a worst-case perturbation, which shifts the distribution across the decision boundary far enough so that the perturbed feature is predictive. On the other hand, we prove the existence, uniqueness, and robustness properties of a Nash equilibrium in the SLAR game. We note that this game has an infinite action space and is not a convex-concave game. Surprisingly, a perturbation function that leads to a Nash equilibrium is not the worst-case perturbation but is the one that perturbs the non-robust features to have zero means. Intuitively, this prevents the model from relying on non-robust features. In contrast of AT, the worst-case perturbation in OAT leads to a robust model. We remark that in our analysis, we assume that each player can find the optimal solution of their strategy at every iteration. This may not hold in practice since PGD or SGD are usually deployed to optimize for the optimal solution. However, our analysis serves as a foundation for future research on adversarial robustness game when the current assumption does not hold. That is, studying OAT in the regime when we do not have access to the closed-form adversarial examples e.g. neural networks, is an interesting future direction. 

\subsubsection*{Acknowledgements}
This work was supported in part by NSF grants CCF-1910321, DARPA under cooperative agreements HR00112020003, and HR00112020006, and NSERC Discovery Grant RGPIN-2022-03215, DGECR-2022-00357.


\bibliographystyle{plainnat}
\bibliography{ref.bib}

\newpage
\clearpage
\newpage
\appendix
\onecolumn
\section{PROPERTY OF THE OPTIMAL SOLUTION OF THE SVM OBJECTIVE}
We first look at the relationship between the optimal solution of this SVM objective and the underlying data distribution.

\begin{lem}[Sign of the the optimal solution]
\label{lemma: sign w}
 Let $w^* = [w^*_1, \dots, w^*_d]$ be an optimal solution of the SVM objective \eqref{eq: svm objective}.  If each feature is independent of each other, for a feature $i$ with
\begin{equation*}
    \mathbb{E}[x_i | y = -1] \leq 0 \leq \mathbb{E}[x_i | y = 1],
\end{equation*}
we have $w_i^* \geq 0$. Conversely, if 
\begin{equation*}
    \mathbb{E}[x_i | y = 1] \leq 0 \leq \mathbb{E}[x_i | y = -1],
\end{equation*}
then we have $w_i^* \leq 0$.
\end{lem}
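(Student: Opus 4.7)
The strategy reduces the claim to a one-dimensional convex minimization problem by fixing all coordinates of $w^*$ other than the $i$-th. The sign of $w_i^*$ is then controlled by a one-sided derivative at $0$, whose sign is determined using the conditional independence $x_i \perp x_{-i} \mid y$ from Assumption~\ref{assum: independent features} together with the stated hypothesis on the conditional means. The converse claim follows by applying the positive case to the feature $-x_i$.

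Concretely, define
\[
\psi(a) \;:=\; \mathcal{L}(w^*_1, \dots, w^*_{i-1}, a, w^*_{i+1}, \dots, w^*_d).
\]
Because $\lambda > 0$, the quadratic penalty makes $\psi$ strictly convex in $a$, so $\psi$ has a unique minimizer, which, since $w^*$ minimizes $\mathcal{L}$, must be $w_i^*$. Writing $u := 1 - y\sum_{j \neq i} w_j^* x_j$ and $v := y x_i$, the hinge part of $\psi$ equals $\mathbb{E}[\max(0, u - av)]$. The structural key is $v \perp u \mid y$: $v$ is a function of $(x_i, y)$ and $u$ only of $(x_{-i}, y)$, while $x_i \perp x_{-i} \mid y$. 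Also, the hypothesis $\mathbb{E}[x_i \mid y = -1] \le 0 \le \mathbb{E}[x_i \mid y = 1]$ translates directly to $\mathbb{E}[v \mid y] \ge 0$ for both values of $y$.

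Next, compute the left derivative $\psi'(0^-)$. Pointwise, the left derivative at $a = 0$ of $a \mapsto \max(0, u - av)$ is $-v$ when $u > 0$ (the hinge is active in a left neighborhood of $0$), is $0$ when $u < 0$, and equals $-v\,\mathbbm{1}[v > 0]$ when $u = 0$ (since $-av > 0$ for $a \to 0^-$ precisely when $v > 0$). Dominated convergence then gives
\[
\psi'(0^-) \;=\; -\,\mathbb{E}\bigl[v\,\mathbbm{1}[u > 0]\bigr] \;-\; \mathbb{E}\bigl[v\,\mathbbm{1}[u = 0,\, v > 0]\bigr].
\]
Using $v \perp u \mid y$, one has $\mathbb{E}[v\,\mathbbm{1}[u > 0] \mid y] = \mathbb{E}[v \mid y]\cdot P(u > 0 \mid y) \ge 0$, so the first term is non-positive; the second is non-positive pointwise because $v\,\mathbbm{1}[v > 0] \ge 0$. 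Thus $\psi'(0^-) \le 0$.

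Finally, the subgradient inequality for convex $\psi$ gives $\psi(a) \ge \psi(0) + \psi'(0^-)\cdot a \ge \psi(0)$ whenever $a \le 0$, so the unique (by strict convexity) minimizer $w_i^*$ cannot be strictly negative, i.e., $w_i^* \ge 0$. The main technical subtlety is handling the potential kink at $u = 0$ rather than glibly writing $\psi'(0) = -\mathbb{E}[v\,\mathbbm{1}[u > 0]]$; working with a one-sided derivative handles this cleanly, after which the remainder is a routine convexity calculation.
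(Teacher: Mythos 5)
Your proof is correct, but it takes a genuinely different route from the paper's. The paper argues by contradiction directly at the level of function values: assuming $w_i^*<0$, it rewrites the hinge as $\max\bigl(y\sum_{j\neq i}w_j^*x_j-1,\,-yw_i^*x_i\bigr)+1-y\sum_{j\neq i}w_j^*x_j$ and applies Jensen's inequality to the inner conditional expectation over $x_i$ (legitimate by Assumption~\ref{assum: independent features}), showing $\mathbb{E}_{x_i|y}[-yw_i^*x_i]\geq 0$ and hence that the risk at $w_i^*$ is no smaller than the risk at $w_i=0$; the strict decrease of the $\ell_2$ penalty then contradicts optimality. You instead restrict to the one-dimensional strictly convex function $\psi(a)$ and show $\psi'(0^-)\leq 0$, which by the monotone-slope inequality forces the unique minimizer to be nonnegative. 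The two arguments exploit exactly the same structural facts (convexity of the hinge, the factorization $\mathbb{E}[v\,\mathbbm{1}[u>0]\mid y]=\mathbb{E}[v\mid y]\,P(u>0\mid y)$ from conditional independence, and $\mathbb{E}[yx_i\mid y]\geq 0$), but your first-order route obliges you to handle the kink of $\max(0,\cdot)$ at $u=0$ and to justify passing the one-sided derivative through the expectation (dominated convergence with dominating function $|x_i|$, integrable by Assumption~\ref{assum: data distribution}); you do both correctly. The paper's Jensen step buys freedom from any differentiation or kink analysis, while your version yields the explicit first-order optimality condition at $0$, which is arguably more mechanical and reusable. The reduction of the converse case to the positive case via $x_i\mapsto -x_i$ is also fine, and matches the paper's ``apply the same idea to the other case.''
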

\begin{proof}
Assume that
\begin{equation}
\label{eq: mean -1 < mean 1}
    \mathbb{E}[x_i | y = -1] \leq 0 \leq \mathbb{E}[x_i | y = 1].
\end{equation}
We will show that for an optimal weight $w^*$ with $w_i^* < 0$, we can reduce the SVM objective by setting $w_i^* = 0$. This would contradict with the optimality of $w^*$ and implies that we must have $w^*_i \geq 0$ instead. Recall that the SVM objective is given by
\begin{equation*}
    \mathbb{E}_{(x,y) \sim D}[\max(0, 1 - y\sum_{j = 1}^dw_jx_j)] + \frac{\lambda}{2}\sum_{j =1}^dw_j^2.
\end{equation*}
We denote the first term of the objective as the risk term and the second term as the regularization term. By Jensen's inequality, we know that
\begin{align*}
    &\mathbb{E}_{(x,y) \sim D}[\max(0, 1 - y\sum_{j=1}^dw_j^*x_j)] \\
    &= \mathbb{E}_{(x,y) \sim D}[\max(y\sum_{j\neq i}w_j^*x_j - 1, -yw_i^*x_i) + 1 - y\sum_{j\neq i}w_j^*x_j]\\
    &\geq \mathbb{E}_y\mathbb{E}_{x_j|y}[\max(y\sum_{j\neq i}w_j^*x_j - 1, \mathbb{E}_{x_i|y}[-yw_i^*x_i]) + 1 - y\sum_{j\neq i}w_j^*x_j].
\end{align*}
We can split the expectation between $x_i, x_j$ because each feature $i$ are independent of each other. From \eqref{eq: mean -1 < mean 1} and $w_i^* < 0$, we have
\begin{equation*}
    \mathbb{E}_{x_i|y}[-yw_i^*x_i | y = -1] = w_i^*\mathbb{E}_{x_i}[x_i | y = -1] \geq 0,
\end{equation*}
and 
\begin{equation*}
    \mathbb{E}_{x_i|y}[-yw_i^*x_i| y = 1] = -w_i^*\mathbb{E}_{x_i}[x_i | y = 1] \geq 0.
\end{equation*}
Therefore, 
\begin{equation*}
    \mathbb{E}_{x_i|y}[-yw_i^*x_i] \geq 0.
\end{equation*}
This implies that,
\begin{align*}
    &\mathbb{E}_{(x,y) \sim D}[\max(0, 1 - y\sum_{j=1}^dw_j^*x_j))]\\
    &\geq \mathbb{E}_y\mathbb{E}_{x_j|y}[\max(y\sum_{j\neq i}w_j^*x_j - 1, \mathbb{E}_{x_i|y}[-yw_i^*x_i]))+ 1 - y\sum_{j\neq i}w_j^*x_j] \\
    &\geq \mathbb{E}_y\mathbb{E}_{x_j|y}[\max(y\sum_{j\neq i}w_j^*x_j - 1, 0)) + 1 - y\sum_{j\neq i}w_j^*x_j] \\
    &= \mathbb{E}_y\mathbb{E}_{x_j|y}[\max(0,1 - y\sum_{j\neq i}w_j^*x_j)].
\end{align*}
The risk term in the SVM objective when $w_i^* < 0$ is no better than when $w_i^* = 0$. However, the regularization term is higher. 
\begin{equation*}
    \frac{\lambda}{2}\sum_{j=1}^d (w_j^*)^2 > \frac{\lambda}{2}\sum_{j \neq i} (w_j^*)^2.
\end{equation*}
Therefore, we can reduce the SVM objective by setting $w_i^* = 0$. Therefore, $w_i < 0$ can't be the optimal weight and we must have $w_i^* \geq 0$.  Similarly, we can apply the same idea to the other case.
\end{proof}

\begin{corr}
\label{lemma: w_i = 0 when mu_i = 0}
Let $w^* = [w^*_1, \dots, w^*_d]$ be an optimal solution of the SVM objective \eqref{eq: svm objective}.  If each feature are independent of each other, for a feature $i$ with
\begin{equation*}
    \mathbb{E}[x_i | y = -1] = 0 = \mathbb{E}[x_i | y = 1],
\end{equation*}
then we have $w_i^* = 0$.
\end{corr}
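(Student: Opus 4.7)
The plan is to apply the preceding lemma (Lemma \ref{lemma: sign w}) twice, once in each direction, and combine the two conclusions. Under the corollary's hypothesis we have $\mathbb{E}[x_i \mid y = -1] = \mathbb{E}[x_i \mid y = 1] = 0$, so both of the lemma's chained inequalities hold simultaneously as equalities: we trivially have $\mathbb{E}[x_i \mid y = -1] \le 0 \le \mathbb{E}[x_i \mid y = 1]$ and also $\mathbb{E}[x_i \mid y = 1] \le 0 \le \mathbb{E}[x_i \mid y = -1]$.

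Invoking the first case of Lemma \ref{lemma: sign w} with the hypothesis $\mathbb{E}[x_i \mid y = -1] \le 0 \le \mathbb{E}[x_i \mid y = 1]$ yields $w_i^* \ge 0$. Invoking the second case with $\mathbb{E}[x_i \mid y = 1] \le 0 \le \mathbb{E}[x_i \mid y = -1]$ yields $w_i^* \le 0$. Together these force $w_i^* = 0$, which is exactly the claim.

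There is no real obstacle here; the entire content of the corollary is that a two-sided version of the sign-conclusion of Lemma \ref{lemma: sign w} collapses to zero when both one-sided hypotheses are active. I would write the proof as a single short paragraph: state both instantiations of the lemma, deduce $0 \le w_i^* \le 0$, and conclude $w_i^* = 0$. The only thing worth being careful about is that the statement of Lemma \ref{lemma: sign w} uses non-strict inequalities, so the boundary case $\mathbb{E}[x_i \mid y] = 0$ is genuinely covered by both directions of the lemma and no additional argument (e.g., a limiting/perturbation argument) is needed.
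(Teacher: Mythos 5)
Your proof is correct and is exactly the intended argument: the paper states this as an immediate corollary of Lemma \ref{lemma: sign w} without writing out a proof, and the two-sided application you describe (both hypotheses hold when the conditional means are zero, giving $0 \le w_i^* \le 0$) is the only step needed.
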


\begin{lem}[Upper bound on the magnitude]
\label{lemma: magnitude of w*}
 Let $w^* = [w^*_1, \dots, w^*_d]$ be an optimal solution of the SVM objective \eqref{eq: svm objective}. We have
\begin{equation*}
    ||w^*||_2 \leq \sqrt{\frac{2}{\lambda}}.
\end{equation*}
\end{lem}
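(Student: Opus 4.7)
The plan is to bound $\|w^*\|_2$ by comparing the SVM objective at $w^*$ against its value at the trivial choice $w = 0$, exploiting the non-negativity of the hinge loss together with the optimality of $w^*$.

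First, I would evaluate the objective at $w = 0$. Since $\max(0, 1 - y\cdot 0) = 1$ pointwise, we get $\mathcal{L}(0) = 1 + 0 = 1$. Next, by the optimality of $w^*$, we have $\mathcal{L}(w^*) \leq \mathcal{L}(0) = 1$. That is,
\begin{equation*}
    \mathbb{E}_{(x,y) \sim \mathcal{D}}[\max(0, 1 - yw^{*\top}x)] + \frac{\lambda}{2}\|w^*\|_2^2 \leq 1.
\end{equation*}

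The hinge loss $\max(0, 1 - yw^{*\top}x)$ is non-negative pointwise, so its expectation is non-negative. Dropping that term from the left-hand side only decreases it, yielding $\frac{\lambda}{2}\|w^*\|_2^2 \leq 1$. Rearranging gives $\|w^*\|_2 \leq \sqrt{2/\lambda}$, which is the desired bound.

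There is no real obstacle here: the argument is the standard ``compare with the zero predictor'' trick used throughout regularized empirical risk minimization, and it applies verbatim because the loss is non-negative and $0$ is a feasible parameter vector. The only thing to verify is that $\mathcal{L}(0)$ is finite (it equals $1$), so that the inequality $\mathcal{L}(w^*) \leq \mathcal{L}(0)$ is not vacuous.
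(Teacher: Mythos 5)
Your proof is correct and is essentially identical to the paper's: both compare $\mathcal{L}(w^*)$ to $\mathcal{L}(0)=1$ and use the non-negativity of the hinge loss to isolate the regularization term. Nothing further to add.
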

\begin{proof}
Since, $w^*$ is an optimal solution of \eqref{eq: svm objective}, we have
\begin{equation*}
    \mathcal{L}( w^*) \leq \mathcal{L}( 0) = 1.
\end{equation*}
Since
\begin{equation*}
    \mathcal{L}( w^*) \geq 0 + \frac{\lambda}{2}||w^*||^2_2,
\end{equation*}
we have
\begin{align*}
    \frac{\lambda}{2}||w^*||^2_2 \leq 1 \\
    ||w^*||_2 \leq \sqrt{\frac{2}{\lambda}}.
\end{align*}
\end{proof}

\begin{lem}
\label{lemma: bound of E(max(0,X))}
Let $X$ be a random variable with mean $\mu$ and variance $\sigma^2$ then we have
\begin{equation*}
    \max(0, \mathbb{E}[X]) \leq \mathbb{E}[\max(0,X)] \leq \max(0, \mathbb{E}[X]) + \frac{1}{2}\sqrt{\operatorname{Var}(X)}.
\end{equation*}
\end{lem}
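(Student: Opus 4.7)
The plan is to handle the two inequalities separately, using Jensen's inequality for the lower bound and the identity $\max(0,X) = (X+|X|)/2$ together with a variance bound on $\mathbb{E}[|X|]$ for the upper bound.

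For the lower bound, I would simply note that $t \mapsto \max(0,t)$ is convex, so Jensen's inequality gives
\begin{equation*}
\mathbb{E}[\max(0,X)] \geq \max(0, \mathbb{E}[X]).
\end{equation*}
This is the easy direction and requires no computation beyond invoking convexity.

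For the upper bound, the key observation is the algebraic identity $\max(0,X) = \tfrac{1}{2}(X + |X|)$, which lets us rewrite
\begin{equation*}
\mathbb{E}[\max(0,X)] = \tfrac{1}{2}\bigl(\mathbb{E}[X] + \mathbb{E}[|X|]\bigr).
\end{equation*}
Writing $\mu := \mathbb{E}[X]$, I would then bound $\mathbb{E}[|X|]$ by the triangle inequality as $\mathbb{E}[|X|] \leq |\mu| + \mathbb{E}[|X-\mu|]$, and apply Jensen's inequality (equivalently, Cauchy--Schwarz) to the concave function $\sqrt{\cdot}$ to get $\mathbb{E}[|X-\mu|] \leq \sqrt{\mathbb{E}[(X-\mu)^2]} = \sqrt{\operatorname{Var}(X)}$. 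Plugging this back in gives
\begin{equation*}
\mathbb{E}[\max(0,X)] \leq \tfrac{1}{2}\bigl(\mu + |\mu| + \sqrt{\operatorname{Var}(X)}\bigr) = \max(0,\mu) + \tfrac{1}{2}\sqrt{\operatorname{Var}(X)},
\end{equation*}
where the last equality uses $\mu + |\mu| = 2\max(0,\mu)$.

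There is no serious obstacle here; the only thing to be careful about is resisting the temptation to do a case split on the sign of $\mu$, since the identity $\mu + |\mu| = 2\max(0,\mu)$ unifies both cases cleanly. The proof is essentially a two-line computation once the identity $\max(0,X) = \tfrac{1}{2}(X+|X|)$ is in place.
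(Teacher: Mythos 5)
Your proof is correct and follows essentially the same route as the paper's: Jensen's inequality for the lower bound, and the identity $\max(0,X)=\tfrac{1}{2}(X+|X|)$ for the upper bound. The only cosmetic difference is that you center first and bound $\mathbb{E}[|X-\mu|]\leq\sqrt{\operatorname{Var}(X)}$, whereas the paper bounds $\mathbb{E}[|X|]\leq\sqrt{\mathbb{E}[X^2]}=\sqrt{\operatorname{Var}(X)+\mu^2}\leq\sqrt{\operatorname{Var}(X)}+|\mu|$; both yield the same final estimate.
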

\begin{proof}
First, we know that $\max(0,x)$ is convex and by Jensen's inequality we have
\begin{align*}
    \mathbb{E}[\max(0,X)] \geq \max(0, \mathbb{E}[X]).
\end{align*}
We also know that $x^2$ is convex, by Jensen's inequality, we have
\begin{align*}
    \mathbb{E}[X^2] \geq \ \mathbb{E}[|X|]^2.
\end{align*}
Next, we observe that $\max(0,x) = \frac{1}{2}( x + |x|)$,
\begin{align*}
    \mathbb{E}[\max(0,X)] &=  \mathbb{E}[\frac{1}{2}(X + |X|)]\\
    &\leq \frac{1}{2}(\mathbb{E}[(X)] + \sqrt{\mathbb{E}[X^2]})\\
    &= \frac{1}{2}(\mathbb{E}[(X)] + \sqrt{\operatorname{Var}(X) + \mathbb{E}[X]^2 })\\
    &\leq \frac{1}{2}(\mathbb{E}[(X)] + \sqrt{\operatorname{Var}(X)} + |\mathbb{E}[X]|\\
    &= \max(0, \mathbb{E}[X]) + \frac{1}{2}\sqrt{\operatorname{Var}(X)}.
\end{align*}
\end{proof}

\begin{lem}[Lower bound on the magnitude]
\label{lem: lower bound on the magnitude}
 Let $w^* = [w^*_1, \dots, w^*_d]$ be an optimal solution of the SVM objective \eqref{eq: svm objective}. Let each feature $x_i$ has mean and variance as follows
 \begin{equation*}
     \mathbb{E}[x_i|y] = y\mu_i, \operatorname{Var}(x_i|y) = \sigma_i^2,
 \end{equation*}
and the feature are independent of each other then
\begin{equation*}
    ||w^*||_2 \geq \frac{1}{||\mu||_2}(1 - \frac{1}{2}(\frac{\Bar{\sigma}_\mu}{||\mu||_2} + \frac{\lambda}{2||\mu||_2^2})),
\end{equation*}
where
\begin{equation*}
    \mu = [\mu_1, \mu_2, \dots, \mu_d],\ \Bar{\sigma}_\mu^2 = \frac{\sum_{i=1}^d \mu_i^2\sigma_i^2}{\sum_{i=1}^d \mu_i^2}.
\end{equation*}
\end{lem}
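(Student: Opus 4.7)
The plan is to exploit the optimality of $w^*$ by comparing $\mathcal{L}(w^*)$ against a carefully chosen witness of the form $w_c = c\mu$, upper-bounding the hinge risk of this witness via Lemma \ref{lemma: bound of E(max(0,X))}, and then extracting a lower bound on $\|w^*\|_2$ by combining Jensen's inequality with Cauchy--Schwarz.

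First, I would choose the scalar $c = 1/\|\mu\|_2^2$ so that the witness $w_c = c\mu$ satisfies $\mathbb{E}[1 - y w_c^\top x] = 1 - c\sum_i \mu_i^2 = 0$. Using Assumption \ref{assum: independent features} (conditional independence of features given $y$) together with $y^2 = 1$, the law of total variance gives
\begin{equation*}
\operatorname{Var}(1 - y w_c^\top x) = c^2 \sum_i \mu_i^2 \sigma_i^2 = \frac{\bar{\sigma}_\mu^2}{\|\mu\|_2^2}.
\end{equation*}
The upper half of Lemma \ref{lemma: bound of E(max(0,X))} then yields $\mathbb{E}[\max(0, 1 - y w_c^\top x)] \leq \bar{\sigma}_\mu/(2\|\mu\|_2)$, and since $\tfrac{\lambda}{2}\|w_c\|_2^2 = \lambda/(2\|\mu\|_2^2)$, we obtain $\mathcal{L}(w_c) \leq \bar{\sigma}_\mu/(2\|\mu\|_2) + \lambda/(2\|\mu\|_2^2)$.

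For the matching lower bound on $\mathcal{L}(w^*)$, I would use the lower half of Lemma \ref{lemma: bound of E(max(0,X))} (Jensen applied to the convex function $\max(0,\cdot)$) to get $\mathbb{E}[\max(0, 1 - y w^{*\top}x)] \geq \max(0, 1 - w^{*\top}\mu)$, using that $\mathbb{E}[y x_i] = \mu_i$. If $w^{*\top}\mu \geq 1$, then Cauchy--Schwarz already gives $\|w^*\|_2 \geq 1/\|\mu\|_2$ and the conclusion follows trivially. Otherwise, combining the two bounds via the optimality $\mathcal{L}(w^*) \leq \mathcal{L}(w_c)$ yields
\begin{equation*}
1 - w^{*\top}\mu \;\leq\; \frac{\bar{\sigma}_\mu}{2\|\mu\|_2} + \frac{\lambda}{2\|\mu\|_2^2},
\end{equation*}
and one more application of Cauchy--Schwarz ($w^{*\top}\mu \leq \|w^*\|_2 \|\mu\|_2$) followed by dividing through by $\|\mu\|_2$ produces the claimed lower bound (up to the precise constant in the $\lambda$ term, which can be absorbed into a slight re-scaling of the witness $c$ if needed).

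The main obstacle I anticipate is the variance computation: one must be careful when conditioning on $y$ before summing over features, using $y^2 = 1$ and conditional independence to kill cross terms and collapse $\mathbb{E}[\operatorname{Var}(\cdot\mid y)] + \operatorname{Var}(\mathbb{E}[\cdot\mid y])$ to the clean $\sum_i \mu_i^2 \sigma_i^2$; skipping this step risks an incorrect dependence on $\|\mu\|_2$ inside $\bar{\sigma}_\mu$. Once the witness is pinned down so that $\mathbb{E}[1 - yw_c^\top x] = 0$, both the upper bound on $\mathcal{L}(w_c)$ and the Jensen/Cauchy--Schwarz lower bound on $\mathcal{L}(w^*)$ are mechanical, and the entire argument reduces to a single application of the ``optimality beats any witness'' principle.
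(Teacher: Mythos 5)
Your proposal is correct and follows essentially the same route as the paper's proof: the identical witness $w' = \mu/\|\mu\|_2^2$, the same upper bound on $\mathcal{L}(w')$ via Lemma \ref{lemma: bound of E(max(0,X))}, the same Jensen/Cauchy--Schwarz lower bound on $\mathcal{L}(w^*)$, and the same ``optimality beats the witness'' comparison (the paper merely phrases it as a contradiction rather than a direct derivation). Your parenthetical about the precise constant in the $\lambda$ term is well taken --- the honest computation yields $\lambda/(2\|\mu\|_2^2)$ rather than the $\lambda/(4\|\mu\|_2^2)$ implicit in the lemma's statement --- but this matches a minor constant slip in the paper's own final line and does not affect the argument.
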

\begin{proof}
We will prove this by contradiction. Let $w^*$ be an optimal solution with
\begin{equation}
\label{eq: condition upper bound ||w||}
    ||w^*||_2 < \frac{1}{||\mu||_2}(1 - \frac{1}{2}(\frac{\Bar{\sigma}_\mu}{||\mu||_2} + \frac{\lambda}{2||\mu||_2^2})).
\end{equation}
Rearrange to
\begin{equation*}
1 - ||w^*||_2 ||\mu||_2 > \frac{1}{2}(\frac{\Bar{\sigma}_\mu}{||\mu||_2} + \frac{\lambda}{2||\mu||_2^2}) > 0.
\end{equation*}
The SVM objective is given by
\begin{align*}
    \mathcal{L}(w^*) &= \mathbb{E}[\max(0, 1 - y \sum_{j=1}^d w_j^*x_j)] + \frac{\lambda}{2}||w^*||_2^2 \\
    &\geq \max(0, \mathbb{E}[1 - y \sum_{j=1}^d w_j^*x_j])\\
    &= \max(0, 1 - \sum_{j=1}^d w_j^*\mu_j)\\
    &\geq \max(0, 1 - ||w^*||_2||\mu||_2)\\
    &= 1 - ||w^*||_2||\mu||_2\\
    &>  \frac{1}{2}(\frac{\Bar{\sigma}_\mu}{||\mu||_2} + \frac{\lambda}{2||\mu||_2^2}). 
\end{align*}
Here we apply Lemma \ref{lemma: bound of E(max(0,X))}, the Cauchy–Schwarz inequality and \eqref{eq: condition upper bound ||w||} respectively. On the other hand, consider $w' = \frac{\mu}{||\mu||_2^2}$. We have
\begin{equation*}
    ||w'||_2 = \frac{1}{||\mu||_2} > ||w^*||_2.
\end{equation*}
 From Lemma \ref{lemma: bound of E(max(0,X))}, the SVM objective satisfies
\begin{align*}
    \mathcal{L}(w') &= \mathbb{E}[\max(0, 1 - y \sum_{j=1}^d w_j'x_j)] + \frac{\lambda}{2}||w'||_2^2\\
    &\leq  \max(0, \mathbb{E}[1 - y \sum_{j=1}^d w'_jx_j]) + \frac{1}{2}\sqrt{ \operatorname{Var}(1 - y \sum_{j=1}^d w_j'x_j)} + \frac{\lambda}{2}||w'||_2^2 \\
    &= \max(0, \mathbb{E}[1 - \frac{||\mu||_2}{||\mu||_2}]) + \frac{1}{2}\sqrt{ \sum_{j=1}^d (w_j')^2\operatorname{Var}(x_j)} + \frac{\lambda}{2}||w'||_2^2\\
    &= \frac{1}{2}\sqrt{\frac{\sum_{j=1}^d \mu_j^2\sigma_j^2}{||\mu||_2^4}} + \frac{\lambda}{2}\frac{1}{||\mu||_2^2}\\
    &= \frac{1}{2}(\frac{\Bar{\sigma}_\mu}{||\mu||_2} + \frac{\lambda}{2||\mu||_2^2})\\
    &< \mathcal{L}(w^*).
\end{align*}
This contradicts with the optimality of $w^*$.
\end{proof}

\begin{lem}
\label{lem: unique solution of SVM}
Let $u, v$ be optimal solution of the SVM objective \eqref{eq: svm objective} under a data distribution $\mathcal{D}$ then we must have $u = v$.
\end{lem}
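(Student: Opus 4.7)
The plan is to establish uniqueness by showing that the SVM objective $\mathcal{L}$ is \emph{strictly} convex in $w$, so that it has at most one minimizer. The objective decomposes into a risk term and a regularizer:
\[
\mathcal{L}(w) = \mathbb{E}_{(x,y) \sim \mathcal{D}}[\max(0, 1 - yw^{\top}x)] + \frac{\lambda}{2}\|w\|_2^2,
\]
so it suffices to argue (i) the first term is convex, and (ii) the second term is strictly convex; adding a convex function to a strictly convex function yields a strictly convex function.

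For (i), for each fixed $(x,y)$ the map $w \mapsto 1 - yw^\top x$ is affine, and $\max(0, \cdot)$ is convex and nondecreasing, so $w \mapsto \max(0, 1 - yw^\top x)$ is convex. Convexity is preserved under expectation (one verifies Jensen's inequality pointwise in $(x,y)$ and then takes $\mathbb{E}$), so the risk term is convex. For (ii), since $\lambda > 0$, the map $w \mapsto \tfrac{\lambda}{2}\|w\|_2^2$ is strictly convex; concretely, for any $u \neq v$ and $t \in (0,1)$ one has
\[
\tfrac{\lambda}{2}\|tu + (1-t)v\|_2^2 < t\tfrac{\lambda}{2}\|u\|_2^2 + (1-t)\tfrac{\lambda}{2}\|v\|_2^2,
\]
by the parallelogram identity (the slack equals $\tfrac{\lambda}{2}t(1-t)\|u-v\|_2^2 > 0$).

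Combining, suppose for contradiction $u \neq v$ are both optimal, so $\mathcal{L}(u) = \mathcal{L}(v) = \mathcal{L}^*$. For any $t \in (0,1)$, adding the convex inequality for the risk term to the strict inequality for the regularizer gives
\[
\mathcal{L}(tu+(1-t)v) < t\mathcal{L}(u) + (1-t)\mathcal{L}(v) = \mathcal{L}^*,
\]
contradicting the fact that $\mathcal{L}^*$ is the infimum. Hence $u = v$.

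There is essentially no obstacle here: the argument is the standard one for $\ell_2$-regularized convex losses, and the only thing to be slightly careful about is invoking strict convexity at the right step, since the hinge loss itself is only weakly convex (it is piecewise linear and can be constant on a half-space), but the quadratic regularizer restores strict convexity. No assumption on $\mathcal{D}$ beyond well-definedness of the expectation is needed, so the lemma applies equally to the original data distribution and to any perturbed distribution $\mathcal{D} + \delta^*$ used elsewhere in the paper.
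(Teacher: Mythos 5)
Your proposal is correct and is essentially the paper's own argument: the paper proves uniqueness by evaluating $\mathcal{L}$ at the midpoint $\tfrac{u+v}{2}$, using $\max(0,a)+\max(0,b)\ge\max(0,a+b)$ for convexity of the hinge term and the strict inequality $\|u\|_2^2+\|v\|_2^2>2\|\tfrac{u+v}{2}\|_2^2$ for the regularizer, which is exactly your strict-convexity argument specialized to $t=\tfrac12$. No gaps.
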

\begin{proof}
We will prove this by contradiction, assume that $u \neq v$. Since both are optimal solutions, we have
\begin{align*}
     \mathcal{L}(u) = \mathcal{L}(v)
\end{align*}
when
\begin{align*}
         \mathcal{L}(u) = \mathbb{E}_{(x,y) \sim \mathcal{D}}[\max(0, 1 - yu^{\top}x)] + \frac{\lambda}{2}||u||^2_2\\
     \mathcal{L}(v) = \mathbb{E}_{(x,y) \sim \mathcal{D}}[\max(0, 1 - yv^{\top}x)] + \frac{\lambda}{2}||v||^2_2.\\
\end{align*}
Consider
\begin{align*}
    \mathcal{L}(u) 
    &= \frac{1}{2}(\mathcal{L}(u) + \mathcal{L}(v) )\\
    &=  \frac{1}{2}(\mathbb{E}_{(x,y) \sim \mathcal{D}}[\max(0, 1 - yu^{\top}x)] + \frac{\lambda}{2}||u||^2_2 \\
    &\quad + \mathbb{E}_{(x,y) \sim \mathcal{D}}[\max(0, 1 - yv^{\top}x)] + \frac{\lambda}{2}||v||^2_2)\\
    &\geq \frac{1}{2}(\mathbb{E}_{(x,y) \sim \mathcal{D}}[\max(0, 2 - y(u+v)^{\top}x)] + \frac{\lambda}{2}(||u||^2_2 + ||v||^2_2))\\
    &> \frac{1}{2}(\mathbb{E}_{(x,y) \sim \mathcal{D}}[\max(0, 2 - y(u+v)^{\top}x)] + \frac{\lambda}{2}(2||\frac{u + v}{2}||^2_2 )\\
    &= \mathbb{E}_{(x,y) \sim \mathcal{D}}[\max(0, 1 - y\frac{(u+v)^{\top}}{2}x)] + \frac{\lambda}{2}||\frac{u + v}{2}||^2_2 \\
    &= \mathcal{L}(\frac{u+v}{2}). 
\end{align*}
We utilize an inequality 
\begin{equation*}
    \max(0,a) + \max(0,b) \geq \max(0, a+b)
\end{equation*}
and
\begin{equation*}
    ||u||^2_2 + ||v||^2_2 \geq 2||\frac{u + v}{2}||^2_2.
\end{equation*}
The equality of the second inequality does not hold since $u \neq v$ so we have
\begin{equation*}
    ||u||^2_2 + ||v||^2_2 > 2||\frac{u + v}{2}||^2_2.
\end{equation*}
This contradicts with the optimality of $u,v$ as $\frac{u+v}{2}$ leads to a lower objective. Therefore, we must have $u = v$.
\end{proof}

\section{DISCUSSION ON NON-ROBUST FEATURES}\label{appendix: non-robust feature}
We would like to point out that there is an alternative definition of robust features.
\begin{definition}[Non-robust feature (alternative)] We say that feature $x_i$ is non-robust (alternative) if there exists a perturbation $\delta_{i} : \mathcal{D} \to \mathcal{B}(\varepsilon)$ such that after adding the perturbation, the distribution of feature $i$ of class $y = -1$ is close to the distribution of feature $i$ of class $y = 1$. Formally, let $D_{i,y}+\delta$ be the distribution of the perturbed feature $x_i + \delta_{i}(x,y)$ for a class $y$. The feature $x_i$ is non robust when
\begin{equation*}
    \operatorname{Dist}(D_{i,-1}+\delta, D_{i,1}+\delta) \leq c,
\end{equation*}
where $\operatorname{Dist}$ is your choice of distance metric (this can be Total variation, KL divergence or etc.) and $c$ is a constant. Otherwise, the feature $x_i$ is robust (alternative).
\end{definition}

\begin{figure}[tph]
 \hfill
    \begin{subfigure}{0.48\textwidth}
        \centering
        \includegraphics[width =\textwidth]{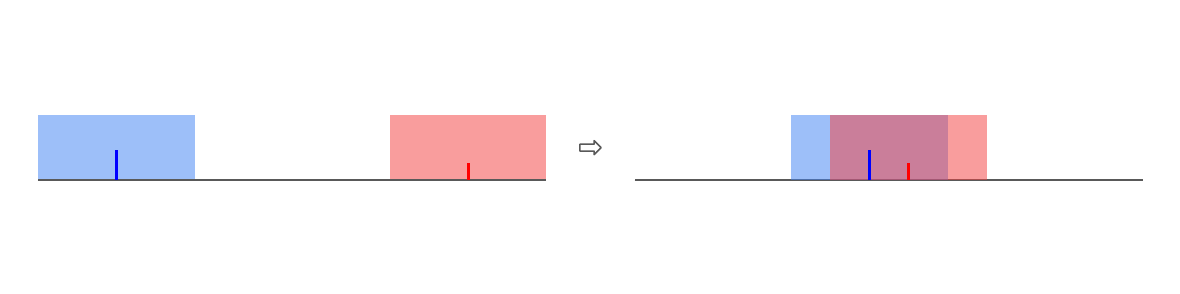}
        \caption{An example of a feature that is only robust to mean shift but is not robust}
        \label{fig: robust only}
     \end{subfigure}
     \begin{subfigure}{0.48\textwidth}
    \centering
        \includegraphics[width =\textwidth]{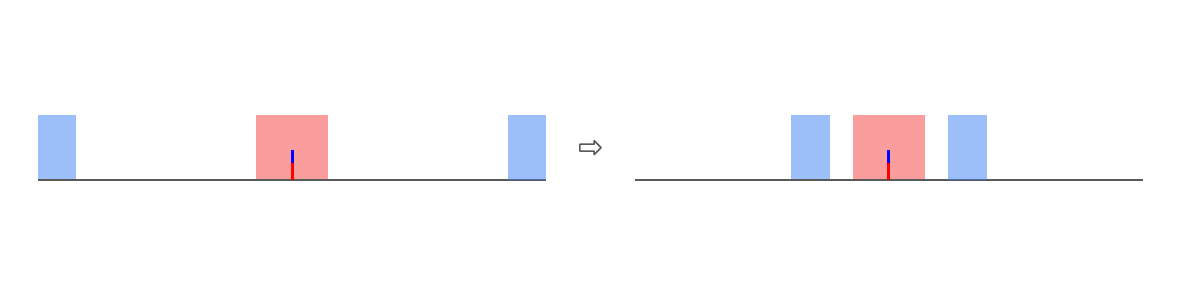}
        \caption{An example of a feature that is robust but is not robust to the mean shift because two distributions have the same mean}
        \label{fig: robust mean shift only}
     \end{subfigure}
     \hfill
    \caption{Features that are robust to mean shift are not necessarily robust in the alternative definition. Red and blue represents each class $y = -1, 1$ and a line represents the mean of each class.}
    \label{fig: robust vs robust alt}
\end{figure}

We note that this alternative definition of non-robust feature is different from one we defined earlier which we refer to non-robust to mean shift. Figure \ref{fig: robust vs robust alt} provides examples for this where each color represent the distribution of the feature given class $y = -1, 1$.

 \begin{enumerate}
     \item We can have two distributions where the distance between the means is $2.1\varepsilon$ that is the feature is robust to mean shift. However when perturbed, the two distributions are almost aligned with each other, thus not robust (alternative) (Figure \ref{fig: robust only}).
     \item On the other hand, we can have two distributions with the same mean such that the feature is not robust to mean shift. However, the shape of these distributions are different enough so that it is not possible to perturb them to be close to each other. Therefore, this feature is robust (alternative) (Figure \ref{fig: robust mean shift only}).
 \end{enumerate}

\section{STANDARD TRAINING RELIES ON NON-ROBUST FEATURES}\label{appendix: standard training relies on non-robust}
\begin{theorem}[Standard training uses non-robust feature]
\label{lemma: standard training uses non-robust feature (general)}
 Let the data distribution follows
the distribution as in Definition \ref{def: distribution for AT}. Let $w^* = [w^*_1,w^*_2, \dots, w^*_{d+1}]$ be the optimal solution under a standard SVM objective, 
\begin{equation*}
    w^* = \argmin_w  \mathbb{E}[\max(0, 1 - yw^{\top}x )] + \frac{\lambda}{2}||w||^2_2.
\end{equation*}
If
\begin{equation}
    \label{eq; condition for p to use non-robust feature (general)}
    p < 1 - \frac{1}{2}(\frac{\Bar{\sigma}_\mu}{||\mu||_2} + \frac{\lambda}{2||\mu||_2^2}) - \frac{1}{2}\sqrt{\frac{2}{\lambda}}\Bar{\sigma}_\mu,
\end{equation}
where
\begin{equation*}
   \mu = [\mu_1, \mu_2, \dots, \mu_{d+1}],\ \Bar{\sigma}_\mu = \sqrt{\frac{\sum_{j=1}^{d+1}\mu_j^2\sigma_j^2}{||\mu||^2_2}},
\end{equation*}
then $w^*$  will rely on the non-robust feature $j$,
\begin{equation*}
    w_1^* \leq \sum_{j \geq 2}w^*_j\mu_j.
\end{equation*}
This also implies that
\begin{equation*}
    \sum_{j=2}^{d+1} (w^*_j)^2 \geq \frac{||w^*||_2^2}{1 + \sum_{j=2}^{d+1} \mu_j^2}.
\end{equation*}
\end{theorem}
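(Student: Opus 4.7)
The plan is to argue by contradiction, comparing $w^*$ against the benchmark classifier $w' = \mu/\|\mu\|_2^2$ that already appears in the proof of Lemma \ref{lem: lower bound on the magnitude}. By the sign-of-$w^*$ lemma (Lemma \ref{lemma: sign w}) applied feature-by-feature using Assumption \ref{assum: independent features}, we have $w_j^* \geq 0$ for $j \geq 2$ (since $\mu_j > 0$), and $w_1^*$ has the sign of $2p-1$. When $p \leq 1/2$ the claim $w_1^* \leq \sum_{j\geq 2} w_j^* \mu_j$ follows immediately from $w_1^* \leq 0 \leq \sum_{j\geq 2} w_j^* \mu_j$, so the substantive case is $p > 1/2$, where additionally $w_1^* \geq 0$.

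In that case, suppose for contradiction that $w_1^* > \sum_{j\geq 2} w_j^* \mu_j$. By the optimality of $w^*$, together with the upper half of Lemma \ref{lemma: bound of E(max(0,X))} applied to $w'$ exactly as in the proof of Lemma \ref{lem: lower bound on the magnitude},
\[
\mathcal{L}(w^*) \;\leq\; \mathcal{L}(w') \;\leq\; \tfrac{1}{2}\Bigl(\tfrac{\bar\sigma_\mu}{\|\mu\|_2} + \tfrac{\lambda}{2\|\mu\|_2^2}\Bigr).
\]
For a matching lower bound I would condition on the value of $x_1$. On the event $x_1 = -y$, which has probability $1-p$, one has $y w_1^* x_1 = -w_1^*$, and since $x_2,\ldots,x_{d+1}$ are independent of $x_1$ given $y$,
\[
\mathcal{L}(w^*) \;\geq\; (1-p)\,\mathbb{E}\bigl[\max\bigl(0,\, 1 + w_1^* - y\textstyle\sum_{j\geq 2} w_j^* x_j\bigr)\bigr].
\]
Applying the lower half of Lemma \ref{lemma: bound of E(max(0,X))} (Jensen) to the inner expectation and using the standing assumption $w_1^* > \sum_{j\geq 2} w_j^* \mu_j$, this is at least $(1-p)\bigl(1 + w_1^* - \sum_{j\geq 2} w_j^* \mu_j\bigr) > 1-p$. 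Chaining the two inequalities contradicts the hypothesis on $p$.

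For the fraction bound $\sum_{j\geq 2}(w_j^*)^2 \geq \|w^*\|_2^2/(1 + \sum_{j\geq 2}\mu_j^2)$, I would apply Cauchy--Schwarz once: since $w_1^*,w_j^*,\mu_j$ are all nonnegative in the $p>1/2$ regime, the established inequality $w_1^* \leq \sum_{j\geq 2} w_j^* \mu_j$ can be squared to give $(w_1^*)^2 \leq \bigl(\sum_{j\geq 2}(w_j^*)^2\bigr)\bigl(\sum_{j\geq 2}\mu_j^2\bigr)$; adding $\sum_{j\geq 2}(w_j^*)^2$ to both sides and rearranging yields the claim. The main technical subtlety I anticipate is the slack term $\tfrac12\sqrt{2/\lambda}\,\bar\sigma_\mu$ in the hypothesis, which the naive Jensen lower bound above does not use: matching the stated condition verbatim will require sharpening the estimate of $\mathcal{L}(w^*)$ by incorporating the variance correction $\tfrac12\sqrt{\operatorname{Var}(y w^{*\top}x)}$ from Lemma \ref{lemma: bound of E(max(0,X))} and controlling it through the magnitude bound $\|w^*\|_2 \leq \sqrt{2/\lambda}$ of Lemma \ref{lemma: magnitude of w*}, which is precisely where a term of order $\sqrt{2/\lambda}\,\bar\sigma_\mu$ appears.
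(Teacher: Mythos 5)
Your proof is correct and shares the paper's overall skeleton (argue by contradiction from $w_1^* > \sum_{j\geq 2} w_j^*\mu_j$; lower-bound $\mathcal{L}(w^*)$ by restricting to the event $x_1=-y$ and applying Jensen to get $\mathcal{L}(w^*) > 1-p$; upper-bound the optimum by evaluating a comparator aligned with $\mu$ via Lemma \ref{lemma: bound of E(max(0,X))}), but it diverges in a way that matters: you take the fixed comparator $w'=\mu/\|\mu\|_2^2$, whose hinge term has zero expectation, so $\mathcal{L}(w')$ is bounded outright by $\tfrac{1}{2}\bar\sigma_\mu/\|\mu\|_2 + \tfrac{\lambda}{2}\|\mu\|_2^{-2}$ and the contradiction closes with no further input. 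The paper instead uses the norm-matched comparator $w'=\tfrac{\|w^*\|_2}{\|\mu\|_2}\mu$ so that the regularization terms cancel on both sides, and then must invoke Lemma \ref{lem: lower bound on the magnitude} to control $\max(0,1-\|w^*\|_2\|\mu\|_2)$ and Lemma \ref{lemma: magnitude of w*} to control $\tfrac{1}{2}\|w^*\|_2\bar\sigma_\mu$ --- the latter step is precisely the origin of the $\tfrac{1}{2}\sqrt{2/\lambda}\,\bar\sigma_\mu$ term in condition \eqref{eq; condition for p to use non-robust feature (general)}. Your route avoids both magnitude lemmas and yields the conclusion under a weaker sufficient condition, so your closing worry about ``matching the stated condition verbatim'' is misplaced: since \eqref{eq; condition for p to use non-robust feature (general)} is only a sufficient hypothesis, establishing the conclusion under a hypothesis it implies is all that is required, and no sharpening via the variance of $yw^{*\top}x$ is needed. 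Two small remarks: your explicit use of Lemma \ref{lemma: sign w} to get $w_1^*\geq 0$ when $p>1/2$ is actually a necessary step for legitimately squaring $w_1^*\leq\sum_{j\geq 2}w_j^*\mu_j$ in the Cauchy--Schwarz finish (the paper squares without comment); and the exact constant you quote for $\mathcal{L}(\mu/\|\mu\|_2^2)$ inherits a factor-of-two slip on the $\lambda$ term that is already present in the paper's own proof of Lemma \ref{lem: lower bound on the magnitude} (the computation gives $\tfrac{\lambda}{2}\|\mu\|_2^{-2}$, not $\tfrac{\lambda}{4}\|\mu\|_2^{-2}$), which is harmless for the argument but worth stating consistently.
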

\begin{proof}
We will prove by contradiction. Assume that $w^*$ is an optimal solution of the SVM objective and \eqref{eq; condition for p to use non-robust feature (general)} holds, and
\begin{equation*}
    w^*_1 >\sum_{j = 2}^{d+1}w^*_j\mu_j .
\end{equation*}
The SVM objective is given by
\begin{align*}
    \mathcal{L}(w^*) &= \mathbb{E}[\max(0, 1 - \sum_{i=1}^{d+1}yw^*_ix_i )] + \frac{\lambda}{2}||w^*||^2_2\\
    &= p\mathbb{E}[\max(0, 1 - w_1^* - \sum_{j=2}^{d+1}yw^*_jx_j )]  +(1-p)\mathbb{E}[\max(0, 1 + w_1^* - \sum_{j=2}^{d+1}yw^*_jx_j )] + \frac{\lambda}{2}||w^*||^2_2\\
    &\geq (1-p)\mathbb{E}[\max(0, 1 + w_1^* - \sum_{j=2}^{d+1}yw^*_jx_j )] + \frac{\lambda}{2}||w^*||^2_2.\\
\end{align*}
From Lemma \ref{lemma: bound of E(max(0,X))}, $ \mathbb{E}[\max(0,X)] \geq \max(0, \mathbb{E}[X])$. Therefore, 
\begin{align*}
\mathcal{L}(w^*) &\geq (1-p)\max(0,\mathbb{E}[1 + w_1^* - \sum_{j=2}^{d+1}yw^*_jx_j]))+ \frac{\lambda}{2}||w^*||^2_2\\
&= (1-p)\max(0,1 + w_1^* - \sum_{j=2}^{d+1}w^*_j\mu_j))+ \frac{\lambda}{2}||w^*||^2_2\\
&\geq (1-p) + \frac{\lambda}{2}||w^*||^2_2.
\end{align*}
The last inequality holds because
\begin{equation*}
    w^*_1 > \sum_{j = 2}^{d+1}w^*_j\mu_j.
\end{equation*}
Now, consider $w' = \frac{||w^*||_2}{||\mu||_2}\mu$ when $\mu = [\mu_1, \mu_2,\dots, \mu_{d+1}]$. We have
\begin{equation*}
    ||w'||_2 = \frac{||w^*||_2}{||\mu||_2}||\mu||_2 = ||w^*||_2,
\end{equation*}
and from Lemma \ref{lemma: bound of E(max(0,X))},
\begin{equation*}
    \mathbb{E}[\max(0,X)] \leq \max(0, \mathbb{E}[X]) + \frac{1}{2}\sqrt{\operatorname{Var}(X)}.
\end{equation*}
we have
\begin{align*}
    &\mathcal{L}(w') = \mathbb{E}[\max(0, 1 - \sum_{i=1}^{d+1}yw'_ix_i )] + \frac{\lambda}{2}||w'||^2_2\\
    &\leq  \max(0, \mathbb{E}[1 - y \sum_{j=1}^{d+1} w'_jx_j])  + \frac{1}{2}\sqrt{ \operatorname{Var}(1 - y \sum_{j=1}^{d+1} w_j'x_j)} + \frac{\lambda}{2}||w^*||_2^2 \\
    &= \max(0, 1 - ||w^*||_2||\mu||_2) + \frac{1}{2}\sqrt{ \sum_{j=1}^{d+1} (w_j')^2\operatorname{Var}(x_j)} + \frac{\lambda}{2}||w^*||_2^2\\
    &= \max(0, 1 - ||w^*||_2||\mu||_2)  + \frac{1}{2}||w^*||_2\sqrt{ \frac{\sum_{j=1}^{d+1} \mu_j^2\sigma_j^2}{||\mu||_2^2}} + \frac{\lambda}{2}||w^*||_2^2\\
    &=  \max(0, 1 - ||w^*||_2||\mu||_2) + \frac{1}{2}||w^*||_2\Bar{\sigma_\mu} + \frac{\lambda}{2}||w^*||_2^2.
\end{align*}
From Lemma \ref{lem: lower bound on the magnitude}
\begin{equation*}
    ||w^*||_2 \geq \frac{1}{||\mu||_2}(1 - \frac{1}{2}(\frac{\Bar{\sigma}_\mu}{||\mu||_2} + \frac{\lambda}{2||\mu||_2^2})),
\end{equation*}
and Lemma \ref{lemma: magnitude of w*}
\begin{equation*}
    ||w^*||_2 \leq \sqrt{\frac{2}{\lambda}},
\end{equation*}
we have the upper bound of $\mathcal{L}(w')$ as follows
\begin{align*}
    \mathcal{L}(w') &\leq \frac{1}{2}(\frac{\Bar{\sigma}_\mu}{||\mu||_2} + \frac{\lambda}{2||\mu||_2^2}) + \frac{1}{2}\sqrt{\frac{2}{\lambda}}\Bar{\sigma_\mu} + \frac{\lambda}{2}||w^*||_2^2.
\end{align*}
From \eqref{eq; condition for p to use non-robust feature (general)} we know that
\begin{align*}
    p &< 1 - \frac{1}{2}(\frac{\Bar{\sigma}_\mu}{||\mu||_2} + \frac{\lambda}{2||\mu||_2^2}) - \frac{1}{2}\sqrt{\frac{2}{\lambda}}\Bar{\sigma}_\mu.
\end{align*}
So
\begin{equation*}
     1-p > \frac{1}{2}(\frac{\Bar{\sigma}_\mu}{||\mu||_2} + \frac{\lambda}{2||\mu||_2^2}) + \frac{1}{2}\sqrt{\frac{2}{\lambda}}\Bar{\sigma}_{\mu},
\end{equation*}
and that
\begin{equation*}
    \mathcal{L}(w^*) < \mathcal{L}(w').
\end{equation*}
This contradicts with the fact that $w^*$ is an optimal solution. Therefore, if $w^*$ is an optimal solution and \eqref{eq; condition for p to use non-robust feature (general)} holds, we must have
\begin{equation*}
    w^*_1 \leq \sum_{j = 2}^{d+1}w^*_j\mu_j.
\end{equation*}
By Cauchy-Schwarz inequality, we have
\begin{align*}
    &\qquad (w_1^*)^2 \leq (\sum_{j = 2}^{d+1}w^*_j\mu_j)^2 \leq (\sum_{j = 2}^{d+1}(w^*_j)^2)(\sum_{j = 2}^{d+1}\mu_j^2)\\
    &\Longleftrightarrow\sum_{i = 1}^{d+1}(w^*_i)^2 \leq (\sum_{j = 2}^{d+1}(w^*_j)^2)(1 + \sum_{j = 2}^{d+1}\mu_j^2)\\
    &\Longleftrightarrow \frac{||w^*||^2_2}{1 + \sum_{j = 2}^{d+1}\mu_j^2} \leq \sum_{j = 2}^{d+1}(w^*_j)^2.
\end{align*}
\end{proof}

The condition in Theorem \ref{lemma: standard training uses non-robust feature (general)} holds when the number of non-robust features $d$ and the regularization parameter $\lambda$ are large enough, 
\begin{equation*}
    p < 1 - \frac{1}{2}(\frac{\Bar{\sigma}_\mu}{||\mu||_2} + \frac{\lambda}{2||\mu||_2^2}) - \frac{1}{2}\sqrt{\frac{2}{\lambda}}\Bar{\sigma}_\mu.
\end{equation*}
When $||\mu||_2$ is large the RHS will be larger so the condition holds for more value of $p$. In an extreme case when $||\mu||_2 \to \infty$, we have
\begin{equation*}
    \frac{1}{2}(\frac{\Bar{\sigma}_\mu}{||\mu||_2} + \frac{\lambda}{2||\mu||_2^2}) \to 0.
\end{equation*}
The condition becomes
\begin{equation*}
    p < 1 -  \frac{1}{2}\sqrt{\frac{2}{\lambda}}\Bar{\sigma}_\mu.
\end{equation*}
Therefore, a necessary condition is that the regularization parameter has to be large enough
\begin{equation*}
    \lambda > \frac{\Bar{\sigma}_\mu^2}{2(1-p)^2}
\end{equation*}
where $\lambda$ scales with the weighted average of the variance $\Bar{\sigma}_\mu^2$. In general, the term $||\mu||_2 = \sqrt{\sum_{j=1}^d \mu_j^2}$ depends on the magnitude and the number of non-robust features. If the each $\mu_i$ is large then we only need a smaller $d$ for $||\mu||_2$  to be large.\\

We note that the terms $\mu_1, \sigma_1^2$ are fuctions of $p$. However, we can bound them with constants
\begin{equation*}
    \mu_1 = \mathbb{E}[x_1y] = 2p-1 \leq 1
\end{equation*}
and 
\begin{equation*}
    \sigma_1^2 = 1 - (2p-1)^2\leq 1.
\end{equation*}
In addition, the contribution of $\mu_1, \sigma_1$ would be in $\mathcal{O}(\frac{1}{d})$ as we have a large number of non-robust feature $d$.

\section{ADVERSARIAL TRAINING RELIES ON NON-ROBUST FEATURE}\label{appendix: adversarial training relies on non-robust}
\begin{theorem}[Adversarial training uses non-robust feature]
\label{lemma: adversarial training uses non-robust feature (general)}
Let the data distribution follows
the distribution as in Definition \ref{def: distribution for AT}. Let $\delta$ be a perturbation given by adversarial training with a perturbation budget $\varepsilon$. We assume that the perturbation is in the form of the worst case perturbation where 
\begin{equation*}
    \delta(x,y) \in \{ -y\varepsilon, 0 , y\varepsilon \}^{d+1}.
\end{equation*}
Let $w^* = [w^*_1,w^*_2, \dots, w^*_{d+1}]$ be the optimal solution under a standard SVM objective on the perturbed data $x+\delta$, 
\begin{equation*}
    w^* = \argmin_w  \mathbb{E}[\max(0, 1 - yw^{\top}(x + \delta(x,y)) )] + \frac{\lambda}{2}||w||^2_2.
\end{equation*}
If
\begin{equation}
\label{eq: condition for p to use non-robust feature AT}
    p < 1 - \sup_s((\frac{1}{2}(\frac{\Bar{\sigma}_{\mu,s}}{||\mu + \varepsilon s||_2} + \frac{\lambda}{2||\mu + \varepsilon s||_2^2}) + \frac{1}{2}\sqrt{\frac{2}{\lambda}}\Bar{\sigma}_{\mu,s}),
\end{equation}
when
\begin{equation*}
    s \in \{-1,0,1\}^{d+1}, \Bar{\sigma}_{\mu, s} = \sqrt{ \frac{\sum_{j=1}^{d+1} (\mu_j + \varepsilon s_j)^2\sigma_j^2}{||\mu + \varepsilon s||_2^2}},
\end{equation*}
then $w^*$ satisfies
\begin{equation*}
    w^*_1(1-\varepsilon) \leq \sum_{j = 2}^{d+1}w^*_j|\mu_j + \varepsilon|.
\end{equation*}
This implies
\begin{equation*}
    \sum_{j=2}^{d+1} (w^*_j)^2 \geq \frac{||w^*||_2^2(1-\varepsilon)^2}{(1-\varepsilon)^2 + \sum_{j=2}^{d+1} (\mu_j + \varepsilon)^2}.
\end{equation*}
\end{theorem}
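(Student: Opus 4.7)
My plan is to mirror, essentially step for step, the contradiction argument used to prove Theorem~\ref{lemma: standard training uses non-robust feature (general)}, but now applied to the \emph{perturbed} data distribution induced by $\delta$. Because $\delta(x,y) \in \{-y\varepsilon, 0, y\varepsilon\}^{d+1}$, there is a fixed $s \in \{-1,0,1\}^{d+1}$ with $\delta_j(x,y) = y\varepsilon s_j$; hence $x_j + \delta_j$ has conditional mean $y(\mu_j + \varepsilon s_j)$ and conditional variance $\sigma_j^2$ given $y$. I would suppose for contradiction that $w^*$ is optimal and that
\begin{equation*}
w_1^*(1-\varepsilon) > \sum_{j=2}^{d+1} w_j^* \,|\mu_j + \varepsilon|,
\end{equation*}
and then produce an alternative classifier $w' = \frac{\|w^*\|_2}{\|\mu + \varepsilon s\|_2}(\mu + \varepsilon s)$ that achieves strictly smaller loss.

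For the lower bound on $\mathcal{L}(w^*)$, I would restrict attention to the event $x_1 = -y$, which has probability $1-p$. On this event, $-y w_1^*(x_1 + \delta_1) = w_1^*(1 - \varepsilon s_1) \geq w_1^*(1-\varepsilon)$, using $|s_1| \leq 1$ and $w_1^* \geq 0$ (by Lemma~\ref{lemma: sign w} applied to the perturbed robust feature). Jensen's inequality on the remaining coordinates, together with $w_j^*(\mu_j + \varepsilon s_j) \leq w_j^*|\mu_j + \varepsilon|$ (valid since $\mu_j>0$ and $|s_j|\le 1$), followed by the contradiction hypothesis, gives
\begin{equation*}
\mathcal{L}(w^*) \geq (1-p)\max\!\left(0,\, 1 + w_1^*(1-\varepsilon) - \sum_{j\geq 2} w_j^*(\mu_j + \varepsilon s_j)\right) + \tfrac{\lambda}{2}\|w^*\|_2^2 \geq (1-p) + \tfrac{\lambda}{2}\|w^*\|_2^2.
\end{equation*}

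For the upper bound on $\mathcal{L}(w')$, observe that $\|w'\|_2 = \|w^*\|_2$ and that the margin $1 - y {w'}^\top(x+\delta)$ has mean $1 - \|w^*\|_2 \|\mu+\varepsilon s\|_2$ and variance $\|w^*\|_2^2 \,\bar{\sigma}_{\mu,s}^2$ by independence of features. Lemma~\ref{lemma: bound of E(max(0,X))} yields
\begin{equation*}
\mathcal{L}(w') \leq \max\!\left(0,\, 1 - \|w^*\|_2\|\mu+\varepsilon s\|_2\right) + \tfrac{1}{2}\|w^*\|_2 \,\bar{\sigma}_{\mu,s} + \tfrac{\lambda}{2}\|w^*\|_2^2.
\end{equation*}
Applying the lower bound on $\|w^*\|_2$ from Lemma~\ref{lem: lower bound on the magnitude} to the perturbed distribution (whose mean vector is $\mu + \varepsilon s$) controls the first term by $\tfrac{1}{2}\bigl(\tfrac{\bar{\sigma}_{\mu,s}}{\|\mu+\varepsilon s\|_2} + \tfrac{\lambda}{2\|\mu+\varepsilon s\|_2^2}\bigr)$, and the upper bound $\|w^*\|_2 \leq \sqrt{2/\lambda}$ from Lemma~\ref{lemma: magnitude of w*} controls the second by $\tfrac{1}{2}\sqrt{2/\lambda}\,\bar{\sigma}_{\mu,s}$. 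Since these estimates hold for the particular $s$ induced by $\delta$, the hypothesis $1-p > \sup_s(\cdots)$ in \eqref{eq: condition for p to use non-robust feature AT} gives $\mathcal{L}(w') < (1-p) + \tfrac{\lambda}{2}\|w^*\|_2^2 \leq \mathcal{L}(w^*)$, contradicting optimality. The implication $\sum_{j\geq 2}(w_j^*)^2 \geq \frac{\|w^*\|_2^2 (1-\varepsilon)^2}{(1-\varepsilon)^2 + \sum_{j\geq 2}(\mu_j+\varepsilon)^2}$ then follows from a Cauchy--Schwarz rearrangement of $w_1^*(1-\varepsilon) \leq \sum_{j\geq 2} w_j^*|\mu_j+\varepsilon|$, identical to the last step in Theorem~\ref{lemma: standard training uses non-robust feature (general)}.

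The main obstacle I foresee is the sign bookkeeping in the non-robust block: when $s_j = -1$ forces $\mu_j + \varepsilon s_j < 0$, Lemma~\ref{lemma: sign w} makes $w_j^* \leq 0$, so one must verify that $w_j^*(\mu_j + \varepsilon s_j) = |w_j^*|\,|\mu_j + \varepsilon s_j| \leq w_j^*|\mu_j + \varepsilon|$ remains consistent with the contradiction hypothesis and that $\|\mu + \varepsilon s\|_2 > 0$ (needed for $w'$ and for the lower-bound lemma to be applicable). A further delicate point is that the supremum over $s$ in \eqref{eq: condition for p to use non-robust feature AT} is what allows the same argument to work uniformly against every adversarial choice, so the hypothesis must be checked \emph{for the particular} $s$ picked by the adversary, which is guaranteed by the supremum.
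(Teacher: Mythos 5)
Your proposal is correct and follows essentially the same route as the paper's own proof: the same contradiction hypothesis, the same lower bound on $\mathcal{L}(w^*)$ obtained by conditioning on the event $x_1=-y$ and applying Jensen's inequality, the same comparator $w' = \frac{\|w^*\|_2}{\|\mu+\varepsilon s\|_2}(\mu+\varepsilon s)$ bounded via Lemma~\ref{lemma: bound of E(max(0,X))}, Lemma~\ref{lem: lower bound on the magnitude} and Lemma~\ref{lemma: magnitude of w*}, and the same concluding Cauchy--Schwarz step. The sign-bookkeeping subtlety you flag (the step $\sum_j w_j^*(\mu_j+\varepsilon s_j) \le \sum_j w_j^*|\mu_j+\varepsilon|$ when $s_j=-1$ forces $w_j^*\le 0$) is real, but the paper's own proof passes over it in exactly the same way, so your write-up is faithful to the published argument.
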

\begin{proof}
 Let
\begin{equation*}
    \delta(x,y) = [\delta_1(x,y), \delta_2(x,y), \dots, \delta_{d+1}(x,y)],
\end{equation*}
when
\begin{equation*}
        \delta_i(x,y) = y\varepsilon s_i,
\end{equation*}
where $s_i$ can take 3 possible values
\begin{equation*}
        s_i =\left\{\begin{array}{cl}
1; \\
0;  \\
-1.
\end{array}\right.
\end{equation*}
The perturbation from adversarial training does not depends on $x$. We can see this as shifting the whole distribution for each feature. For the first feature
\begin{equation*}
    x_{1}+ \delta_1(x,y)=\left\{\begin{array}{ll}
y(1 + \varepsilon s_1) , & \text { w.p. } p;\\
-y(1 - \varepsilon s_1), & \text { w.p. } 1-p.
\end{array}\right.
\end{equation*}

For each feature $j$ for $j = 2,\dots, d+1$, this perturbation will only change the mean of the perturbed data but will preserve the variance. 
\begin{equation*}
    \mathbb{E}[x_j + \delta_j(x,y)] = y\mu_j + y\varepsilon s_j, \operatorname{Var}(x_j + \delta_j(y)) = \sigma_j^2.
\end{equation*}
We refer $\delta(y,s)$ to the perturbation where $\delta(x,y) = y\varepsilon s$. Denote the SVM objective on the data with perturbation $\delta$ as
\begin{align*}
    \mathcal{L}(w, \delta) &= \mathbb{E}[\max(0, 1 - yw^{\top}(x+\delta(x,y)) )] + \frac{\lambda}{2}||w||^2_2 \\
    &= \mathbb{E}[\max(0, 1 - \sum_{i=1}^{d+1}yw_i(x_i + \delta_i(x,y)) )] + \frac{\lambda}{2}||w||^2_2.
\end{align*}
For a fixed $s$, let $w^*$ be an optimal solution of the SVM objective on the perturbed data $x+ \delta(y,s)$ and assume that 
\begin{equation*}
    w^*_1(1-\varepsilon) >\sum_{j = 2}^{d+1}w^*_j|\mu_j + \varepsilon|.
\end{equation*}
First,
\begin{align*}
\mathcal{L}(w^*, \delta) 
&\geq (1-p)\max(0,\mathbb{E}[1 + w_1(1 - \varepsilon s_1) - \sum_{j=2}^{d+1}yw^*_j(x_j + y\varepsilon s_j)]))+ \frac{\lambda}{2}||w^*||_2^2\\
&\geq (1-p)\max(0,1 + w_1(1 - \varepsilon s_1) - \sum_{j=2}^{d+1}w^*_j(\mu_j + \varepsilon s_j))+ \frac{\lambda}{2}||w^*||_2^2\\
&\geq (1-p)\max(0,1 + w_1(1 - \varepsilon) - \sum_{j=2}^{d+1}w^*_j|\mu_j + \varepsilon|)+ \frac{\lambda}{2}||w^*||_2^2\\
&\geq 1-p + \frac{\lambda}{2}||w^*||_2^2.
\end{align*}

The last inequality holds because
\begin{equation*}
    w^*_1(1-\varepsilon) >\sum_{j = 2}^{d+1}w^*_j|\mu_j + \varepsilon|.
\end{equation*}
On the other hand, consider
\begin{equation*}
    w' = \frac{||w^*||_2}{||\mu + \varepsilon s||_2}[\mu_1 + \varepsilon s_1, \dots, \mu_{d+1} + \varepsilon s_{d+1}].
\end{equation*}
we have
\begin{equation*}
    ||w'||_2 = ||w^*||_2.
\end{equation*}
Consider
\begin{align*}
    \mathcal{L}(w',\delta) &= \mathbb{E}[\max(0, 1 - \sum_{j=1}^{d+1}yw'_j(x_j + y\varepsilon s_j) )] + \frac{\lambda}{2}||w'||^2_2\\
     &\leq  \max(0, \mathbb{E}[1 - y \sum_{j=1}^{d+1} w'_j(x_j + y\varepsilon s_j)]) + \frac{1}{2}\sqrt{ \operatorname{Var}(1 - y \sum_{j=1}^{d+1} w_j'(x_j + y\varepsilon s_j))} + \frac{\lambda}{2}||w^*||_2^2 \\
    &= \max(0, 1 - ||w^*||_2||\mu + \varepsilon s||_2) + \frac{1}{2}||w^*||_2\sqrt{ \frac{\sum_{j=1}^{d+1} (\mu_j + \varepsilon s_j)^2\sigma_j^2}{||\mu + \varepsilon s||_2^2}} + \frac{\lambda}{2}||w^*||_2^2 \\
    &= \max(0, 1 - ||w^*||_2||\mu + \varepsilon s||_2) + \frac{1}{2}||w^*||_2\Bar{\sigma}_{\mu, s} + \frac{\lambda}{2}||w^*||_2^2, \\
\end{align*}
when
\begin{equation*}
    \Bar{\sigma}_{\mu, s} = \sqrt{ \frac{\sum_{j=1}^{d+1} (\mu_j + \varepsilon s_j)^2\sigma_j^2}{||\mu + \varepsilon s||_2^2}}.
\end{equation*}
From Lemma \ref{lem: lower bound on the magnitude}
\begin{equation*}
    ||w^*||_2 \geq \frac{1}{||\mu+ \varepsilon s||_2}(1 - \frac{1}{2}(\frac{\Bar{\sigma}_{\mu,s}}{||\mu+ \varepsilon s||_2} + \frac{\lambda}{2||\mu+ \varepsilon s||_2^2})),
\end{equation*}
and Lemma \ref{lemma: magnitude of w*}
\begin{equation*}
    ||w^*||_2 \leq \sqrt{\frac{2}{\lambda}},
\end{equation*}
we have the upper bound of $\mathcal{L}(w')$ as follows
\begin{align*}
    \mathcal{L}(w', \delta) &\leq \frac{1}{2}(\frac{\Bar{\sigma}_{\mu,s}}{||\mu+ \varepsilon s||_2} + \frac{\lambda}{2||\mu+ \varepsilon s||_2^2}) + \frac{1}{2}\sqrt{\frac{2}{\lambda}}\Bar{\sigma}_{\mu,s} + \frac{\lambda}{2}||w^*||_2^2.
\end{align*}
Recall that we have
\begin{align*}
    \mathcal{L}(w^*, \delta)\geq 1-p + \frac{\lambda}{2}||w^*||_2^2.
\end{align*}
Therefore, if
\begin{align*}
    p < 1 - \frac{1}{2}(\frac{\Bar{\sigma}_{\mu,s}}{||\mu + \varepsilon s||_2} + \frac{\lambda}{2||\mu + \varepsilon s||_2^2}) - \frac{1}{2}\sqrt{\frac{2}{\lambda}}\Bar{\sigma}_{\mu,s},
\end{align*}
we would have 
\begin{equation*}
    \mathcal{L}(w', \delta) < \mathcal{L}(w^*, \delta),
\end{equation*}
which lead to a contradiction with the fact that $w^*$ is an optimal solution. This implies that for a fixed perturbation $\delta(s)$, if 
\begin{align*}
    p < 1 - \frac{1}{2}(\frac{\Bar{\sigma}_{\mu,s}}{||\mu + \varepsilon s||_2} + \frac{\lambda}{2||\mu + \varepsilon s||_2^2}) - \frac{1}{2}\sqrt{\frac{2}{\lambda}}\Bar{\sigma}_{\mu,s},
\end{align*}
then the optimal solution of the SVM objective on the perturbed data $x+ \delta(s)$ satisfies
\begin{equation*}
    w^*_1(1-\varepsilon) \leq \sum_{j = 2}^{d+1}w^*_j|\mu_j + \varepsilon|.
\end{equation*}
Now, if we have
\begin{align*}
    p < 1 - \sup_s((\frac{1}{2}(\frac{\Bar{\sigma}_{\mu,s}}{||\mu + \varepsilon s||_2} + \frac{\lambda}{2||\mu + \varepsilon s||_2^2}) + \frac{1}{2}\sqrt{\frac{2}{\lambda}}\Bar{\sigma}_{\mu,s}),
\end{align*}
we can conclude that for any perturbation $s \in \{-1,0,1\}^{d+1}$, the optimal solution of the SVM objective on the perturbed data $x+ \delta(s)$ satisfies
\begin{equation*}
    w^*_1(1-\varepsilon) \leq \sum_{j = 2}^{d+1}w^*_j|\mu_j + \varepsilon|.
\end{equation*}
Moreover, we can apply Cauchy-Schwarz inequality to have
\begin{align*}
    (w^*_1)^2 &\leq \frac{(\sum_{j = 2}^{d+1}w^*_j|\mu_j + \varepsilon|)^2}{(1 - \varepsilon)^2}\\
    &\leq \frac{(\sum_{j = 2}^{d+1}(w^*_j)^2 )(\sum_{j=2}^{d+1}(\mu_j + \varepsilon)^2)}{(1 - \varepsilon)^2}.
\end{align*}
Therefore,
\begin{align*}
    &\qquad ||w^*||^2_2 \leq \sum_{j = 2}^{d+1}(w^*_j)^2( \frac{\sum_{j=2}^{d+1}(\mu_j + \varepsilon)^2 +(1- \varepsilon)^2}{(1- \varepsilon)^2})\\
    &\Longleftrightarrow \frac{||w^*||^2_2(1- \varepsilon)^2}{\sum_{j=2}^{d+1}(\mu_j + \varepsilon)^2 +(1- \varepsilon)^2} \leq \sum_{j = 2}^{d+1}(w^*_j)^2.
\end{align*}
\end{proof}
The condition in Theorem \ref{lemma: adversarial training uses non-robust feature (general)} make sure that for any perturbation, the model would still rely on non-robust feature,
\begin{equation*}
    p < 1 - \sup_s((\frac{1}{2}(\frac{\Bar{\sigma}_{\mu,s}}{||\mu + \varepsilon s||_2} + \frac{\lambda}{2||\mu + \varepsilon s||_2^2}) + \frac{1}{2}\sqrt{\frac{2}{\lambda}}\Bar{\sigma}_{\mu,s}).
\end{equation*}
If we assume that the variance ${\sigma}_{\mu,s} \approx \sigma$ is about the same for all $s$ then the condition becomes
\begin{equation*}
    p < 1 - \sup_s((\frac{1}{2}(\frac{\sigma}{||\mu + \varepsilon s||_2} + \frac{\lambda}{2||\mu + \varepsilon s||_2^2}) + \frac{1}{2}\sqrt{\frac{2}{\lambda}}\sigma).
\end{equation*}
We know that $s^*$ that achieve the supremum would also minimize $||\mu + \varepsilon s||_2$. The optimal $s^*$ follows
\begin{enumerate}
    \item If $2\mu_i >\varepsilon $ then $s_i^* = -1$;
    \item If $\varepsilon > 2\mu_i$ then $s_i^* = 0$.
\end{enumerate}
If the perturbation budget is large enough where for all $i$, we have $\varepsilon > 2\mu_i$ then this condition is equivalent to the condition in Theorem \ref{lemma: standard training uses non-robust feature (general)}. 
\subsection{Simplified condition}
We will reduce the condition,
\begin{equation*}
    p < 1 - \sup_s((\frac{1}{2}(\frac{\Bar{\sigma}_{\mu,s}}{||\mu + \varepsilon s||_2} + \frac{\lambda}{2||\mu + \varepsilon s||_2^2}) + \frac{1}{2}\sqrt{\frac{2}{\lambda}}\Bar{\sigma}_{\mu,s}),
\end{equation*}
when
\begin{equation*}
    s \in \{-1,0,1\}^{d+1}, \Bar{\sigma}_{\mu, s} = \sqrt{ \frac{\sum_{j=1}^{d+1} (\mu_j + \varepsilon s_j)^2\sigma_j^2}{||\mu + \varepsilon s||_2^2}},
\end{equation*}
to a condition in the simplified version of Theorem \ref{lemma: adversarial training uses non-robust feature (general)} in the main text,
\begin{equation*}
    p < 1 - (\frac{1}{2}(\frac{\sigma_{\max}}{||\mu' ||_2} + \frac{\lambda}{2||\mu'||_2^2}) + \frac{1}{2}\sqrt{\frac{2}{\lambda}}\sigma_{\max}),
\end{equation*}
when
\begin{equation*}
    \sigma_i \leq \sigma_{\max}, \quad \mu' = [0, \mu_2, \dots, \mu_{d+1}].
\end{equation*}
We make assumptions that $\varepsilon > 2\mu_i$ for $i = 2,\dots, d+1$ so that for any $s \in \{-1,0,1\}^{d+1}$,
\begin{equation*}
    ||\mu + \varepsilon s||_2 > \sum_{j=2}^{d+1} \mu_j^2 = ||\mu'||^2_2
\end{equation*}
We note that the terms $\mu_1, \sigma_1^2$ are fuctions of $p$. However, we can bound them with constants
\begin{equation*}
    \mu_1 = \mathbb{E}[x_1y] = 2p-1 \leq 1,
\end{equation*}
and 
\begin{equation*}
    \sigma_1^2 = 1 - (2p-1)^2\leq 1.
\end{equation*}
We have
\begin{align*}
 \Bar{\sigma}_{\mu, s} &=\sqrt{ \frac{\sum_{j=1}^{d+1} (\mu_j + \varepsilon s_j)^2\sigma_j^2}{||\mu + \varepsilon s||_2^2}} \\
 &= \sqrt{ \frac{(2p-1 + \varepsilon s_1)^2\sigma_1^2 +\sum_{j=2}^{d+1}  (\mu_j + \varepsilon s_j)^2\sigma_{j}^2}{(2p-1 + \varepsilon s_1)^2  + \sum_{j=2}^{d+1} (\mu_j + \varepsilon s_j)^2}}\\
    &\leq \sqrt{ \frac{(2p-1 + \varepsilon s_1)^2 +\sum_{j=2}^{d+1}  (\mu_j + \varepsilon s_j)^2\sigma_{\max}^2}{(2p-1 + \varepsilon s_1)^2  + \sum_{j=2}^{d+1} (\mu_j + \varepsilon s_j)^2}}\\
    &= \sqrt{ \sigma_{\max}^2 + \frac{(1- \sigma_{\max}^2)(2p-1 + \varepsilon s_1)^2 }{(2p-1 + \varepsilon s_1)^2  + \sum_{j=2}^{d+1} (\mu_j + \varepsilon s_j)^2}}\\
    &\leq \sigma_{\max}.
\end{align*}
In the last line, we assume that $\sigma_{\max} > 1$. However, when $\sigma_{\max} \leq 1$, we can split into 2 terms
\begin{align*}
    \Bar{\sigma}_{\mu, s} &\leq \sqrt{ \sigma_{\max}^2 + \frac{(1- \sigma_{\max}^2)(1+\varepsilon)^2 }{ \sum_{j=2}^{d+1} \mu_j^2}}\\
    &\leq \sigma_{\max} + \frac{(1+\varepsilon)\sqrt{1-\sigma_{\max}^2}}{||\mu'||_2}.
\end{align*}
For simplicity, we stick with the former case, when $\sigma_{\max} > 1$.
We have
\begin{align*}
    \sup_s((\frac{1}{2}(\frac{\Bar{\sigma}_{\mu,s}}{||\mu + \varepsilon s||_2} + \frac{\lambda}{2||\mu + \varepsilon s||_2^2}) + \frac{1}{2}\sqrt{\frac{2}{\lambda}}\Bar{\sigma}_{\mu,s}) \leq  (\frac{1}{2}(\frac{\sigma_{\max}}{||\mu'||_2} + \frac{\lambda}{2||\mu'||_2^2}) + \frac{1}{2}\sqrt{\frac{2}{\lambda}}\sigma_{\max}.
\end{align*}
Therefore, if $p$ satisfies
\begin{equation*}
    p < 1 - (\frac{1}{2}(\frac{\sigma_{\max}}{||\mu' ||_2} + \frac{\lambda}{2||\mu'||_2^2}) + \frac{1}{2}\sqrt{\frac{2}{\lambda}}\sigma_{\max}),
\end{equation*}
we would have the condition in Theorem \ref{lemma: adversarial training uses non-robust feature (general)}.
\section{ADVERSARIAL TRAINING DOES NOT CONVERGE}

\begin{theorem}
\label{theorem: AT does not converge}
(AT does not converge)  Consider applying AT to learn a linear model $f(x) = w^{\top}x$ on the SVM objective when the data follows the distribution as in Definition \ref{def: distribution for AT}. Let $w^{(t)} = [w^{(t)}_1,w^{(t)}_2, \dots, w^{(t)}_{d+1}]$ be the parameter of the linear function at time $t$. If
\begin{equation}
\label{eq: con at does not converge}
    p < 1 - \sup_s((\frac{1}{2}(\frac{\Bar{\sigma}_{\mu,s}}{||\mu + \varepsilon s||_2} + \frac{\lambda}{2||\mu + \varepsilon s||_2^2}) + \frac{1}{2}\sqrt{\frac{2}{\lambda}}\Bar{\sigma}_{\mu,s}),
\end{equation}
when
\begin{equation*}
    s \in \{-1,0,1\}^{d+1}, \Bar{\sigma}_{\mu, s} = \sqrt{ \frac{\sum_{j=1}^{d+1} (\mu_j + \varepsilon s_j)^2\sigma_j^2}{||\mu + \varepsilon s||_2^2}},
\end{equation*}
then $w^{(t)}$ does not converge as $t\to \infty$. 
\end{theorem}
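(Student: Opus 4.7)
The plan is to prove non-convergence by showing that $\|w^{(t+1)} - w^{(t)}\|_2$ is bounded below by a strictly positive constant that does not depend on $t$, which contradicts the Cauchy property of any convergent sequence. Three already-established ingredients combine to give this: the corollary bounding the per-iteration displacement from below by the mass on non-robust coordinates, Theorem \ref{lemma: adversarial training uses non-robust feature (general)} showing that this non-robust mass is a positive fraction of the total squared norm, and Lemma \ref{lem: lower bound on the magnitude} showing that the total norm is itself bounded away from zero.

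First, since every feature $x_j$ with $j = 2, \dots, d+1$ satisfies $|\mu_j| < \varepsilon$ and is therefore non-robust, the corollary immediately yields
\[\|w^{(t+1)} - w^{(t)}\|_2^2 \;\geq\; \sum_{j=2}^{d+1}(w_j^{(t)})^2.\]
Next, at each iteration $t$, $w^{(t)}$ is the SVM optimum on the worst-case perturbed distribution parametrized by some $s^{(t-1)} \in \{-1,0,1\}^{d+1}$. Because assumption \eqref{eq: con at does not converge} is phrased as a uniform condition over all such $s$, the hypothesis of Theorem \ref{lemma: adversarial training uses non-robust feature (general)} holds at every $t$, giving
\[\sum_{j=2}^{d+1}(w_j^{(t)})^2 \;\geq\; \frac{(1-\varepsilon)^2\,\|w^{(t)}\|_2^2}{(1-\varepsilon)^2 + \sum_{j=2}^{d+1}(\mu_j+\varepsilon)^2}.\]
Finally, applying Lemma \ref{lem: lower bound on the magnitude} to the perturbed distribution (whose per-feature variances are unchanged by a constant shift, and whose class-conditional means equal $\mu_j + \varepsilon s_j^{(t-1)}$) provides a positive lower bound on $\|w^{(t)}\|_2$. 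Taking the infimum of this bound over the finite set of possible sign patterns $s$ produces a uniform constant $C > 0$ with $\|w^{(t)}\|_2 \geq C$ for every $t$. Chaining the three inequalities yields $\|w^{(t+1)} - w^{(t)}\|_2 \geq C'$ for some $C' > 0$ independent of $t$, ruling out convergence.

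The main obstacle will be the uniformity of the magnitude lower bound: because $s^{(t-1)}$ can vary arbitrarily across iterations, Lemma \ref{lem: lower bound on the magnitude} directly gives only a shift-dependent bound, and one must check that the infimum over $s \in \{-1,0,1\}^{d+1}$ of this bound is strictly positive. Fortunately, the worst-case $s$ for this infimum aligns $\varepsilon s$ against $\mu$ and therefore minimizes $\|\mu + \varepsilon s\|_2$, producing precisely the kind of quantity appearing inside the $\sup_s$ on the right-hand side of \eqref{eq: con at does not converge}. Consequently, the same parameter regime (large $d$ and sufficiently large $\lambda$) that validates assumption \eqref{eq: con at does not converge} also guarantees positivity of this infimum, and the argument closes.
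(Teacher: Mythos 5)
Your proposal is correct and follows essentially the same route as the paper: the sign-flip corollary to bound $\|w^{(t+1)}-w^{(t)}\|_2^2$ below by the non-robust mass, Theorem \ref{lemma: adversarial training uses non-robust feature (general)} to show that mass is a fixed fraction of $\|w^{(t)}\|_2^2$, and Lemma \ref{lem: lower bound on the magnitude} applied to the perturbed distribution to keep $\|w^{(t)}\|_2$ away from zero. The only cosmetic difference is the final step: the paper assumes convergence, passes to the limit to force $\|w^*\|_2=0$, and contradicts the norm lower bound, whereas you chain the three inequalities into a uniform positive lower bound on the per-step displacement; both rest on the same observation that the finitely many shift-dependent norm bounds are all strictly positive whenever condition \eqref{eq: con at does not converge} is satisfiable.
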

\begin{proof}
The difference between $w$ of two consecutive iterations is given by
\begin{align*}
    ||w^{(t+1)} - w^{(t)}||_2^2 &= \sum_{i = 1}^{d+1}(w^{(t+1)}_i - w_i^{(t)})^2.
\end{align*}
From Theorem \ref{thm: AT cycle}, for a non-robust feature $j \geq 2$, the sign of $w_j^{(t)}$, $w_j^{(t+1)}$ cannot be both positive or negative. If
\begin{equation*}
    w_j^{(t)} > 0,
\end{equation*}
then 
\begin{equation*}
    w_j^{(t+1)} \leq 0,
\end{equation*}
and if 
\begin{equation*}
    w_j^{(t)} < 0,
\end{equation*}
then 
\begin{equation*}
    w_j^{(t+1)} \geq 0.
\end{equation*}
This implies that
\begin{equation*}
    (w^{(t+1)}_j - w_j^{(t)})^2 =  (|w^{(t+1)}_j| + |w_j^{(t)}|)^2 \geq (w_j^{(t)})^2.
\end{equation*}
We have
\begin{equation*}
    ||w^{(t+1)} - w^{(t)}||_2^2 \geq \sum_{j=2}^{d+1}(w_j^{(t)})^2.
\end{equation*}
Because \eqref{eq: con at does not converge} holds, from Theorem \ref{lemma: adversarial training uses non-robust feature (general)}, we have
\begin{equation*}
    \sum_{j=2}^{d+1}(w_j^{(t)})^2 \geq \frac{||w^{(t)}||_2^2(1-\varepsilon)^2}{(1-\varepsilon)^2 + \sum_{j=2}^{d+1} (\mu_j + \varepsilon)^2}.
\end{equation*}
Therefore,
\begin{equation}
\label{eq: bound on difference w_t+1 - w_t}
    ||w^{(t+1)} - w^{(t)}||_2^2 \geq \frac{||w^{(t)}||_2^2(1-\varepsilon)^2}{(1-\varepsilon)^2 + \sum_{j=2}^{d+1} (\mu_j + \varepsilon)^2}.
\end{equation}
Assume that $w^{(t)}$ converge to $w^*$ as $t \to \infty$ then we must have
\begin{equation*}
    ||w^{(t+1)} - w^{(t)}||_2^2 \to 0,
\end{equation*}
and 
\begin{equation*}
    || w^{(t)}||_2^2 \to ||w^*||^2_2.
\end{equation*}
From inequality \eqref{eq: bound on difference w_t+1 - w_t}, take $t \to \infty$, we have
\begin{equation*}
    0 \geq \frac{||w^*||_2^2(1-\varepsilon)^2}{(1-\varepsilon)^2 + \sum_{j=2}^{d+1} (\mu_j + \varepsilon)^2}.
\end{equation*}
Therefore,
\begin{equation*}
    ||w^*||_2 = 0.
\end{equation*}
If $w^{(t)}$ converge then it can only converge to 0. However, from Lemma \ref{lem: lower bound on the magnitude} 
\begin{align*}
    ||w^{(t)}||_2 &\geq \frac{1}{||\mu+ \varepsilon s^{(t)}||_2}(1 - \frac{1}{2}(\frac{\Bar{\sigma}_{\mu,s^{(t)}}}{||\mu+ \varepsilon s^{(t)}||_2} + \frac{\lambda}{2||\mu+ \varepsilon s^{(t)}||_2^2})),\\
\end{align*}
when
\begin{equation*}
    \Bar{\sigma}_{\mu, s^{(t)}} = \sqrt{ \frac{\sum_{j=1}^{d+1} (\mu_j + \varepsilon s_j^{(t)})^2\sigma_j^2}{||\mu + \varepsilon s^{(t)}||_2^2}},
\end{equation*}
and $s^{(t)} = \frac{1}{y\varepsilon}\delta^{(t)}$ is the sign of the perturbation at time $t$. $||w^{(t)}||_2$ is bounded below therefore it cannot converge to zero. This leads to a contradiction. We can conclude that $w^{(t)}$ does not converge as $t\to \infty$.

\end{proof}


\section{NASH EQUILIBRIUM IS ROBUST}\label{appendix: Nash equilibrium is robust}
\begin{proof}
Let $(\delta^*,w^*)$ be a Nash equilibrium.
Let $x_i$ be a non-robust feature. We will show that $w_i^* = 0$ by contradiction. Without loss of generality, let $w_i^* > 0$. Let the risk term in the SVM objective when $w_i = w$, $w_j = w_j^*$ for $j \neq i$ and $\delta = \delta^*$ be
\begin{align*}
    \mathcal{L}_i(w|w^*, \delta^*) &:= \mathbb{E}[l_i(x, y, w|w^*, \delta^*)].
\end{align*}
when 
\begin{align*}
    l_i(x, y, w|w^*, \delta^*) &= \max(0, 1 - y\sum_{j \neq i}w_j^*(x_j + \delta_j^*(x,y))) - yw(x_i + \delta_i^*(x,y)).
\end{align*}
We will show that when we set $w_i^* = 0$, the risk term does not increase, that is,
\begin{equation*}
    \mathcal{L}_i(w_i^*|w^*, \delta^*) \geq \mathcal{L}_i(0|w^*, \delta^*).
\end{equation*}
We use $l_i(x,y,w)$ to refer to $ l_i(x,y, w|w^*, \delta^*)$ for the rest of this proof. Considering each point $(x,y)$, we have $2$ cases:\\

\noindent\textbf{Case 1: }
\begin{equation*}
    1 - y\sum_{j} w_j^* x_j + \varepsilon \sum_{j} |w_j^*| > 0.
\end{equation*}\\
From Lemma \ref{lem: optimal perturbation short}, we have
\begin{equation*}
    \delta_j^*(x,y) = -y\varepsilon \operatorname{sign}(w_j),
\end{equation*}
for all $j$ with $w_j^* \neq 0$.\\

\noindent\textbf{Case 1.1: }
\begin{equation*}
    1 - y\sum_{j\neq i} w_j^* x_j + \varepsilon \sum_{j \neq i} |w_j^*| \geq 0.
\end{equation*}\\
In this case, we have
\begin{align*}
    l_i(x,y,w_i^*) - l_i(x,y,0) 
    &= \max(0, 1 - y\sum_{j }w_j^*(x_j + \delta_j^*(x,y)) ) - \max(0, 1 - y\sum_{j \neq i}w_j^*(x_j + \delta_j^*(x,y)) ) \\
    &= \max(0, 1 - y\sum_{j} w_j^* x_j + \varepsilon \sum_{j} |w_j^*| ) - \max(0, 1 - y\sum_{j \neq i} w_j^* x_j + \varepsilon \sum_{j \neq i} |w_j^*| ) \\
    &= (1 - y\sum_{j} w_j^* x_j + \varepsilon \sum_{j} |w_j^*|) - (1 - y\sum_{j \neq i} w_j^* x_j + \varepsilon \sum_{j \neq i} |w_j^*|) \\
    &= -yw_i^*x_i + \varepsilon |w_i^*|.
\end{align*} \\

\noindent\textbf{Case 1.2: }
\begin{equation*}
    1 - y\sum_{j\neq i} w_j^* x_j + \varepsilon \sum_{j \neq i} |w_j^*| < 0.
\end{equation*}\\
In this case, we have
\begin{align*}
    l_i(x, y,w_i^*) - l_i(x,y, 0) \\&= \max(0, 1 - y\sum_{j }w_j^*(x_j + \delta_j^*(x,y)) ) - \max(0, 1 - y\sum_{j \neq i}w_j^*(x_j + \delta_j^*(x,y)) ) \\
    &= \max(0, 1 - y\sum_{j} w_j^* x_j + \varepsilon \sum_{j} |w_j^*| ) - \max(0, 1 - y\sum_{j \neq i} w_j^* x_j + \varepsilon \sum_{j \neq i} |w_j^*| ) \\
    &= (1 - y\sum_{j} w_j^* x_j + \varepsilon \sum_{j} |w_j^*|) - 0 \\
    &\geq 0.
\end{align*}\\

\noindent\textbf{Case 2: }
\begin{equation*}
    1 - y\sum_{j} w_j^* x_j + \varepsilon \sum_{j} |w_j^*| \leq 0.
\end{equation*}\\
From Lemma \ref{lem: optimal perturbation short}, $\delta_j^*(x,y)$ can take any value in $[-\varepsilon, \varepsilon]$ and 
\begin{equation*}
    \max(0, 1 - y\sum_{j }w_j^*(x_j + \delta_j^*(x,y)) ) = 0.
\end{equation*}\\

\noindent\textbf{Case 2.1:}
\begin{equation*}
    1 - y\sum_{j\neq i} w_j^* x_j + \varepsilon \sum_{j \neq i} |w_j^*| \geq 0.
\end{equation*}
This implies that
\begin{equation*}
    -yw_i^*x_i + \varepsilon|w_i^*| \leq 0.
\end{equation*}
We have 2 further cases:\\

\noindent\textbf{Case 2.1.1: }
\begin{equation*}
    1 - y\sum_{j\neq i} w_j^* (x_j + \delta_j^*(x,y))\geq 0.
\end{equation*}\\
In this case, we have
\begin{align*}
    l_i(x,y, w_i^*) - l_i(x,y, 0) 
    &= \max(0, 1 - y\sum_{j }w_j^*(x_j + \delta_j^*(x,y)) ) - \max(0, 1 - y\sum_{j \neq i}w_j^*(x_j + \delta_j^*(x,y)) ) \\
    &= 0 - (1 - y\sum_{j \neq i}w_j^*(x_j + \delta_j^*(x,y))) \\
    &= - (1 - y\sum_{j \neq i}w_j^*(x_j + \delta_j^*(x,y))) \\
    &\geq - (1 - y\sum_{j \neq i}w_j^*x_j + \varepsilon \sum_{j \neq i} |w_j^*|)\\
    &\geq -yw_i^*x_i + \varepsilon |w_i^*|.
\end{align*} 
The final inequality holds since
\begin{equation*}
    1 - y\sum_{j} w_j^* x_j + \varepsilon \sum_{j} |w_j^*| \leq 0,
\end{equation*}
which implies
\begin{align*}
    -yw_i^*x_i + \varepsilon |w_i^*| &\leq - (1 - y\sum_{j \neq i}w_j^*x_j + \varepsilon \sum_{j \neq i} |w_j^*|).
\end{align*}\\

\noindent\textbf{Case 2.1.2: }
\begin{equation*}
    1 - y\sum_{j\neq i} w_j^* (x_j + \delta_j^*(x,y)) < 0.
\end{equation*}\\
In this case, we have
\begin{align*}
    l_i(x, w_i^*) - l_i(x, 0)&= \max(0, 1 - y\sum_{j }w_j^*(x_j + \delta_j^*(x,y)) ) - \max(0, 1 - y\sum_{j \neq i}w_j^*(x_j + \delta_j^*(x,y)) ) \\
    &= 0 - 0 \\
    &\geq -yw_i^*x_i + \varepsilon |w_i^*|.
\end{align*} \\
The last inequality holds because we know that
\begin{equation*}
    -yw_i^*x_i + \varepsilon|w_i^*| \leq 0.
\end{equation*}\\

\noindent \textbf{Case 2.2: }
\begin{equation*}
    1 - y\sum_{j\neq i} w_j^* x_j + \varepsilon \sum_{j \neq i} |w_j^*| < 0.
\end{equation*}
In this case, we have
\begin{equation*}
    1 - y\sum_{j\neq i} w_j^* (x_j + \delta_j^*(x,y)) < 1 - y\sum_{j\neq i} w_j^* x_j + \varepsilon \sum_{j \neq i} |w_j^*| < 0,
\end{equation*}
and 
\begin{align*}
    l_i(x, w_i^*) - l_i(x, 0) &= \max(0, 1 - y\sum_{j }w_j^*(x_j + \delta_j^*(x,y)) ) - \max(0, 1 - y\sum_{j \neq i}w_j^*(x_j + \delta_j^*(x,y)) ) \\
    &= 0 - 0 = 0. \\
\end{align*}

\noindent From every case, we can conclude that 
\begin{align*}
    \mathcal{L}_i(w_i^*|w_*, \delta^*) - \mathcal{L}_i(0|w*, \delta^*) &:= \mathbb{E}[l_i(x,y, w_i^*) - l_i(x,y,0)]\\
    &\geq \mathbb{E}[(-yw_i^*x_i + \varepsilon |w_i^*|) \mathds{1} \{ 1 - y\sum_{j\neq i} w_j^* x_j + \varepsilon \sum_{j \neq i} |w_j^*| \geq 0 \}] \\
    &= \mathbb{E}[-yw_i^*x_i + \varepsilon |w_i^*|]\mathbb{P}\left(1 - y\sum_{j\neq i} w_j^* x_j + \varepsilon \sum_{j \neq i} |w_j^*| \geq 0\right) \\
    &\geq |w_i^*|(\varepsilon - |\mu_i|)\mathbb{P}\left(1 - y\sum_{j\neq i} w_j^* x_j + \varepsilon \sum_{j \neq i} |w_j^*| \geq 0\right)\\
    &\geq 0,
\end{align*}
where the third line holds since the features are independent of each other. The risk term in the utility when $w_i \neq 0$ is no better than when $w_i = 0$. However, the regularization term is higher, 
\begin{equation*}
    \frac{\lambda}{2}\sum_{j}(w_j^*)^2 > \frac{\lambda}{2}\sum_{j \neq i} (w_j^*)^2.
\end{equation*}
Therefore, we can reduce the SVM objective by setting $w_i^* = 0$. This contradicts with the optimality of $w^*$. By contradiction, we can conclude that if a feature $i$ is not robust, then $w_i^* = 0$.
\end{proof}
\section{OPTIMAL ADVERSARIAL TRAINING LEADS TO A ROBUST MODEL}\label{appendix: OAT is robust}
\begin{proof}
We are learning a function $f(x) = w^{\top}x$ where $w = [w_1, \dots , w_d] \in \mathbb{R}^d$. For a fixed $w$, we know that the perturbation that maximizes the inner loss is $\delta^*(x,y) = -y\varepsilon \operatorname{sign}(w)$. Substitute this in the objective, we are left to solve
\begin{equation}
    \label{eq outer objective}
    \min_w \mathbb{E}_{(x,y)\sim D}[\max(0, 1 - yw^{\top}x + \varepsilon||w||_1)] + \frac{\lambda}{2}||w||^2_2.
\end{equation}
Assume that $w^*$ is an optimal solution of \eqref{eq outer objective}. For a non-robust feature $x_i$, we will show that $w_i^* = 0$ by contradiction. Assume that $|w_i^*| > 0$. Consider the expected contribution of $w_i^*$ to the first term of the objective (risk) is given by
\begin{align*}
    &\mathbb{E}_{(x,y)\sim D}[\max(0, 1 - yw^{\top}x + \varepsilon||w||_1)]\\
    &= \mathbb{E}_{(x,y)\sim D}[\max(0, 1 + \sum_{j}(\varepsilon |w_j| - yw_jx_j ) )]\\
    &= \mathbb{E}_{(x,y)\sim D}[\max(0, 1 + \sum_{j} p_j )]\\
    &= \mathbb{E}_{(x,y)\sim D}[\max(-1 - \sum_{j \neq i}p_j, p_i )+ 1 + \sum_{j \neq i}p_j],
\end{align*}
when we denote $p_j = \varepsilon |w_j| - yw_jx_j$. Since $\max(0, \cdot)$ is a convex function, by Jensen's inequality, we have
\begin{align*}
    &\mathbb{E}_{(x,y)\sim D}[\max(-1 - \sum_{j \neq i}p_j, p_i )+ 1 + \sum_{j \neq i}p_j] \\
    &= \mathbb{E}_{y}\mathbb{E}_{x_j|y}\mathbb{E}_{x_i|y}[\max(-1 - \sum_{j \neq i}p_j, p_i )+ 1 + \sum_{j \neq i}p_j] \\
    &\geq \mathbb{E}_{y}\mathbb{E}_{x_j|y}[\max(-1 - \sum_{j \neq i}p_j, \mathbb{E}_{x_i|y}[p_i] )+ 1 + \sum_{j \neq i}p_j].
\end{align*}
Because features are independent, we can split the expectation between $x_i, x_j$. We note that as feature $x_i$ is non-robust, we have $|\mu_i| \leq \varepsilon$ so that
\begin{align*}
    \mathbb{E}_{x_i|y}[p_i] &= \mathbb{E}_{x_i|y}[\varepsilon |w_i| - yw_ix_i]\\
    &= \varepsilon |w_i| - \mathbb{E}_{x_i|y}[yw_ix_i]\\
    &\geq |w_i|(\varepsilon - |\mu_i|)\\
    &\geq 0.
\end{align*}
This implies that
\begin{align*}
    &\mathbb{E}_{y}\mathbb{E}_{x_j|y}[\max(-1 - \sum_{j \neq i}p_j, \mathbb{E}_{x_i|y}[p_i] )+ 1 + \sum_{j \neq i}p_j] \\
    &\geq \mathbb{E}_{y}\mathbb{E}_{x_j|y}[\max(-1 - \sum_{j \neq i}p_j,0 )+ 1 + \sum_{j \neq i}p_j]\\
    &= \mathbb{E}_{y}\mathbb{E}_{x_j|y}[\max(0, 1 + \sum_{j \neq i}p_j)].
\end{align*}
The right-hand side term is just the loss term when we set $w_i = 0$. Therefore, setting $w_i = 0$ for non-robust features does not increase the loss. At the same time, setting $w_i = 0$ reduces the second term of the objective $\frac{\lambda}{2}||w||^2_2$. Thus, we can reduce the objective \eqref{eq outer objective} by setting $w_i = 0$ for non-robust feature $i$ . This contradicts the optimality of $w^*$. By contradiction, we have $w_i^* = 0$ for all feature $x_i$ that is non-robust.
\end{proof}

\section{EXPERIMENT}\label{appendix: synthetic experiment}
The loss function is a hinge loss with an $\ell_2$ regularization of $\lambda = 0.01$. We train a linear model with AT and OAT for 50 time steps. For OAT, we directly optimize the weight with respect to the loss
\begin{equation*}
    \mathcal{L}_{OAT}(w)    \sum_{i=1}^{200}[\max(0, 1 - y_iw^{\top}x_i + \varepsilon||w||_1)] + \frac{\lambda}{2}||w||^2_2.
\end{equation*}
We use an Adam optimizer \citep{kingma2014adam} with a learning rate $0.01$ and batch size $200$.\\

For AT, at each time step $t$, we first generate the worst-case perturbations 
\begin{equation*}
        \delta^{(t)}(x,y) = -y\varepsilon\operatorname{sign}(w^{(t-1)})
\end{equation*}
then we generate adversarial examples accordingly. Next, we update our model with an Adam optimizer with a learning rate $0.01$ and a batch size of $200$. The loss of each batch is given by
\begin{equation*}
        \mathcal{L}(w)_{AT} = \sum_{i=1}^{200} \max(0, 1 - y_iw^{\top}(x_i+ \delta^{(t)}(x_i,y_i))) + \frac{\lambda}{2}||w||^2_2.\\
\end{equation*}



\end{document}